\documentclass[10pt,conference,letterpaper]{IEEEtran}
\IEEEoverridecommandlockouts

\usepackage{times}
\usepackage{latexsym}

\usepackage[T1]{fontenc}
\usepackage[utf8]{inputenc}

\usepackage{microtype}
\usepackage{inconsolata}

\usepackage{graphicx}

\usepackage{mathtools}
\usepackage{amssymb}
\usepackage{amsthm}

\usepackage{siunitx}
\usepackage{threeparttable}
\usepackage{booktabs}
\usepackage{multirow}
\usepackage{tabularx}
\usepackage{array}
\usepackage{xcolor}
\usepackage{colortbl}
\usepackage{placeins}

\usepackage{float}
\usepackage{afterpage}
\usepackage{textcomp}
\usepackage{gensymb}
\usepackage{enumitem}
\usepackage{subcaption}

\usepackage{algorithm}
\usepackage[noend]{algpseudocode}

\newtheoremstyle{boldstyle}
  {\topsep}
  {\topsep}
  {\itshape}
  {}
  {\bfseries}
  {.}
  {.5em}
  {}

\definecolor{deepblue}{HTML}{355C7D}     
\definecolor{pinkaccent}{HTML}{F67280}   
\definecolor{darkgrayx}{HTML}{4C4C4C}    
\definecolor{steelblue}{HTML}{6C8EAD}    
\definecolor{lightsteel}{HTML}{A7C5DD}   

\colorlet{titlecol}{deepblue}
\colorlet{boxborder}{deepblue!75!black}
\colorlet{arrowcol}{deepblue}
\colorlet{accentcol}{pinkaccent!85!black}
\colorlet{codegray}{darkgrayx}

\colorlet{steponebg}{lightsteel!35}
\colorlet{steptwobg}{steelblue!18}
\colorlet{stepthreebg}{pinkaccent!12}
\colorlet{stepfourbg}{darkgrayx!10}

\usepackage{tikz}
\usepackage{xcolor}
\usepackage{amsmath}
\usepackage{enumitem}
\usetikzlibrary{positioning,fit,calc,backgrounds,arrows.meta,shapes.geometric}

\theoremstyle{boldstyle}
\newtheorem{theorem}{Theorem}
\newtheorem{lemma}[theorem]{Lemma}

\newtheorem{definition}[theorem]{Definition}

\newtheorem{assumption}{Assumption}
\newtheorem{corollary}{Corollary}

\usepackage{url}    
\newcolumntype{L}[1]{>{\raggedright\arraybackslash}m{#1}}
\newcolumntype{C}[1]{>{\centering\arraybackslash}m{#1}}
\newcolumntype{R}[1]{>{\raggedleft\arraybackslash}m{#1}}

\algrenewcommand\algorithmicrequire{\textbf{Require:}}
\algrenewcommand\algorithmicensure{\textbf{Ensure:}}
\usepackage{hyperref}
\hypersetup{
  colorlinks=true,
  linkcolor=black,
  citecolor=blue,
  urlcolor=blue,
  pdfborder={0 0 0}
}

\def\BibTeX{{\rm B\kern-.05em{\sc i\kern-.025em b}\kern-.08em
  T\kern-.1667em\lower.7ex\hbox{E}\kern-.125emX}}

\title{
Converse and Collision-Based Achievability for Node Localization with Hybrid Distance-Spectral Graph Positional Encodings
}

\author{%
{Zimo Yan\textsuperscript{\rm 1},
Yifan Li\textsuperscript{\rm 1},
Hao Li\textsuperscript{\rm 1},
Zheng Xie\textsuperscript{\rm 1}\thanks{Corresponding author: Zheng Xie (xiezheng81@nudt.edu.cn).},
Chang Liu\textsuperscript{\rm 1},
Zheming Tu\textsuperscript{\rm 1},
Yuan Wang\textsuperscript{\rm 2}}%
\vspace{1.6mm}\\
\fontsize{10}{10}\selectfont\itshape
\textsuperscript{\rm 1}National University of Defense Technology, Changsha, China\\
\textsuperscript{\rm 2}Wuhan University, Wuhan, China\\
\{yanzimo20, liyifan25, lihao22, xiezheng81, liuchang\_\}@nudt.edu.cn,
tzm\_nudt@163.com\\
2024282090042@whu.edu.cn
}

\fontsize{9}{9}\selectfont\ttfamily\upshape
\fontsize{10}{10}\selectfont\rmfamily\itshape

\fontsize{9}{9}\selectfont\ttfamily\upshape

\begin{document}
\maketitle

\begin{abstract}
Graph positional encodings are widely used in graph neural
networks and graph Transformers, yet it remains unclear when the code
itself can identify nodes.  We study a hybrid distance-spectral encoding
that combines anchor-distance profiles with quantized low-frequency
Laplacian-energy coordinates.  Treating the encoding as an observation
map yields a simplex-refined converse, an exact collision factorization
\(\kappa_H=\kappa_D\kappa_{S|D}\), and the collision information
\(I_H=-\log\kappa_D-\log\kappa_{S|D}\).  On random regular graphs, the
criterion is made explicit through a bounded-correlation Gaussian-wave
surrogate; for actual Laplacian-energy coordinates, we give the
distance-conditioned spectral collision condition sufficient for
conditional actual-coordinate achievability.  Experiments show that
\(I_H/\log n\) calibrates localization success, and PE-only structural
task probes on Universal Dependencies trees show that hybrid encodings
better recover syntactic-tree geometry than distance-only or
spectral-only baselines.
\end{abstract}

\section{Introduction}

Graph neural networks and graph Transformers are widely used for relational
data, including graph-based NLP and symbolic reasoning settings where nodes
represent tokens, entities, or reasoning states. Their effectiveness often
depends on whether such nodes can be distinguished by positional or structural
information in the underlying graph. This has made graph positional encoding a
central topic in modern graph learning
\cite{dwivedi2023benchmark,ying2021graphormer},
with connections to dependency-based NLP and text-based relational reasoning
\cite{marcheggiani2017encoding,sinha2019clutrr,demarneffe2021universal}.

Existing positional encodings can be roughly grouped into three classes.
Spectral methods use Laplacian eigenvectors, eigenvalues, or invariant spectral
features to encode global graph geometry
\cite{lim2023signnet}. Distance-based methods describe nodes
through shortest-path distances, anchor profiles, or related structural
descriptors
\cite{li2020distance,dwivedi2022learnable}. Learned and hybrid encoders further
combine multiple positional signals inside graph Transformer or GNN
architectures
\cite{rampasek2022gps,canturk2024gpse}.

However, downstream performance does not directly reveal how much
identifiability is supplied by the positional code itself. Model architecture,
optimization, node attributes, edge labels, and task-specific correlations may
all obscure the intrinsic resolution of an encoding. Spectral encodings also
face sign, basis, and stability issues, while distance encodings may collide
when different nodes share the same anchor-distance profile
\cite{huang2024stability}. Thus, a graph-dependent theory is needed to explain
when positional encodings can localize nodes, when ambiguity is
unavoidable, and which spectral collision bounds suffice for
actual-coordinate achievability.

\textbf{Motivation.}
This raises the central question: \textit{when does a hybrid graph positional
encoding contain enough information to localize nodes?}

We answer this question at the level of the encoding itself. The hybrid code
is treated as an observation map whose fibers are exactly the sets of nodes
that remain indistinguishable from positional information alone. This view separates intrinsic identifiability from downstream
architectures and leads to a unified theory of converse bounds,
collision-based achievability, and spectral collision bounds conditioned on distance collisions. Figure~\ref{fig:framework-overview} summarizes the hybrid code construction and the resulting two-sided localization framework.

\textbf{Contributions.}
The main contributions are as follows:
\begin{enumerate}
    \item We formulate node localization with hybrid distance-spectral positional encodings as an encoding-level observation-map problem. This viewpoint separates intrinsic positional identifiability from downstream architectures, attributes, labels, and optimization effects.

    \item The resulting theory provides a two-sided localization criterion. A simplex-refined converse rules out localization below the $\log n$ code-budget scale, while $\kappa_H=\kappa_D\kappa_{S|D}$ yields a collision-information condition for vanishing error. On random regular graphs, Gaussian-wave anti-concentration gives a closed surrogate result and a sufficient condition for conditional achievability with actual Laplacian coordinates.

    \item Synthetic diagnostics and PE-only probes on Universal Dependencies support the theory. The quantity $I_H/\log n$ tracks localization success across graph families, while hybrid encodings recover dependency-tree geometry better than distance-only or spectral-only baselines; derangement controls confirm the importance of node-code alignment.
\end{enumerate}
\begin{figure*}[t]
    \centering
    \includegraphics[width=\textwidth]{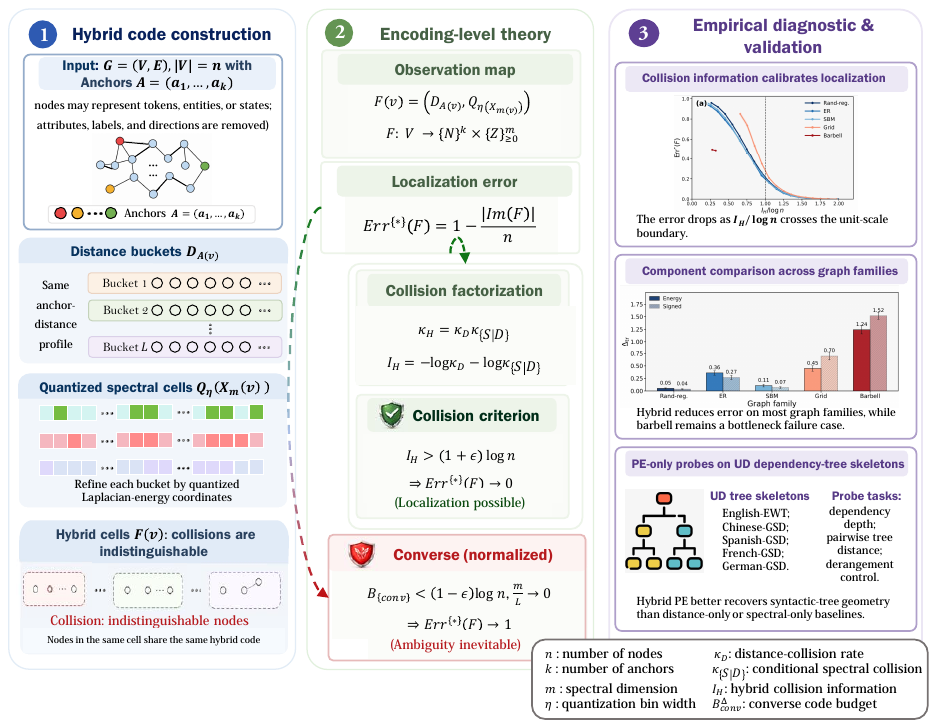}
    \caption{Overview of the hybrid positional encoding and the two-sided localization framework.}
    \label{fig:framework-overview}
\end{figure*}

\section{Literature Review}
\label{SE2}

Graph positional and structural encodings augment graph neural networks
and graph Transformers with information that cannot always be recovered
from local message passing alone. This motivation is closely related to
the expressivity limitations of message-passing GNNs under the
Weisfeiler-Leman framework
\cite{xu2019powerful,morris2019weisfeiler,morris2023wlstory}.
Such encodings are also important in graph-based NLP and relational reasoning,
where vertices may represent tokens, entities, dependency nodes, or reasoning
states \cite{marcheggiani2017encoding,sinha2019clutrr,demarneffe2021universal}.
Existing methods can be broadly grouped into spectral encodings,
distance- and structure-aware encodings, and learned or hybrid
positional/structural encoders, while related theory studies their effects on
expressivity, stability, generalization, and node distinguishability.

\subsection{Spectral Positional Encodings}

Spectral positional encodings use eigenvalues, eigenvectors, or functions of
graph operators to represent global geometry, with foundations in Laplacian
eigenmaps and spectral graph representation \cite{belkin2003laplacian}.
Recent graph-learning models incorporate Laplacian features into graph
Transformers and GNNs, while diffusion-wavelet embeddings and random feature
propagation provide related spectral or propagation-based structural signals
\cite{kreuzer2021rethinking,dwivedi2022learnable,donnat2018graphwave,
eliasof2023rfp}. Raw Laplacian eigenvectors suffer from sign ambiguity and
basis rotations in repeated eigenspaces; SignNet, BasisNet, PEG, Laplacian
Canonization, and stable spectral encodings address these issues through
invariance, equivariance, or perturbation-stability mechanisms
\cite{lim2023signnet,wang2022peg,ma2023laplacian,huang2024stability}.
Beyond vertex eigenvectors, CycleNet uses projector-based Hodge-Laplacian
cycle-space representations for basis-invariant structural encoding
\cite{yan2024cyclenet}. These methods provide rich global signals, but they do
not by themselves characterize when finite-dimensional and quantized spectral
codes uniquely identify vertices.

\subsection{Distance- and Structure-aware Encodings}

Distance-based encodings represent nodes through relative topology, including
shortest-path distances, anchor-distance profiles, random-walk statistics, and
structural attention biases. Position-aware GNNs and Distance Encoding use
anchor or distance features to improve node representations and expressive
power \cite{you2019position,li2020distance}, while Graphormer, GRPE, and
rewiring methods inject structural information into attention or graph
connectivity \cite{ying2021graphormer,park2022grpe,bruelgabrielsson2023rewiring}.
Other methods, such as struc2vec and the Structure-Aware Transformer, model
structural similarity or rooted subgraph structure without requiring nodes to
be close in the graph \cite{ribeiro2017struc2vec,chen2022sat}. Although these
encodings are interpretable, their resolution depends on anchor placement,
graph symmetry, receptive-field design, and the number of distinct structural
profiles induced on the graph.

\subsection{Learned and Hybrid Positional/Structural Encoders}

Learned and hybrid encoders combine multiple positional signals inside
trainable graph architectures. GraphGPS separates local message passing,
global attention, and positional or structural encodings, GRIT incorporates
graph inductive biases into Transformers, and GPSE learns transferable
positional and structural representations
\cite{rampasek2022gps,ma2023grit,canturk2024gpse}. Related graph
Transformers, including TokenGT, Exphormer, and direction-aware Transformers,
represent nodes and edges as tokens, use sparse expander-based attention, or
employ magnetic-Laplacian and directional random-walk encodings
\cite{kim2022tokengt,shirzad2023exphormer,geisler2023directed}. These
architectures are empirically effective, but their performance reflects the
joint effects of the encoding, model capacity, attention mechanism,
optimization objective, and dataset. Thus, downstream accuracy alone does not
show whether the positional code itself contains enough information to identify
nodes.

\subsection{Positional Identifiability and Graph Localization}

Identifying vertices from distances has a long history in graph theory through
resolving sets, landmarks, and metric dimension
\cite{slater1975leaves,harary1976metric,khuller1996landmarks,
chartrand2000resolvability}. A resolving set uniquely identifies every vertex
by its vector of distances to selected reference vertices, and related work
studies metric dimension in graph families and random graph models, including
Erd\H{o}s-R\'enyi graphs \cite{bollobas2013metric}. Graph reconstruction from
distance queries provides a nearby perspective, with results for random
regular graphs \cite{mathieu2021simple}. In graph learning, Weisfeiler-Leman
analyses and recent studies of graph Transformers clarify how structural
distinctions, positional encodings, and generalization interact
\cite{xu2019powerful,morris2019weisfeiler,morris2023wlstory,
keriven2023role,black2024comparing,li2024generalization}. These works motivate
an encoding-level analysis, but they do not directly characterize
finite-resolution hybrid codes that combine anchor distances with quantized
spectral coordinates.

\subsection{Relation to Closest Prior Work}

This work is closest to theoretical analyses of hybrid
distance-spectral positional encodings and to graph Transformer
positional-encoding methods. A recent information-theoretic study formulated
node localization through observation maps and established one-sided
image-size converse bounds for positional ambiguity
\cite{yan2026informationtheoreticlimitsnodelocalization}. Building on this identifiability viewpoint, we
address the complementary positive question of when distance and spectral code
collisions vanish fast enough to enable localization. Compared with that line of work, the new contribution is the
collision-achievability framework: we turn the factorization
\(\kappa_H=\kappa_D\kappa_{S\mid D}\) into a deterministic
criterion, the information measure \(I_H\), a closed
Gaussian-wave achievability theorem for random regular
graphs, and a a distance-conditioned spectral collision condition
for actual Laplacian-energy coordinates. Unlike architecture-level studies, we characterize the information in
the positional code itself rather than the performance of a particular
downstream model.  The UD experiments are therefore PE-only structural
task probes: they connect intrinsic localization to syntactic geometry,
but are not full dependency parsing or language-model fine-tuning. As summarized in Fig.~\ref{fig:framework-overview}, our analysis isolates the positional code itself and connects its construction to both converse and collision-based achievability results.

\section{Problem Formulation}
\label{SE3}

Let \(G=(V,E)\) be a finite connected simple undirected graph with
\(n=|V|\), and let \(d_G\) denote shortest-path distance. We study an
encoding-level localization problem: only graph structure is observed,
while node attributes, edge labels, edge directions, lexical information,
and task-specific features are removed. Therefore, vertices assigned the
same positional code are indistinguishable to any decoder using this code
alone.

Given an ordered anchor list \(\mathcal A=(a_1,\ldots,a_k)\), the
anchor-distance profile of \(v\in V\) is
\begin{equation}
D_{\mathcal A}(v)
=
\bigl(d_G(v,a_1),\ldots,d_G(v,a_k)\bigr).
\end{equation}
The realized profiles form
\begin{equation}
\mathcal T_{G,\mathcal A}
=
\{D_{\mathcal A}(v):v\in V\},
\quad
D(G,\mathcal A)=|\mathcal T_{G,\mathcal A}|,
\end{equation}
and each \(t\in\mathcal T_{G,\mathcal A}\) induces a distance bucket
\begin{equation}
B_t=\{v\in V:D_{\mathcal A}(v)=t\}.
\end{equation}

To refine these buckets, we use low-frequency Laplacian energy. Let
\(\phi_1,\ldots,\phi_n\in\mathbb R^V\) be an orthonormal eigenbasis of
the normalized graph Laplacian, ordered by nondecreasing eigenvalue. For
\(m\in\{1,\ldots,n-1\}\), set
\begin{equation}
X_m(v)
=
n\bigl(\phi_2(v)^2,\ldots,\phi_{m+1}(v)^2\bigr)
\in\mathbb R_{\ge 0}^{m}.
\end{equation}
The squared coordinates remove sign ambiguity. We use a fixed eigenbasis
convention; for repeated eigenspaces, a blockwise projector-energy version
can be used without changing the counting arguments.

For \(\eta>0\), define
\begin{equation}
Q_\eta(x_1,\ldots,x_m)
=
\bigl(\lfloor x_1/\eta\rfloor,\ldots,\lfloor x_m/\eta\rfloor\bigr),
\end{equation}
\begin{equation}
Z_{m,\eta}(v)=Q_\eta(X_m(v)).
\end{equation}
The hybrid positional observation map is
\begin{equation}
F^{(m,\eta)}_{G,\mathcal A}(v)
=
\bigl(D_{\mathcal A}(v),Z_{m,\eta}(v)\bigr).
\end{equation}

A source vertex \(v_\star\) is sampled uniformly from \(V\), and the
decoder observes only \(F^{(m,\eta)}_{G,\mathcal A}(v_\star)\). For any
positional map \(F:V\to\mathcal Y\), the optimal conditional localization
error is
\begin{equation}
\operatorname{Err}^{\star}(F)
=
\inf_{s:\operatorname{Im}(F)\to V}
\mathbb P\bigl(s(F(v_\star))\neq v_\star \mid G,\mathcal A\bigr).
\end{equation}
We ask when \(F^{(m,\eta)}_{G,\mathcal A}\) separates most vertices, and
when its preimages remain too large for reliable localization.

\section{Theoretical Analysis}
\label{SE5}

The analysis rests on two complementary views of the same observation
map. The converse side counts how many hybrid codes can be produced; the
achievability side controls how often two distinct vertices collide under
the same code. We use the observation-map identity
\begin{equation}
\operatorname{Err}^*(F)
=
1-\frac{|\operatorname{Im}(F)|}{|V|},
\label{eq:image-identity}
\end{equation}
valid for any finite \(V\), map \(F:V\to\mathcal Y\), and uniform source
vertex. Thus, small image size implies unavoidable ambiguity, while small
pairwise collision probability implies successful localization. All formal
proofs are deferred to Appendix~\ref{app:proofs}.

\subsection{A Normalized Converse}
\label{subsec:normalized-refined-converse}

The converse is a counting argument. The distance component contributes
at most \(D(G,\mathcal A)\) profiles, while the spectral component is
controlled by the total Laplacian-energy mass
\begin{equation}
\sum_{v\in V}\sum_{j=1}^{m} X_m(v)_j = mn ,
\label{eq:spectral-total-mass}
\end{equation}
which follows from orthonormality. Let
\(S_m(v):=\sum_{j=1}^{m}X_m(v)_j\). After trimming vertices
with \(S_m(v)>L\), the remaining spectral vectors lie in the
nonnegative simplex \(\{x\in\mathbb R_{\ge 0}^{m}:\sum_j x_j\le L\}\).
Their quantized codes therefore occupy only finitely many
distance-spectral cells. This gives the following image-size bound.

\begin{theorem}[Simplex-refined normalized converse]
\label{thm:normalized-refined-converse}
For any finite connected graph \(G=(V,E)\), anchor set
\(\mathcal A\subseteq V\), spectral dimension \(m\in\{1,\dots,n-1\}\),
quantization level \(\eta>0\), and trimming threshold \(L>0\),
\begin{equation}
\bigl|\operatorname{Im}(F_{G,\mathcal A}^{(m,\eta)})\bigr|
\le
D(G,\mathcal A)
\binom{\lfloor L/\eta\rfloor+m}{m}
+
\frac{mn}{L}.
\label{eq:normalized-converse-image}
\end{equation}
Consequently,
\begin{equation}
\operatorname{Err}^*
\bigl(F_{G,\mathcal A}^{(m,\eta)}\bigr)
\ge
1
-
\frac{D(G,\mathcal A)}{n}
\binom{\lfloor L/\eta\rfloor+m}{m}
-
\frac{m}{L}.
\label{eq:normalized-converse-error}
\end{equation}
\end{theorem}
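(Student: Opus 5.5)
The plan is to split \(V\) into a \emph{light} set \(V_{\le L}=\{v:S_m(v)\le L\}\) and a \emph{heavy} set \(V_{>L}=\{v:S_m(v)>L\}\), and bound the image of \(F=F^{(m,\eta)}_{G,\mathcal A}\) on each part separately. We use the subadditivity
\[
|\operatorname{Im}(F)|\le |F(V_{\le L})|+|F(V_{>L})|\le |F(V_{\le L})|+|V_{>L}|.
\]
The heavy part is handled by Markov's inequality applied to the total-mass identity \eqref{eq:spectral-total-mass}. Since each \(S_m(v)\ge 0\) and \(\sum_v S_m(v)=mn\), we get \(L\,|V_{>L}|<\sum_{v\in V_{>L}}S_m(v)\le mn\), hence \(|V_{>L}|\le mn/L\). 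To justify \eqref{eq:spectral-total-mass} itself, observe that each \(\phi_{j}\) is a unit vector, so \(\sum_v n\phi_j(v)^2=n\); summing over the \(m\) indices \(j=2,\dots,m+1\) gives \(mn\).

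For the light part, we factor the code as (distance profile, quantized spectral cell). The first coordinate takes at most \(D(G,\mathcal A)\) values. For the second, if \(x\in\mathbb R^m_{\ge0}\) with \(\sum_j x_j\le L\), set \(z_j=\lfloor x_j/\eta\rfloor\in\mathbb Z_{\ge0}\). Then \(\sum_j z_j\le \sum_j x_j/\eta\le L/\eta\), and since the sum is an integer, \(\sum_j z_j\le \lfloor L/\eta\rfloor=:K\). The number of nonnegative integer vectors in \(\mathbb Z^m\) with coordinate sum at most \(K\) is \(\binom{K+m}{m}\), by stars and bars with a slack variable. Therefore \(|F(V_{\le L})|\le D(G,\mathcal A)\binom{K+m}{m}\), which together with the heavy bound gives \eqref{eq:normalized-converse-image}.

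The error bound \eqref{eq:normalized-converse-error} then follows by substituting into the observation-map identity \eqref{eq:image-identity} and dividing by \(n\). If that identity were not already available, I would justify it quickly: the optimal decoder succeeds on exactly one vertex per fiber, so the success probability equals \(|\operatorname{Im}F|/n\).

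The only step requiring care is the integrality rounding \(\sum_j z_j\le\lfloor L/\eta\rfloor\) and the exact stars-and-bars count. Everything else is routine, and the argument needs no assumption on the eigenbasis convention beyond orthonormality.
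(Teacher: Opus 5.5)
Your proposal is correct and follows the paper's proof essentially step for step: the same split into \(\{v:S_m(v)\le L\}\) and its complement, the same Markov-type bound \(|V_{>L}|\le mn/L\) from the total mass \(mn\), the same integer-rounding plus stars-and-bars count \(\binom{\lfloor L/\eta\rfloor+m}{m}\) on the light part, and the same appeal to the image-size identity. The only nit is that \(L|V_{>L}|<\sum_{v\in V_{>L}}S_m(v)\) should use \(\le\) so that it also covers \(V_{>L}=\emptyset\); this does not affect the conclusion.
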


This bound yields a simplex-refined subcritical principle. If, along a
graph sequence,
\begin{equation}
\log D(G_n,\mathcal A_n)
+
\log\binom{\lfloor L_n/\eta_n\rfloor+m_n}{m_n}
\le
(1-\varepsilon)\log n
\label{eq:subcritical-budget-general}
\end{equation}
for some \(\varepsilon>0\), and \(m_n/L_n\to0\), then
\(\operatorname{Err}^*(F_{G_n,\mathcal A_n}^{(m_n,\eta_n)})\to1\).
For random \(r\)-regular graphs with random anchors, the standard diameter
bound gives \(D(G_n,\mathcal A_n)\le (C_r\log n+1)^{k_n}\) with high
probability, yielding the corresponding simplex-refined random-regular
subcritical condition in Appendix~\ref{app:proofs}. The key message is
that localization is impossible when the joint anchor, spectral-dimension,
and quantization budget remains below the \(\log n\) scale.

\subsection{Collision-Based Achievability}
\label{subsec:collision-based-achievability}

We now turn to pairwise collisions. Let \(U,V\) be sampled uniformly from
\(V\) without replacement, and for any map \(F:V\to\mathcal Y\) write
\begin{equation}
\kappa_F=\mathbb P(F(U)=F(V)).
\end{equation}
For the hybrid map, a collision requires both a distance collision and a
spectral collision inside the same distance bucket. We write
\begin{align}
\kappa_{\mathrm H}
&=
\mathbb P\!\left(
F_{G,\mathcal A}^{(m,\eta)}(U)
=
F_{G,\mathcal A}^{(m,\eta)}(V)
\right),\\
\kappa_{\mathrm D}
&=
\mathbb P\!\left(
D_{\mathcal A}(U)=D_{\mathcal A}(V)
\right),\\
\kappa_{\mathrm{S}\mid\mathrm D}
&=
\mathbb P\!\left(
Z_{m,\eta}(U)=Z_{m,\eta}(V)
\,\middle|\,
D_{\mathcal A}(U)=D_{\mathcal A}(V)
\right).
\end{align}
The conditional probability in the third line is defined when
\(\kappa_{\mathrm D}>0\). When \(\kappa_{\mathrm D}=0\), no
distance-colliding pair exists, so the distance component already separates
all distinct ordered vertex pairs and necessarily
\(\kappa_{\mathrm H}=0\). In this degenerate case, we adopt the convention
\(\kappa_{\mathrm{S}\mid\mathrm D}=0\). This convention preserves the
factorization
\(\kappa_{\mathrm H}
=\kappa_{\mathrm D}\kappa_{\mathrm{S}\mid\mathrm D}\).

When \(\kappa_{\mathrm D}>0\), the conditional collision probability can be
written equivalently as
\begin{equation}
\kappa_{\mathrm{S}\mid\mathrm D}
=
\frac{
\sum_{t\in\mathcal T_{G,\mathcal A}}
\sum_{\substack{u,v\in B_t\\u\ne v}}
\mathbf 1\{Z_{m,\eta}(u)=Z_{m,\eta}(v)\}
}{
\sum_{t\in\mathcal T_{G,\mathcal A}} |B_t|(|B_t|-1)
}.
\label{eq:conditional-spectral-collision}
\end{equation}
The denominator counts all ordered pairs of distinct vertices that collide
under the anchor-distance code, whereas the numerator counts those pairs
that remain indistinguishable after spectral refinement. Thus,
\(\kappa_{\mathrm{S}\mid\mathrm D}\) measures how much quantized spectral
energy resolves the ambiguity left by anchor-distance profiles. When
\(\kappa_{\mathrm D}=0\), the ratio above is not evaluated, and the stated
zero convention is used instead.

\begin{lemma}[Collision factorization]
\label{lem:basic-collision-lemma}
For any observation map \(F:V\to\mathcal Y\) on a finite set \(|V|=n\),
\begin{equation}
\operatorname{Err}^*(F)\le (n-1)\kappa_F.
\label{eq:basic-collision-error}
\end{equation}
Moreover, the hybrid map satisfies the exact factorization
\begin{equation}
\kappa_{\mathrm H}
=
\kappa_{\mathrm D}\kappa_{\mathrm{S}\mid\mathrm D}.
\label{eq:hybrid-collision-factorization}
\end{equation}
Consequently,
\begin{equation}
\operatorname{Err}^*
\bigl(F_{G,\mathcal A}^{(m,\eta)}\bigr)
\le
(n-1)\kappa_{\mathrm D}\kappa_{\mathrm{S}\mid\mathrm D}.
\label{eq:hybrid-collision-error}
\end{equation}
\end{lemma}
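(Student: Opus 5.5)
The plan is to prove the three claims in order. The error bound follows from the image identity \eqref{eq:image-identity} plus a counting inequality relating image size to the number of colliding pairs. The factorization follows from the definition of conditional probability. The final bound is their combination.

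\emph{Error bound.} Let the fibers of \(F\) be \(C_1,\dots,C_M\), with \(M=|\operatorname{Im}(F)|\) and \(\sum_i |C_i| = n\). By \eqref{eq:image-identity},
\[
n\,\operatorname{Err}^*(F) = n - M = \sum_{i}\bigl(|C_i|-1\bigr).
\]
Since every \(|C_i|\ge 1\), we have \(|C_i|-1 \le |C_i|(|C_i|-1)\). Summing over fibers gives
\[
n - M \le \sum_i |C_i|(|C_i|-1).
\]
The right-hand side counts the ordered pairs of distinct vertices that share a code, so it equals \(n(n-1)\kappa_F\), because \((U,V)\) is uniform over the \(n(n-1)\) ordered distinct pairs. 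Dividing by \(n\) yields \(\operatorname{Err}^*(F)\le (n-1)\kappa_F\).

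\emph{Factorization.} By definition, \(F^{(m,\eta)}_{G,\mathcal A}(U)=F^{(m,\eta)}_{G,\mathcal A}(V)\) holds exactly when both \(D_{\mathcal A}(U)=D_{\mathcal A}(V)\) and \(Z_{m,\eta}(U)=Z_{m,\eta}(V)\) hold. Hence \(\kappa_{\mathrm H}\) is the probability of an intersection of two events.
\begin{itemize}
\item If \(\kappa_{\mathrm D}>0\), the chain rule \(\mathbb P(A\cap B)=\mathbb P(A)\,\mathbb P(B\mid A)\) gives \eqref{eq:hybrid-collision-factorization} directly. Equivalently, the numerator of \eqref{eq:conditional-spectral-collision} equals \(n(n-1)\kappa_{\mathrm H}\), and its denominator equals \(n(n-1)\kappa_{\mathrm D}\).
\item If \(\kappa_{\mathrm D}=0\), no distance-colliding pair exists, so the intersection is empty and \(\kappa_{\mathrm H}=0\). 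This equals \(\kappa_{\mathrm D}\kappa_{\mathrm S\mid\mathrm D}\) under the stated convention (indeed for any value of \(\kappa_{\mathrm S\mid\mathrm D}\)).
\end{itemize}

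\emph{Final bound.} Apply the error bound with \(F=F^{(m,\eta)}_{G,\mathcal A}\), so that \(\kappa_F=\kappa_{\mathrm H}\). Then substitute the factorization to obtain \eqref{eq:hybrid-collision-error}.

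There is no real obstacle. The only point requiring care is the inequality \(|C|-1\le |C|(|C|-1)\), which relates the image deficit to the pair count, together with the degenerate case \(\kappa_{\mathrm D}=0\).
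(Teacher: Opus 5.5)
Your proposal is correct and follows the paper's proof essentially step for step: the fiber inequality $|C|-1\le |C|(|C|-1)$ gives the error bound, the factorization comes from counting ordered distance-colliding pairs (which you phrase as the chain rule, with the separate $\kappa_{\mathrm D}=0$ convention case), and the final bound is obtained by substitution. The paper merely writes the factorization bucket by bucket through within-bucket spectral collision rates, which is the same computation as your numerator/denominator observation for \eqref{eq:conditional-spectral-collision}.
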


\begin{theorem}[Deterministic collision-achievability criterion]
\label{thm:deterministic-collision-achievability}
For any deterministic sequence of graphs, anchor sets, spectral dimensions,
and quantization levels, if
\begin{equation}
n\kappa_{\mathrm D}\kappa_{\mathrm{S}\mid\mathrm D}\to0,
\end{equation}
then
\begin{equation}
\operatorname{Err}^*
\bigl(F_{G,\mathcal A}^{(m,\eta)}\bigr)\to0.
\end{equation}
The same conclusion holds in probability when the above quantities are
random and
\(n\kappa_{\mathrm D}\kappa_{\mathrm{S}\mid\mathrm D}\xrightarrow{\mathbb P}0\).
\end{theorem}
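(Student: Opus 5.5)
The plan is to reduce the theorem directly to Lemma~\ref{lem:basic-collision-lemma}. By \eqref{eq:hybrid-collision-error}, for every graph in the sequence,
\[
0\le \operatorname{Err}^*\bigl(F_{G,\mathcal A}^{(m,\eta)}\bigr)\le (n-1)\kappa_{\mathrm D}\kappa_{\mathrm{S}\mid\mathrm D}\le n\kappa_{\mathrm D}\kappa_{\mathrm{S}\mid\mathrm D}.
\]
In the deterministic case the right-hand side tends to zero by hypothesis, so the squeeze gives \(\operatorname{Err}^*\to0\). The degenerate case \(\kappa_{\mathrm D}=0\) needs no separate treatment. There \(\kappa_{\mathrm H}=0\), and the convention \(\kappa_{\mathrm{S}\mid\mathrm D}=0\) keeps the factorization valid, so the bound reads \(\operatorname{Err}^*\le 0\). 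This is consistent with the identity \eqref{eq:image-identity}, since the map is injective in that case.

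For the probabilistic version, the graph, the anchors, and possibly \(m,\eta\) are random. The inequality above is a deterministic statement about each realization, so it holds pointwise, i.e.\ almost surely. Fix \(\varepsilon>0\). The inclusion of events
\[
\bigl\{\operatorname{Err}^*>\varepsilon\bigr\}\subseteq\bigl\{n\kappa_{\mathrm D}\kappa_{\mathrm{S}\mid\mathrm D}>\varepsilon\bigr\}
\]
gives \(\mathbb P(\operatorname{Err}^*>\varepsilon)\le\mathbb P(n\kappa_{\mathrm D}\kappa_{\mathrm{S}\mid\mathrm D}>\varepsilon)\to0\). Hence \(\operatorname{Err}^*\xrightarrow{\mathbb P}0\). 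Because \(\operatorname{Err}^*\in[0,1]\), bounded convergence also yields \(\mathbb E\,\operatorname{Err}^*\to0\), which can be remarked on as a byproduct.

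There is no genuine obstacle here; the substance lies in Lemma~\ref{lem:basic-collision-lemma}, which I assume. The one point to check is that the random quantities are measurable, so that the events above are well defined. This holds because \(\operatorname{Err}^*\) and the \(\kappa\)'s are deterministic functions of the finite random configuration \((G,\mathcal A,m,\eta)\). Beyond that, the only care needed is the \(\kappa_{\mathrm D}=0\) convention, and it has already been handled.
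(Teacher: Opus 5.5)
Your proposal is correct and follows the paper's proof essentially verbatim. The paper also invokes Lemma~\ref{lem:basic-collision-lemma}, uses $n-1\le n$ with the squeeze theorem in the deterministic case, and deduces convergence in probability from the pointwise bound $0\le \operatorname{Err}^*\le n\kappa_{\mathrm D}\kappa_{\mathrm{S}\mid\mathrm D}$; your event-inclusion step and your remarks on the $\kappa_{\mathrm D}=0$ convention and on measurability make explicit what the paper leaves implicit.
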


It is therefore natural to define the total collision information
\begin{equation}
\mathcal I_{\mathrm H}(G,\mathcal A,m,\eta)
:=
-\log\kappa_{\mathrm D}
-
\log\kappa_{\mathrm{S}\mid\mathrm D},
\label{eq:total-collision-information}
\end{equation}
with the convention \(-\log0=+\infty\). The achievability condition is then
\(\mathcal I_{\mathrm H}(G,\mathcal A,m,\eta)\ge (1+\varepsilon)\log n\),
up to lower-order terms.

\subsection{Random-Regular Achievability and Conditional Spectral Collisions}
\label{subsec:rrg-achievability-transfer}

We instantiate the deterministic collision criterion on random regular
graphs by combining a two-source distance distinguisher bound with a
bounded-correlation Gaussian-wave anti-concentration bound. Both ingredients
are proved in Appendix~\ref{app:proof-Random-Regular}. The Gaussian-wave
surrogate models normalized Laplacian coordinates, since
\begin{equation}
X_m(v)_j
=
n\phi_{j+1}(v)^2
=
\bigl(\sqrt n\,\phi_{j+1}(v)\bigr)^2 .
\end{equation}
The actual-coordinate specialization is stated separately
below. Rather than requiring a full distributional transfer
from the Gaussian-wave surrogate to actual eigenvectors, the
collision criterion only needs a spectral collision bound
conditioned on distance collisions.

\begin{definition}
\label{def:gw-surrogate-main}
Fix a graph \(G=(V,E)\), an anchor set \(\mathcal A\), a spectral
dimension \(m\), and a constant \(\rho_\star\in[0,1)\). A random field
\begin{equation}
\{\widetilde Y_j(v):v\in V,\ 1\le j\le m\}
\end{equation}
is called an admissible bounded-correlation Gaussian-wave energy surrogate
if, conditionally on \(G\) and \(\mathcal A\), the following conditions hold:
\begin{enumerate}
\item for each \(j\), \(\{\widetilde Y_j(v):v\in V\}\) is a centered
Gaussian field;
\item for every \(v\in V\) and every \(j\),
\begin{equation}
\mathbb E[\widetilde Y_j(v)^2]=1;
\end{equation}
\item for every \(u\neq v\) and every \(j\),
\begin{equation}
\left|
\operatorname{Corr}
\bigl(
\widetilde Y_j(u),
\widetilde Y_j(v)
\bigr)
\right|
\le
\rho_\star;
\end{equation}
\item the fields are independent across \(j=1,\ldots,m\).
\end{enumerate}
The associated Gaussian-wave energy coordinate is
\begin{equation}
\widetilde X_m(v)
:=
\bigl(
\widetilde Y_1(v)^2,\ldots,\widetilde Y_m(v)^2
\bigr),
\end{equation}
and its quantized spectral code is
\begin{equation}
\widetilde Z_{m,\eta}(v)
:=
Q_\eta(\widetilde X_m(v)).
\end{equation}
\end{definition}

The closed random-regular theorem uses the following two estimates.

\begin{lemma}[Random-regular distance collision decay]
\label{lem:rrg-distance-collision-decay-main}
Fix \(r\ge3\), and let \(G_n\sim\mathcal G_{n,r}\). Let
\(\mathcal A_n\subseteq V(G_n)\) be sampled uniformly without replacement,
independently of \(G_n\), with \(|\mathcal A_n|=k_n\). Assume
\begin{equation}
\frac{k_n}{\log n}\to\infty,
\quad
k_n=o((\log n)^2).
\end{equation}
Then there exists a constant \(c_r>0\), depending only on \(r\), such that,
with
\begin{equation}
I_{\mathrm d,n}(r)
:=
\frac{c_r}{4\log n},
\end{equation}
we have
\begin{equation}
\kappa_{\mathrm D}(G_n,\mathcal A_n)
\le
\exp\{-k_n I_{\mathrm d,n}(r)\}
\end{equation}
with probability tending to one.
\end{lemma}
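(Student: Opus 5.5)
We plan to prove the bound in three stages: a pairwise distinguisher estimate, a per-pair tail bound using the independence of the anchors, and a Markov averaging step.

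\textbf{Step 1: a two-source distinguisher bound.} Fix a realization of \(G_n\). For distinct vertices \(u,v\), let
\[
p_{uv}:=\frac{|\{a\in V: d_{G_n}(a,u)= d_{G_n}(a,v)\}|}{n}
\]
be the fraction of vertices that fail to separate \(u\) and \(v\). The key structural claim is that, with probability \(1-o(1)\) over \(G_n\sim\mathcal G_{n,r}\), simultaneously for all pairs \(u\ne v\),
\[
p_{uv}\le 1-\frac{c_r}{\log n}
\]
for some \(c_r>0\).

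We would prove this by exploring the balls around \(u\) and \(v\) with the configuration model. Whp, every ball of radius \(R=\tfrac12\log_{r-1}n\) contains at most one cycle. The vertices at a fixed distance \(\ell\) from \(u\) but at a different distance from \(v\) form a set of size comparable to \((r-1)^\ell\). For adjacent or nearby pairs, we use the branch of the tree at \(u\) that avoids \(v\); on this branch \(d(a,v)=d(a,u)+d(u,v)\), so such anchors distinguish the pair. For far-apart pairs, the distance-to-\(u\) and distance-to-\(v\) profiles are roughly independent. Both distance laws are concentrated on \(O(1)\) values around \(\log_{r-1}n\), so they disagree with probability at least a constant. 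In either case a mass of order at least \(1/\log n\) distinguishes \(u\) and \(v\).

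\textbf{Step 2: a per-pair tail bound.} Given \(G_n\), the anchors are sampled without replacement. Hence, for a fixed pair,
\[
\mathbb P\bigl(D_{\mathcal A}(u)=D_{\mathcal A}(v)\mid G_n\bigr)\le p_{uv}^{\,k_n}\bigl(1+o(1)\bigr)^{k_n}.
\]
The correction factor is \(\le e^{O(k_n^2/n)}=1+o(1)\), because \(k_n=o((\log n)^2)\); this is where that assumption enters. On the good event of Step 1,
\[
\mathbb E\bigl[\kappa_{\mathrm D}\mid G_n\bigr]\le \Bigl(1-\frac{c_r}{\log n}\Bigr)^{k_n}(1+o(1))\le \exp\Bigl\{-\frac{c_rk_n}{\log n}\Bigr\}(1+o(1)).
\]

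\textbf{Step 3: Markov averaging.} Markov's inequality gives
\[
\mathbb P\Bigl(\kappa_{\mathrm D}>e^{-c_rk_n/(4\log n)}\;\Big|\;G_n\Bigr)\le (1+o(1))\,e^{-3c_rk_n/(4\log n)}.
\]
This tends to \(0\) because \(k_n/\log n\to\infty\); this is exactly why the statement carries the factor \(1/4\) and the growth assumption. Integrating over the good event for \(G_n\) then finishes the proof.

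\textbf{Main obstacle.} The main difficulty is Step 1, specifically the uniform lower bound \(1-p_{uv}\gtrsim 1/\log n\) over all \(\binom n2\) pairs. The loss of order \(\log n\) comes from distance laws concentrating on \(O(\log n)\) shells, and intermediate-distance pairs are the delicate case. We would first try a union bound over pairs combined with exploration-process concentration, using a Chernoff bound on the number of distinguishing vertices, with tail \(e^{-\Omega(n/\log n)}\), to beat the \(n^2\) pairs.
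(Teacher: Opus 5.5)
Your plan has the same skeleton as the paper's proof: equal-distance fraction, then the $k_n$-th moment from sampling without replacement, then Markov. The gap is in Step~1. You need $p_{uv}\le 1-c_r/\log n$ for \emph{all} pairs, adjacent ones included, and your sketch does not give this for $u\sim v$.

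\begin{itemize}
\item The branch of the local tree at $u$ avoiding $v$ exists only inside the tree-like ball of radius $R=\tfrac12\log_{r-1}n$. There it has $O((r-1)^R)=O(\sqrt n)$ vertices. So any count confined to the tree-like neighbourhood only certifies $1-p_{uv}\gtrsim n^{-1/2}$, not $1-p_{uv}\gtrsim 1/\log n$.
\item Beyond that radius the identity $d(a,v)=d(a,u)+1$ fails generically.
\item The global tool the paper uses does not cover this case either. The Mathieu--Zhou distinguisher lemma gives at least $3n/\log n$ vertices with $|d(a,u)-d(a,v)|\ge 2$. By the triangle inequality this forces $d(u,v)\ge 2$, so it says nothing for an edge $uv$.
\end{itemize}

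For pairs with $d(u,v)\ge 2$, cite that lemma rather than attempting a union bound with Chernoff. The indicators $\mathbf 1\{d(a,u)\neq d(a,v)\}$ over $a$ are far from independent. The lemma gives per-pair failure probability $o(n^{-2})$ in the configuration model. A union bound over $\binom{n}{2}$ pairs then applies, and the result transfers to $\mathcal G_{n,r}$ because the configuration model is simple with probability bounded away from zero.

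The missing idea is that adjacent pairs need not be controlled at all. Only $rn$ of the $n(n-1)$ ordered pairs are adjacent. Bounding their equal-distance fraction by $1$ gives, on the good event for $G_n$,
\[
\mathbb E[\kappa_{\mathrm D}\mid G_n]\le \frac{r}{n-1}+\exp\Bigl\{-\frac{c_rk_n}{\log n}\Bigr\}.
\]
Markov at level $e^{-c_rk_n/(4\log n)}$ then leaves the extra term $\frac{r}{n-1}e^{c_rk_n/(4\log n)}$. This vanishes exactly because $k_n=o((\log n)^2)$ makes $c_rk_n/\log n=o(\log n)$.

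That is where the upper growth hypothesis enters. It is not needed for sampling without replacement, since $\binom{b}{k}/\binom{n}{k}=\prod_{i<k}\frac{b-i}{n-i}\le (b/n)^k$ holds exactly, with no correction factor. Note that if your Step~1 really held for adjacent pairs, your argument would never use $k_n=o((\log n)^2)$. In the paper, that hypothesis is precisely the price paid for having no nontrivial bound on adjacent pairs.
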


\begin{lemma}[Gaussian-wave spectral collision decay]
\label{lem:gw-spectral-collision-decay-main}
Fix \(\rho_\star\in[0,1)\). There exists a constant
\(C_{\mathrm s}=C_{\mathrm s}(\rho_\star)>0\) such that, for any finite
connected graph \(G\), any anchor set \(\mathcal A\), and any admissible
bounded-correlation Gaussian-wave energy surrogate,
\begin{equation}
\mathbb E
\left[
\widetilde\kappa_{\mathrm{S}\mid\mathrm{D}}
\,\middle|\,
G,\mathcal A
\right]
\le
\bigl(C_{\mathrm s}\eta\log(e/\eta)\bigr)^m,
\quad
0<\eta<e^{-1}.
\end{equation}
Equivalently, with
\begin{equation}
I_{\mathrm s}^{\mathrm{gw}}(\eta)
:=
\log
\frac{1}{C_{\mathrm s}\eta\log(e/\eta)},
\end{equation}
for any \(\omega_{\mathrm s,n}\to\infty\),
\begin{equation}
\widetilde\kappa_{\mathrm{S}\mid\mathrm{D}}
\le
\exp
\left\{
-m_n I_{\mathrm s}^{\mathrm{gw}}(\eta_n)
+
\omega_{\mathrm s,n}
\right\}
\end{equation}
with conditional probability at least \(1-e^{-\omega_{\mathrm s,n}}\).
\end{lemma}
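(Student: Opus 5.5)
We prove the expectation bound pairwise and then get the tail statement from Markov's inequality. Write $\widetilde\kappa_{\mathrm{S}\mid\mathrm D}$ as the ratio \eqref{eq:conditional-spectral-collision} with $Z_{m,\eta}$ replaced by $\widetilde Z_{m,\eta}$. The denominator $\sum_t |B_t|(|B_t|-1)$ is a deterministic function of $(G,\mathcal A)$, and the surrogate field is independent of the distance buckets given $(G,\mathcal A)$. Hence
\begin{equation*}
\mathbb E\bigl[\widetilde\kappa_{\mathrm{S}\mid\mathrm D}\mid G,\mathcal A\bigr]
\end{equation*}
is an average, over ordered distance-colliding pairs $u\neq v$, of $\mathbb P(\widetilde Z_{m,\eta}(u)=\widetilde Z_{m,\eta}(v)\mid G,\mathcal A)$. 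In the degenerate case $\kappa_{\mathrm D}=0$ the convention gives $0$ and there is nothing to prove. It therefore suffices to show that, uniformly over pairs $u\ne v$,
\begin{equation*}
\mathbb P\bigl(\widetilde Z_{m,\eta}(u)=\widetilde Z_{m,\eta}(v)\bigr)\le \bigl(C_{\mathrm s}\eta\log(e/\eta)\bigr)^m .
\end{equation*}
By condition 4 of Definition~\ref{def:gw-surrogate-main}, the fields are independent across $j$, so this probability factors as $\prod_{j=1}^m \mathbb P(\lfloor \widetilde Y_j(u)^2/\eta\rfloor=\lfloor \widetilde Y_j(v)^2/\eta\rfloor)$. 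This reduces everything to a single bivariate estimate.

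\textbf{Key step.} Let $(X,Y)$ be centered Gaussians with unit variances and correlation $\rho$, $|\rho|\le\rho_\star<1$. We want
\begin{equation*}
\mathbb P\bigl(\lfloor X^2/\eta\rfloor=\lfloor Y^2/\eta\rfloor\bigr)\le C_{\mathrm s}\eta\log(e/\eta).
\end{equation*}
The event implies $|X^2-Y^2|<\eta$. Write $X^2-Y^2=(X-Y)(X+Y)$ and set $A=X-Y$, $B=X+Y$. Their covariance is $\mathbb E[AB]=\mathbb E X^2-\mathbb E Y^2=0$, so $A$ and $B$ are independent centered Gaussians with variances $2(1-\rho)$ and $2(1+\rho)$. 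Both variances lie in $[2(1-\rho_\star),4]$, so both densities are bounded by some $c(\rho_\star)$. Then
\begin{equation*}
\mathbb P(|AB|<\eta)=\mathbb E\bigl[\mathbb P(|A|<\eta/|B|\mid B)\bigr]\le \mathbb E\bigl[\min(1,2c\,\eta/|B|)\bigr].
\end{equation*}
Split according to whether $|B|\le\eta$ or $|B|>\eta$. The region $|B|\le\eta$ contributes $O(\eta)$. The region $|B|>\eta$ contributes $2c\eta\int_{\eta<|b|}|b|^{-1}p_B(b)\,db=O(\eta\log(e/\eta))$, since $p_B$ is bounded near zero and decays at infinity. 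Taking the product over $j$ yields the expectation bound for $0<\eta<e^{-1}$.

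\textbf{Tail form.} Apply Markov's inequality conditionally on $(G,\mathcal A)$:
\begin{equation*}
\mathbb P\bigl(\widetilde\kappa_{\mathrm{S}\mid\mathrm D}>e^{\omega}\,\mathbb E[\widetilde\kappa_{\mathrm{S}\mid\mathrm D}\mid G,\mathcal A]\bigr)\le e^{-\omega}.
\end{equation*}
Since $\mathbb E[\widetilde\kappa_{\mathrm{S}\mid\mathrm D}\mid G,\mathcal A]\le\exp\{-m_nI^{\mathrm{gw}}_{\mathrm s}(\eta_n)\}$, taking $\omega=\omega_{\mathrm s,n}$ gives the stated high-probability bound.

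\textbf{Main obstacle.} The only delicate point is the anti-concentration of $X^2-Y^2$ uniformly in $\rho$, particularly for $\rho$ near $\pm\rho_\star$ and with the logarithmic factor coming from the singular behaviour near the origin. The independent factorization through $A=X-Y$, $B=X+Y$ handles this cleanly, and it is the step where the hypothesis $\rho_\star<1$ is used.
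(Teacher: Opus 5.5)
Your proposal is correct. The outer structure matches the paper's proof: averaging the pairwise collision probability over ordered distance-colliding pairs, factorizing over $j$ by independence, using the zero convention when $\kappa_{\mathrm D}=0$, and applying Markov's inequality conditionally on $(G,\mathcal A)$.

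The genuine difference is the one-dimensional estimate. The paper bounds the joint density of the squared pair $(G^2,H^2)$ by $C_{\rho_\star}(xy)^{-1/2}e^{-c_{\rho_\star}(x+y)}$, summing over the four sign branches. It then sums this bound over the diagonal quantization cells $I_\ell\times I_\ell$:
\begin{itemize}
\item the cell $\ell=0$ contributes $O(\eta)$;
\item each cell with $1\le\ell\le 1/\eta$ contributes $O(\eta/\ell)$, and the harmonic sum produces $\eta\log(e/\eta)$;
\item the tail cells contribute $O(\eta)$ by exponential decay.
\end{itemize}

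You instead drop the cell structure and pass to the weaker event $|X^2-Y^2|<\eta$. You then use that $X-Y$ and $X+Y$ are independent, which holds because of joint Gaussianity and the equal unit variances in condition 2. The logarithm then comes from $\mathbb E[\min(1,\eta/|B|)]$.

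Your route is shorter and avoids the change of variables to squared coordinates. It also proves a grid-free bound: it depends only on $|x-y|<\eta$. It therefore applies immediately to any partition into cells of width $\eta$, and to the Gaussian-wave version of the expanded-cell majorant $H^+_{\eta,\alpha}$, where $|x_i-y_i|\le\eta+2\alpha$.

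The paper's route relies only on a pointwise density envelope. It therefore does not hinge on exact equality of variances or on the identity $X^2-Y^2=(X-Y)(X+Y)$, and it shows explicitly which quantization cells generate the logarithmic factor. Both arguments give the same order $\eta\log(e/\eta)$ with a constant depending only on $\rho_\star$.
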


Let
\(\widetilde F_{G_n,\mathcal A_n}^{(m_n,\eta_n)}\)
denote the hybrid map whose spectral component is an
admissible bounded-correlation Gaussian-wave energy surrogate
with correlation bound \(\rho_\star\in[0,1)\). Define
\begin{equation}
I_{\mathrm d,n}(r)
:=
\frac{c_r}{4\log n},
\quad
I_{\mathrm s}^{\mathrm{gw}}(\eta)
:=
\log
\frac{1}{
C_{\mathrm s}\eta\log(e/\eta)
}.
\label{eq:gw-information-scales}
\end{equation}

\begin{theorem}[Closed Gaussian-wave hybrid achievability]
\label{thm:closed-gw-hybrid-achievability}
Fix \(r\geq 3\), and let \(G_n\sim\mathcal G_{n,r}\). Let
\(\mathcal A_n\subseteq V(G_n)\) be sampled uniformly without
replacement, independently of \(G_n\), with
\(|\mathcal A_n|=k_n\). Assume
\begin{equation}
\frac{k_n}{\log n}\to\infty,
\quad
k_n=o\bigl((\log n)^2\bigr).
\label{eq:closed-gw-anchor-regime}
\end{equation}
There exist constants \(c_r>0\) and
\(C_{\mathrm s}=C_{\mathrm s}(\rho_\star)>0\) such that, if
\begin{equation}
\eta_n\in(0,e^{-1}),
\quad
C_{\mathrm s}\eta_n\log(e/\eta_n)<1,
\label{eq:closed-gw-positive-info}
\end{equation}
and if, for some \(\varepsilon>0\),
\begin{equation}
\begin{aligned}
k_n I_{\mathrm d,n}(r)
+
m_n I_{\mathrm s}^{\mathrm{gw}}(\eta_n)
\ge
(1+\varepsilon)\log n
\end{aligned}
\label{eq:closed-gw-condition}
\end{equation}
for all sufficiently large \(n\), then
\begin{equation}
\operatorname{Err}^*
\bigl(
\widetilde F_{G_n,\mathcal A_n}^{(m_n,\eta_n)}
\bigr)
\xrightarrow{\mathbb P}0.
\label{eq:closed-gw-achievability}
\end{equation}
\end{theorem}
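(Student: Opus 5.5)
The plan is to reduce the statement to the probabilistic form of Theorem~\ref{thm:deterministic-collision-achievability}, applied to the surrogate hybrid map \(\widetilde F_{G_n,\mathcal A_n}^{(m_n,\eta_n)}\). It then suffices to show \(n\kappa_{\mathrm D}\widetilde\kappa_{\mathrm S\mid\mathrm D}\xrightarrow{\mathbb P}0\).

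The factorization in Lemma~\ref{lem:basic-collision-lemma} holds for any map of the form \((D_{\mathcal A},\text{spectral code})\). It is a purely combinatorial identity over ordered pairs, including the zero convention when \(\kappa_{\mathrm D}=0\), so it applies verbatim with \(Z_{m,\eta}\) replaced by \(\widetilde Z_{m,\eta}\). This gives
\[
\operatorname{Err}^*\bigl(\widetilde F_{G_n,\mathcal A_n}^{(m_n,\eta_n)}\bigr)\le (n-1)\kappa_{\mathrm D}\widetilde\kappa_{\mathrm S\mid\mathrm D}.
\]
The remaining work is to combine the two decay lemmas on a common high-probability event.

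Let \(\mathcal E_n\) be the event from Lemma~\ref{lem:rrg-distance-collision-decay-main}, namely \(\kappa_{\mathrm D}\le\exp\{-k_nI_{\mathrm d,n}(r)\}\). The anchor regime \eqref{eq:closed-gw-anchor-regime} is exactly that lemma's hypothesis, so \(\mathbb P(\mathcal E_n)\to1\). The constants \(c_r\) and \(C_{\mathrm s}\) in the statement are taken to be those of the two lemmas.

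Condition on \((G_n,\mathcal A_n)\) and fix \(\omega_{\mathrm s,n}:=\tfrac{\varepsilon}{2}\log n\to\infty\). Lemma~\ref{lem:gw-spectral-collision-decay-main} applies because \(\eta_n\in(0,e^{-1})\). It gives, with conditional probability at least \(1-n^{-\varepsilon/2}\),
\[
\widetilde\kappa_{\mathrm S\mid\mathrm D}\le\exp\{-m_nI^{\mathrm{gw}}_{\mathrm s}(\eta_n)+\omega_{\mathrm s,n}\}.
\]
Condition \eqref{eq:closed-gw-positive-info} ensures \(I^{\mathrm{gw}}_{\mathrm s}(\eta_n)>0\), so this bound is genuinely decaying in \(m_n\). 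It is not strictly needed beyond making \eqref{eq:closed-gw-condition} meaningful.

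Because this conditional bound is uniform over \((G_n,\mathcal A_n)\), averaging shows the spectral event has unconditional probability at least \(1-n^{-\varepsilon/2}\). A union bound with \(\mathcal E_n^c\) then makes both bounds hold simultaneously with probability tending to one. On that intersection, \eqref{eq:closed-gw-condition} gives
\[
n\kappa_{\mathrm D}\widetilde\kappa_{\mathrm S\mid\mathrm D}\le \exp\{\log n-(1+\varepsilon)\log n+\tfrac{\varepsilon}{2}\log n\}=n^{-\varepsilon/2}\to0,
\]
which establishes convergence in probability and hence \eqref{eq:closed-gw-achievability}.

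The main obstacle is the measurability and conditioning bookkeeping rather than any estimate. The surrogate field is random given \((G_n,\mathcal A_n)\), and the distance event depends only on \((G_n,\mathcal A_n)\). One must check that Lemma~\ref{lem:gw-spectral-collision-decay-main}'s bound holds uniformly in the conditioning. This is what allows the conditional tail bound to be integrated and intersected with \(\mathcal E_n\).

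One must also check that the choice of \(\omega_{\mathrm s,n}\) absorbs only a fraction of the \(\varepsilon\log n\) slack. If the lemma were only available in expectation, I would instead apply conditional Markov: \(\mathbb P(\widetilde\kappa_{\mathrm S\mid\mathrm D}>e^{\omega}\,\mathbb E[\cdot])\le e^{-\omega}\). This yields the same conclusion.
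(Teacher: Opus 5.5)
Your proposal is correct and follows essentially the same route as the paper. You intersect the high-probability distance event from Lemma~\ref{lem:rrg-distance-collision-decay-main} with the conditional spectral event from Lemma~\ref{lem:gw-spectral-collision-decay-main}, multiply the two bounds through the exact factorization $\widetilde\kappa_{\mathrm H}=\kappa_{\mathrm D}\widetilde\kappa_{\mathrm S\mid\mathrm D}$, and conclude with $\operatorname{Err}^*\le (n-1)\widetilde\kappa_{\mathrm H}$. The only cosmetic difference is the choice of slack: the paper takes $\omega_{\mathrm s,n}\to\infty$ with $\omega_{\mathrm s,n}=o(\log n)$ and gets $\widetilde\kappa_{\mathrm H}\le n^{-1-\varepsilon/2}$ for large $n$, while your $\omega_{\mathrm s,n}=\tfrac{\varepsilon}{2}\log n$ gives the same $n^{-\varepsilon/2}$ bound directly, with an explicit failure probability of $n^{-\varepsilon/2}$.
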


Theorem~\ref{thm:closed-gw-hybrid-achievability} is closed for
the Gaussian-wave energy model. For actual Laplacian-energy
coordinates, the remaining ingredient is a spectral collision
bound inside distance-collision buckets.

For the actual Laplacian-energy code, write
\begin{equation}
X_n(v)
:=
n\bigl(
\phi_2(v)^2,\ldots,\phi_{m_n+1}(v)^2
\bigr).
\label{eq:actual-lap-energy}
\end{equation}
For \(x,y\in\mathbb{R}_{\geq 0}^{m_n}\), define
\begin{equation}
H_{\eta_n}(x,y)
:=
\mathbf{1}
\bigl\{
Q_{\eta_n}(x)=Q_{\eta_n}(y)
\bigr\}.
\label{eq:hard-spectral-collision}
\end{equation}
When distance-collision pairs exist, let
\(\pi_{\mathrm D,n}\) be the uniform measure over ordered
pairs \((u,v)\) with \(u\neq v\) and
\(D_{\mathcal A_n}(u)=D_{\mathcal A_n}(v)\). If no such pair
exists, all \(\pi_{\mathrm D,n}\)-expectations are set to zero.

Let \(H^+_{\eta_n,\alpha_n}\) be the expanded-cell majorant
from Appendix~\ref{app:expanded-cell-majorant}, satisfying
\begin{equation}
0
\leq
H^+_{\eta_n,\alpha_n}(x,y)
\leq
1,
\quad
H_{\eta_n}(x,y)
\leq
H^+_{\eta_n,\alpha_n}(x,y).
\label{eq:spectral-majorant}
\end{equation}
We use
\begin{equation}
\alpha_n
:=
\eta_n(\log n)^{-2}.
\label{eq:smoothing-scale}
\end{equation}
Define the smoothed conditional spectral collision rate
\begin{equation}
\Gamma_n^+
:=
\mathbb{E}_{\pi_{\mathrm D,n}}
\Bigl[
H^+_{\eta_n,\alpha_n}
\bigl(
X_n(U),X_n(V)
\bigr)
\Bigr].
\label{eq:smoothed-spectral-rate}
\end{equation}

\begin{assumption}
\label{ass:actual-spectral-collision-bound}
Fix \(r\geq 3\), and let \(G_n\sim\mathcal{G}_{n,r}\).
Let \(\mathcal A_n\subseteq V(G_n)\) be sampled uniformly
without replacement and independently of \(G_n\). Assume
\begin{equation}
\begin{aligned}
m_n
&=
O(\log n),
&
\eta_n
&\in
[n^{-c_0},e^{-1}),
&
\alpha_n
&=
\eta_n(\log n)^{-2},
\end{aligned}
\label{eq:actual-spectral-bound-regime}
\end{equation}
where \(c_0>0\) is fixed. Suppose that there exist a constant
\(C_{\mathrm{sc},r}>0\), depending only on \(r\), and a
deterministic sequence \(\xi_n=o(\log n)\), such that, with
probability tending to one,
\begin{equation}
\Gamma_n^+
\leq
\exp
\Bigl\{
-m_n I_{\mathrm{sc},r}(\eta_n)
+
\xi_n
\Bigr\},
\label{eq:actual-spectral-collision-bound}
\end{equation}
where
\begin{equation}
I_{\mathrm{sc},r}(\eta)
:=
\log
\frac{1}{
C_{\mathrm{sc},r}\eta\log(e/\eta)
}.
\label{eq:spectral-collision-information-scale}
\end{equation}
\end{assumption}

\begin{corollary}[Conditional actual Laplacian hybrid achievability]
\label{cor:lap-hybrid-achievability}
Fix \(r\geq 3\), and let \(G_n\sim\mathcal{G}_{n,r}\).
Let \(\mathcal A_n\subseteq V(G_n)\) be sampled uniformly
without replacement and independently of \(G_n\), with
\(|\mathcal A_n|=k_n\). Assume
\begin{equation}
\frac{k_n}{\log n}\to\infty,
\quad
k_n=o\bigl((\log n)^2\bigr).
\label{eq:lap-anchor-regime-main}
\end{equation}
Suppose Assumption~\ref{ass:actual-spectral-collision-bound}
holds and
\begin{equation}
C_{\mathrm{sc},r}\eta_n\log(e/\eta_n)<1
\label{eq:lap-positive-spectral-information}
\end{equation}
for all sufficiently large \(n\). Let
\begin{equation}
F_n^{\mathrm{Lap}}(v)
:=
\bigl(
D_{\mathcal A_n}(v),
Q_{\eta_n}(X_n(v))
\bigr).
\label{eq:lap-hybrid-map-main}
\end{equation}
If there exists \(\varepsilon>0\) such that
\begin{equation}
\begin{aligned}
k_n I_{\mathrm d,n}(r)
+
m_n I_{\mathrm{sc},r}(\eta_n)
\geq
(1+\varepsilon)\log n
\end{aligned}
\label{eq:lap-achievability-budget-main}
\end{equation}
for all sufficiently large \(n\), then
\begin{equation}
\operatorname{Err}^*
\bigl(
F_n^{\mathrm{Lap}}
\bigr)
\xrightarrow{\mathbb P}0.
\label{eq:lap-achievability-main}
\end{equation}
\end{corollary}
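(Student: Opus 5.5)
The proof is a direct chain through the deterministic criterion of Theorem~\ref{thm:deterministic-collision-achievability}. It suffices to show \(n\kappa_{\mathrm D}\kappa_{\mathrm S\mid\mathrm D}\xrightarrow{\mathbb P}0\) for the actual Laplacian map \(F_n^{\mathrm{Lap}}\). The factorization of Lemma~\ref{lem:basic-collision-lemma} holds for any spectral component, since it is purely combinatorial. So \(F_n^{\mathrm{Lap}}\) obeys \(\operatorname{Err}^*(F_n^{\mathrm{Lap}})\le (n-1)\kappa_{\mathrm D}\kappa_{\mathrm S\mid\mathrm D}\), where \(\kappa_{\mathrm S\mid\mathrm D}\) is computed with \(Z=Q_{\eta_n}(X_n(\cdot))\).

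\textbf{Step 1 (identify the conditional collision rate).} If \(\kappa_{\mathrm D}>0\), then by \eqref{eq:conditional-spectral-collision} \(\kappa_{\mathrm S\mid\mathrm D}\) is exactly the \(\pi_{\mathrm D,n}\)-average of \(H_{\eta_n}(X_n(U),X_n(V))\). By the majorant property \eqref{eq:spectral-majorant}, this gives \(\kappa_{\mathrm S\mid\mathrm D}\le\Gamma_n^+\). If \(\kappa_{\mathrm D}=0\), both sides vanish under the stated conventions. Hence \(\kappa_{\mathrm S\mid\mathrm D}\le \Gamma_n^+\) deterministically.

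\textbf{Step 2 (two high-probability events).} Let \(E_n^{(1)}\) be the event of Lemma~\ref{lem:rrg-distance-collision-decay-main}, \(\kappa_{\mathrm D}\le\exp\{-k_nI_{\mathrm d,n}(r)\}\), which holds with probability tending to one under \eqref{eq:lap-anchor-regime-main}. Let \(E_n^{(2)}\) be the event of Assumption~\ref{ass:actual-spectral-collision-bound}, \(\Gamma_n^+\le\exp\{-m_nI_{\mathrm{sc},r}(\eta_n)+\xi_n\}\). Both are events on the same probability space \((G_n,\mathcal A_n)\), so a union bound gives \(\mathbb P(E_n^{(1)}\cap E_n^{(2)})\to1\). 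No independence is needed.

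\textbf{Step 3 (budget arithmetic).} On \(E_n^{(1)}\cap E_n^{(2)}\),
\begin{equation*}
n\kappa_{\mathrm D}\kappa_{\mathrm S\mid\mathrm D}
\le
\exp\bigl\{\log n-k_nI_{\mathrm d,n}(r)-m_nI_{\mathrm{sc},r}(\eta_n)+\xi_n\bigr\}
\le
\exp\{-\varepsilon\log n+\xi_n\}
\end{equation*}
by \eqref{eq:lap-achievability-budget-main}. Condition \eqref{eq:lap-positive-spectral-information} guarantees \(I_{\mathrm{sc},r}(\eta_n)>0\), so the spectral term contributes nonnegatively and the budget inequality is meaningful. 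Since \(\xi_n=o(\log n)\), the exponent tends to \(-\infty\), and the right side tends to zero deterministically. It follows that \(n\kappa_{\mathrm D}\kappa_{\mathrm S\mid\mathrm D}\xrightarrow{\mathbb P}0\), and the in-probability clause of Theorem~\ref{thm:deterministic-collision-achievability} yields \eqref{eq:lap-achievability-main}.

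\textbf{Main obstacle.} The only delicate point is Step 1. One must check that \(\pi_{\mathrm D,n}\) coincides with the conditional law defining \(\kappa_{\mathrm S\mid\mathrm D}\): it is uniform over distance-colliding ordered pairs, which matches uniform sampling without replacement conditioned on \(D(U)=D(V)\). One must also check that the zero conventions in the degenerate case align. Beyond this, the main care is that the constant \(c_r\) in \(I_{\mathrm d,n}\) is the one supplied by Lemma~\ref{lem:rrg-distance-collision-decay-main}. The regime constraints on \(m_n\) and \(\eta_n\) enter only through the Assumption and need no further use.
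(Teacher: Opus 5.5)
Your proposal is correct and follows the paper's proof essentially step for step. The paper likewise bounds \(\kappa_{\mathrm S\mid\mathrm D}^{\mathrm{Lap}}\le\Gamma_n^+\) via the majorant, with the same zero convention in the degenerate case, and packages this as a separate lemma; it then intersects with the random-regular distance-collision event, applies the exact factorization and the budget with \(\xi_n=o(\log n)\) to get \((n-1)\kappa_{\mathrm H}^{\mathrm{Lap}}\le\exp\{-\varepsilon\log n+o(\log n)\}\), and concludes with the deterministic collision criterion, the only cosmetic difference being that you make the union bound over the two high-probability events explicit.
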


\subsection{Two-Sided Localization Criterion}
\label{subsec:two-sided-localization-criterion}

The converse and collision bounds can be summarized by two graph-dependent
quantities. Writing
\(R_{L,\eta}:=\lfloor L/\eta\rfloor\), define the simplex-refined
converse code budget
\begin{equation}
\begin{aligned}
\mathcal B_{\mathrm{conv}}^{\Delta}
(G,\mathcal A,m,\eta,L)
:={}&
\log D(G,\mathcal A)\\
&+
\log\binom{R_{L,\eta}+m}{m},
\end{aligned}
\label{eq:converse-budget}
\end{equation}
and the collision information
\begin{equation}
\mathcal I_{\mathrm H}(G,\mathcal A,m,\eta)
=
-\log\kappa_{\mathrm D}
-
\log\kappa_{\mathrm{S}\mid\mathrm D}.
\label{eq:collision-information}
\end{equation}
Here, the superscript \(\Delta\) denotes the simplex-counting
converse budget. The conservative box budget used in the
finite-sample design diagnostics is denoted separately by
\(\mathcal B_{\mathrm{conv}}^{\Box}\) in Section~V.

Then Theorem~\ref{thm:normalized-refined-converse} and
Lemma~\ref{lem:basic-collision-lemma} imply
\begin{equation}
\begin{aligned}
&1-
\exp\!\left\{
\mathcal B_{\mathrm{conv}}^{\Delta}
(G,\mathcal A,m,\eta,L)-\log n
\right\}
-
\frac{m}{L}\\
\le{}&
\operatorname{Err}^*
\bigl(F_{G,\mathcal A}^{(m,\eta)}\bigr)
\le
(n-1)\exp\!\left\{
-\mathcal I_{\mathrm H}(G,\mathcal A,m,\eta)
\right\}.
\label{eq:two-sided-bound-compact}
\end{aligned}
\end{equation}

\begin{corollary}[Graph-dependent design principle]
\label{cor:graph-dependent-design-principle}
For a sequence of hybrid encodings
\(F_{G_n,\mathcal A_n}^{(m_n,\eta_n)}\), the following two
regimes hold.

If there exist \(L_n>0\) and \(\varepsilon>0\) such that
\begin{equation}
\mathcal B_{\mathrm{conv}}^{\Delta}
(G_n,\mathcal A_n,m_n,\eta_n,L_n)
\le
(1-\varepsilon)\log n
\end{equation}
and \(m_n/L_n\to0\), then
\begin{equation}
\operatorname{Err}^*
\bigl(F_{G_n,\mathcal A_n}^{(m_n,\eta_n)}\bigr)
\to1.
\end{equation}
If there exists \(\varepsilon>0\) such that
\begin{equation}
\mathcal I_{\mathrm H}(G_n,\mathcal A_n,m_n,\eta_n)
\ge
(1+\varepsilon)\log n,
\end{equation}
then
\begin{equation}
\operatorname{Err}^*
\bigl(F_{G_n,\mathcal A_n}^{(m_n,\eta_n)}\bigr)
\to0.
\end{equation}
\end{corollary}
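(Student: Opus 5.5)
The corollary is a direct repackaging of the two-sided bound~\eqref{eq:two-sided-bound-compact}. I will prove the two regimes separately, each by plugging the hypothesis into the relevant side.

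\emph{Converse regime.} Apply Theorem~\ref{thm:normalized-refined-converse} with \(L=L_n\), and rewrite~\eqref{eq:normalized-converse-error} in terms of the budget:
\[
\operatorname{Err}^*\bigl(F_{G_n,\mathcal A_n}^{(m_n,\eta_n)}\bigr)\ge 1-\exp\{\mathcal B_{\mathrm{conv}}^{\Delta}(G_n,\mathcal A_n,m_n,\eta_n,L_n)-\log n\}-\frac{m_n}{L_n}.
\]
This uses \(D\binom{R+m}{m}/n=\exp\{\log D+\log\binom{R+m}{m}-\log n\}\), with \(R=\lfloor L_n/\eta_n\rfloor\). By hypothesis the exponent is at most \(-\varepsilon\log n\), so the middle term is at most \(n^{-\varepsilon}\to0\). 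The last term tends to zero by assumption. Since \(\operatorname{Err}^*\le1\) always, we get \(\operatorname{Err}^*\to1\).

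\emph{Achievability regime.} Start from Lemma~\ref{lem:basic-collision-lemma}, in the form~\eqref{eq:hybrid-collision-error}, which gives \(\operatorname{Err}^*\le(n-1)\kappa_{\mathrm D}\kappa_{\mathrm S\mid\mathrm D}\). If \(\kappa_{\mathrm D}\kappa_{\mathrm S\mid\mathrm D}>0\), this product equals \(\exp\{-\mathcal I_{\mathrm H}\}\) by definition~\eqref{eq:total-collision-information}. If instead either factor is zero, then \(\mathcal I_{\mathrm H}=+\infty\) under the convention \(-\log0=+\infty\), and both sides vanish. In the degenerate case \(\kappa_{\mathrm D}=0\), the convention \(\kappa_{\mathrm S\mid\mathrm D}=0\) keeps the factorization and the bound consistent. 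In every case,
\[
\operatorname{Err}^*\le(n-1)e^{-\mathcal I_{\mathrm H}}\le n\cdot n^{-(1+\varepsilon)}=n^{-\varepsilon}\to0.
\]
Equivalently, this is Theorem~\ref{thm:deterministic-collision-achievability} applied with \(n\kappa_{\mathrm D}\kappa_{\mathrm S\mid\mathrm D}\le n^{-\varepsilon}\).

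\emph{Main obstacle.} The argument is essentially bookkeeping, so the only step needing care is the edge cases. First, the zero-collision conventions must not break the identity \(\kappa_{\mathrm H}=e^{-\mathcal I_{\mathrm H}}\). Second, both displayed hypotheses are required to hold for all sufficiently large \(n\), so the limits are taken along the tail of the sequence. Third, the quantities in the hybrid map are deterministic given \((G_n,\mathcal A_n)\), so the conclusions are ordinary limits rather than limits in probability.
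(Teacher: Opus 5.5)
Your proposal is correct and follows the paper's proof. The converse side rewrites the simplex-refined bound as $1-\exp\{\mathcal B_{\mathrm{conv}}^{\Delta}-\log n\}-m_n/L_n$ and uses $n^{-\varepsilon}\to 0$, while the achievability side combines the collision bound with the factorization to get $(n-1)e^{-\mathcal I_{\mathrm H}}\le n^{-\varepsilon}$; your treatment of the zero-collision conventions and of $\operatorname{Err}^*\le 1$ just spells out details the paper leaves implicit.
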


Thus, the hybrid positional encoding admits a two-sided
interpretation. Localization is impossible when the
simplex-refined code budget after distance partitioning and
spectral quantization is below the \(\log n\) scale, while
localization is achievable when the total distance-spectral
collision information exceeds the same scale.

\section{Empirical Evaluation}
\label{SE7}
\subsection{Experimental Setup}
\label{subsec:experimental-setup}

All experiments instantiate the hybrid observation map in
Section~\ref{SE3}. To avoid overloading the scalar trimming
mass \(S_m(v)=\sum_{j=1}^m X_m(v)_j\) used in the converse
analysis, we denote by \(\Psi_m(v)\) the spectral coordinate
vector used in a given empirical run. For compactness, the
empirical map is written as
\begin{equation}
\label{eq:exp_observation_map}
F(v)=\bigl(D_{\mathcal A}(v),Q_{\eta}(\Psi_m(v))\bigr),
\end{equation}
where \(D_{\mathcal A}(v)\) denotes the anchor-distance
profile. Depending on the experiment, \(\Psi_m(v)\) is
instantiated as actual Laplacian-energy coordinates, signed
Laplacian coordinates, or Gaussian-wave surrogate coordinates.
Unless otherwise specified, anchors are sampled uniformly
without replacement, and quantization is performed
coordinatewise by floor binning,
\begin{equation}
Q_{\eta}(x)
=
\bigl(\lfloor x_1/\eta\rfloor,\ldots,\lfloor x_p/\eta\rfloor\bigr),
\qquad x\in\mathbb R^p .
\end{equation}

\paragraph{Synthetic graph diagnostics}
The synthetic experiments cover random regular graphs,
Erd\H{o}s--R\'enyi graphs, stochastic block models, grids,
and barbell graphs. The main localization sweep uses graph
sizes around \(n\in\{500,1000,2000\}\), with grids instantiated
at \(n\in\{529,1024,2025\}\). The parameter grid is
\[
\begin{aligned}
k &\in \{1,2,3,4,5,6,8,12,16,24,32\},\\
m &\in \{1,2,3,4,6,8,12,16\},\\
\eta &\in \{0.8,0.6,0.5,0.4,0.3,0.2,0.1\}.
\end{aligned}
\]
Distance-only, spectral-only, and hybrid encodings are compared under
both Laplacian and Gaussian-wave spectral coordinates. A separate
collision-transfer sweep uses
\[
k\in\{2,4,8,16,32\},
\]
\[
m\in\{2,4,8,16\},
\]
\[
\eta\in\{0.7,0.5,0.3,0.1\},
\]
and records the collision identity
\begin{equation}
\label{eq:exp_collision_identity}
\kappa_H=\kappa_D\kappa_{S|D},
\qquad
I_H=-\log\kappa_H .
\end{equation}
For rows with \(\kappa_D>0\), this is equivalently
\(I_H=-\log\kappa_D-\log\kappa_{S|D}\). When
\(\kappa_D=0\), there are no distance-colliding ordered
pairs, so the ordinary conditional estimator of
\(\kappa_{S|D}\) has no denominator. We mark these cases as
distance-saturated and use the convention
\[
\kappa_{S|D}=0,\qquad
\kappa_H=0,\qquad
I_H=+\infty .
\]
Thus, such rows represent successful distance-level saturation
rather than numerical failure. Finite discrepancy summaries
either exclude these infinite-information rows or report them
separately as saturated configurations.

For the main Laplacian-energy hybrid setting, we additionally
perform graph-clustered and held-out validation.  All
\((k,m,\eta)\) configurations generated from the same graph instance
are treated as one cluster.  Thresholds are calibrated only on
disjoint graph groups and evaluated under leave-one-family-out,
leave-one-setting-out, leave-one-size-out, and size-extrapolation
protocols.

\paragraph{Universal Dependencies structural task probes}
We conduct real-graph structural task probes on Universal Dependencies
trees.  Tokens are nodes, head-dependent arcs are converted into
undirected edges, and lexical forms, dependency labels, edge directions,
and token attributes are excluded.  We use English-EWT, Chinese-GSD,
Spanish-GSD, French-GSD, and German-GSD, filter sentences to
\(6\le n\le 80\), and use at most \(3000\) training sentences per
treebank together with all filtered development and test sentences.
These probes measure structural information in the positional code and
should not be interpreted as full dependency parsing.

The UD comparison includes four encoding families:
\[
\begin{aligned}
\text{NoPE}:&\quad F(v)=\mathrm{const},\\
\text{Distance}:&\quad F(v)=D_{\mathcal A}(v),\\
\text{SpectralEnergy}:&\quad F(v)=Q_\eta(\Psi_m(v)),\\
\text{HybridEnergy}:&\quad F(v)=\bigl(D_{\mathcal A}(v),Q_\eta(\Psi_m(v))\bigr).
\end{aligned}
\]
The UD grid is
\[
k\in\{1,2,4,8\},\quad
m\in\{2,4,8,16\},\quad
\eta=0.25,
\]
with three random anchor trials for each configuration.

\paragraph{Metrics}
The primary encoding-level metrics are the image-size success rate and
its induced localization error,
\begin{equation}
\label{eq:exp_success_error}
\mathrm{succ}(F)=\frac{|\mathrm{Im}(F)|}{n},
\quad
\mathrm{err}(F)=1-\mathrm{succ}(F).
\end{equation}
The synthetic diagnostics also report the normalized collision
information \(I_H/\log n\), the conservative box-budget diagnostic
\(B_{\rm conv}^{\Box}/\log n\), and Gaussian-wave surrogate
discrepancies, where
\[
B_{\rm conv}^{\Box}
=\log D(G,A)+m\log(\lceil L/\eta\rceil+1)
\]
is the budget used in the finite-sample sweeps. The simplex-refined
budget \(B_{\rm conv}^{\Delta}\) used in the theory is no larger than
\(B_{\rm conv}^{\Box}\). For UD trees, we report PE-only
structural-probe metrics in addition to encoding-level localization
quantities. The surface-position and dependency-depth probes use NMAE,
while the pairwise dependency-distance probe uses macro-F1. PE-row
derangement controls preserve sentence-level PE marginals but remove
node-code alignment. Details are given in
Appendix~\ref{app:ud_localization_diagnostics}.

\subsection{Mathematical Localization and Surrogate-Discrepancy Diagnostics}
\label{subsec:math_localization_transfer}

We first evaluate the localization mechanism underlying the theoretical
results.  For each graph and positional-encoding configuration, we compute
the optimal localization error \(\operatorname{Err}^{*}(F)\), the
conservative box-budget diagnostic \(B_{\rm conv}^{\Box}\), and the hybrid
collision information
\begin{equation}
I_H
=
-\log \kappa_D
-\log \kappa_{S|D}.
\end{equation}
All localization results are averaged over \(10\) independent trials.

\paragraph{Localization phase transition}
Figure~\ref{fig:exp1_summary} summarizes the main localization diagnostics.
The empirical error decreases as \(I_H/\log n\) crosses the unit scale, while
the surrogate discrepancy separates expander-like graphs from bottleneck graphs.  The
design map further shows that low-error configurations are organized most
clearly by the achievability-side information ratio \(I_H/\log n\).

\begin{figure*}[t]
    \centering
    \includegraphics[width=0.95\textwidth]{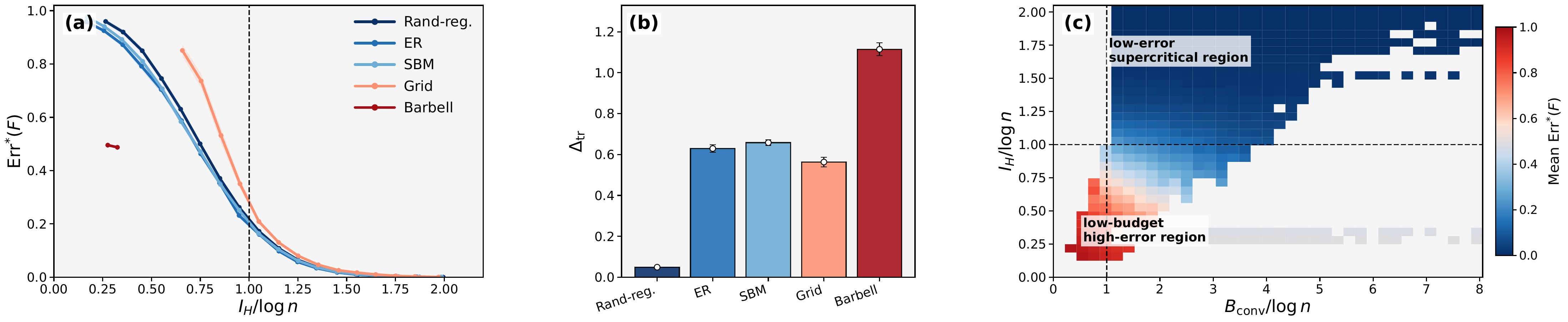}
    \caption{
    Localization diagnostics.
    (a) Error versus \(I_H/\log n\).
    (b) Gaussian-wave surrogate discrepancy \(\Delta_{\rm tr}\).
    (c) Design map in the
    \((B_{\rm conv}^{\Box}/\log n, I_H/\log n)\) plane.
    }
    \label{fig:exp1_summary}
\end{figure*}

Table~\ref{tab:exp1_summary} reports family-level statistics.  Hybrid
Laplacian-energy encodings substantially outperform both distance-only and
spectral-only encodings on random regular, ER, SBM, and grid graphs.  Their
average error reduction relative to distance-only encoding ranges from
\(63.9\%\) to \(80.6\%\).  Barbell graphs provide a contrasting failure case:
their hybrid error remains close to the distance-only error, their information
level remains subcritical, and their surrogate discrepancy is the largest.

\begin{table}[t]
\centering
\scriptsize
\setlength{\tabcolsep}{2.0pt}
\caption{
Family-level localization summary.
D, S, and H denote distance-only, spectral-only, and hybrid encodings.
}
\label{tab:exp1_summary}
\resizebox{\columnwidth}{!}{%
\begin{tabular}{lcccccc}
\toprule
Graph
& D Err.
& S Err.
& H Err.
& H \(I_H/\log n\)
& \(\Delta_{\rm tr}\)
& \(\Delta\)Err. \\
\midrule
Rand-reg.
& 0.409
& 0.364
& \textbf{0.079}
& 1.710
& \textbf{0.049}
& \textbf{0.001}
\\
ER
& 0.471
& 0.513
& \textbf{0.170}
& 1.443
& 0.629
& 0.073
\\
SBM
& 0.424
& 0.475
& \textbf{0.123}
& 1.522
& 0.658
& 0.043
\\
Grid
& 0.225
& 0.721
& \textbf{0.051}
& 1.651
& 0.563
& 0.024
\\
Barbell
& 0.495
& 0.886
& 0.490
& 0.309
& \textbf{1.114}
& \textbf{0.347}
\\
\bottomrule
\end{tabular}%
}
\end{table}

\paragraph{Collision decomposition and Gaussian-wave transfer}
We next examine the collision decomposition and the Gaussian-wave transfer
diagnostic used in the achievability theory. For each configuration, we
compute the distance collision rate \(\kappa_{\mathrm D}\), the weighted
conditional spectral collision rate \(\kappa_{\mathrm{S}\mid\mathrm D}\),
and the hybrid collision rate \(\kappa_{\mathrm H}\). The implementation
follows the deterministic identity
\begin{equation}
\kappa_{\mathrm H}
=
\kappa_{\mathrm D}\kappa_{\mathrm{S}\mid\mathrm D},
\quad
I_{\mathrm H}
=
-\log \kappa_{\mathrm H}
=
-\log \kappa_{\mathrm D}
-\log \kappa_{\mathrm{S}\mid\mathrm D},
\end{equation}
with the extended-real convention \(-\log 0=+\infty\). Across
\(33{,}600\) evaluated configurations, the maximum numerical factorization
error is \(9.918\times 10^{-17}\), and the maximum discrepancy in the
additive information identity is \(3.553\times 10^{-15}\) on finite-information
configurations. No violation of the collision-to-error inequality or the
conservative box-budget converse corollary is observed.

Figure~\ref{fig:exp2_collision_transfer} summarizes the two main diagnostics.
Panel~(a) shows that localization error is organized by the information ratio
\(I_H/\log n\): the mean error drops from \(0.515\) in the low-information
regime to \(0.003\) in the high-information regime.  Panel~(b) reports the
hard Gaussian-wave surrogate discrepancy
\begin{equation}
\Delta_{\rm tr}
=
\frac{
|I_{S|D}^{\rm Lap}-I_{S|D}^{\rm GW}|
}{\log n}.
\end{equation}
The gap is smallest on expander-like random regular graphs, where it is
\(0.050\) for energy coordinates and \(0.035\) for signed coordinates.  It is
substantially larger on bottleneck graphs, especially barbell graphs, where
the corresponding gaps increase to \(1.240\) and \(1.520\).  These hard-collision discrepancies indicate closer agreement between the
Gaussian-wave surrogate and actual Laplacian coordinates on expander-like
graphs, and weaker agreement on bottleneck graphs. They do not, by
themselves, establish the conditional spectral-collision assumption for
actual coordinates. The corresponding diagnostic based on the expanded-cell
majorant is reported in Appendix~\ref{app:exp2_collision_transfer}.

\begin{figure}[!t]
    \centering
    \includegraphics[width=0.98\columnwidth]{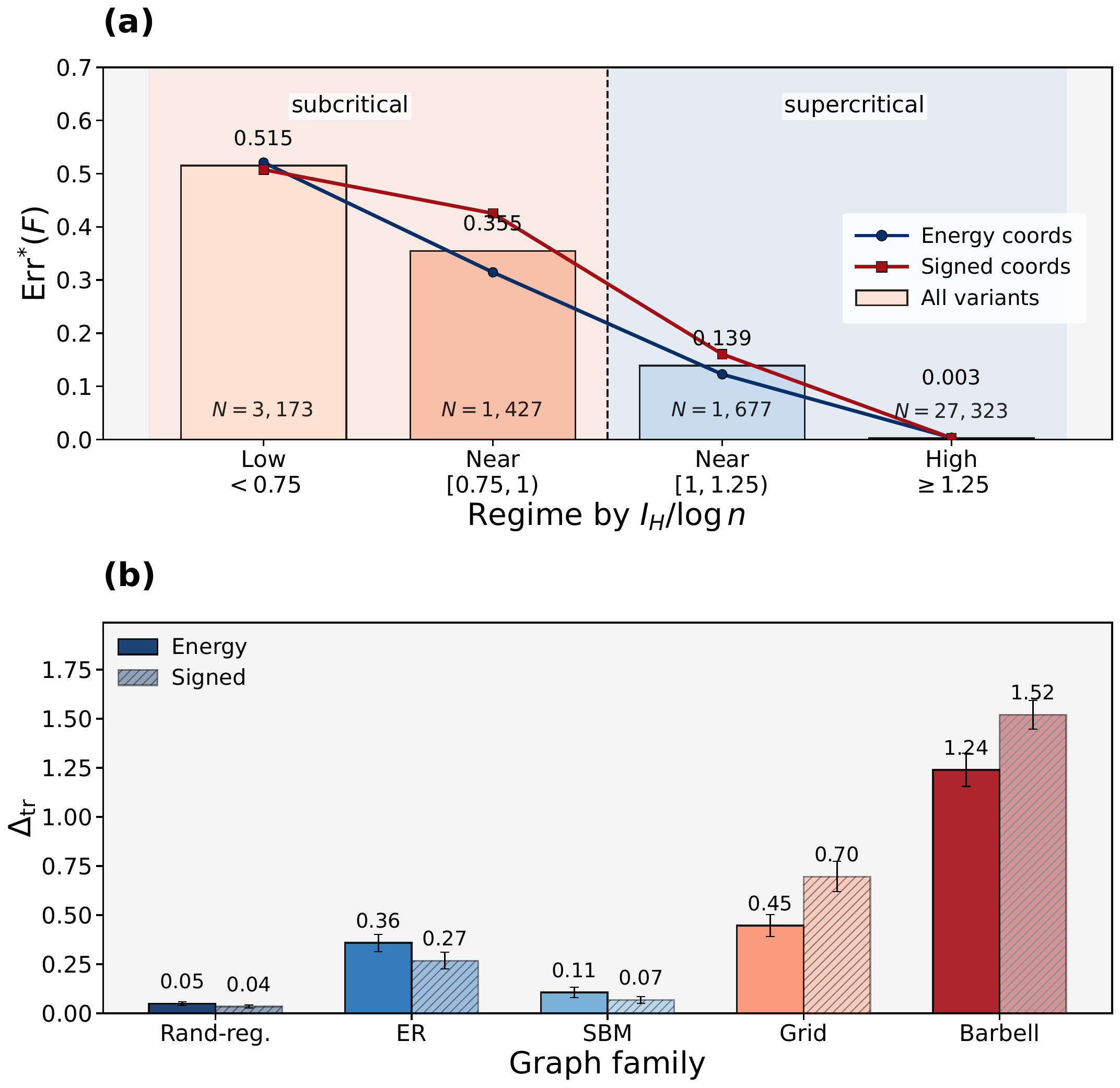}
    \vspace{-0.4em}
    \caption{
    Collision decomposition and hard Gaussian-wave transfer diagnostics.
    (a) Regime-binned localization error by \(I_H/\log n\).
    (b) Hard surrogate discrepancy
    \(\Delta_{\rm tr}
    =
    |I_{S|D}^{\rm Lap}-I_{S|D}^{\rm GW}|/\log n\)
    across graph families.
    }
    \label{fig:exp2_collision_transfer}
    \vspace{-1.0em}
\end{figure}

\paragraph{Unified graph-dependent design diagnostics}
Finally, we consolidate the localization and collision diagnostics into a
single graph-dependent design analysis.  For each configuration, we compare
the conservative converse-side box budget and the achievability-side
normalized collision information,
\begin{equation}
\frac{B_{\rm conv}^{\Box}}{\log n},
\quad
\frac{I_H}{\log n}.
\end{equation}
This directly evaluates the graph-dependent design principle: configurations
with insufficient effective code budget should remain difficult to localize,
whereas configurations with \(I_H>\log n\) should be localizing.

Table~\ref{tab:exp3_design_diagnostics} summarizes the main
Laplacian-energy hybrid setting over \(25{,}256\) configurations, covering
five graph families and seven graph settings, with graph sizes ranging from
\(n=494\) to \(n=2025\).  The achievability-side rule
\(I_H/\log n\ge1\) is highly predictive of localization success.  Using
\(\operatorname{Err}^*(F)\le0.1\) as success, it achieves precision \(0.930\)
and recall \(0.9998\).  With the relaxed threshold
\(\operatorname{Err}^*(F)\le0.2\), the precision and recall become \(0.990\)
and \(0.991\), respectively.

The converse-side rule is more conservative.  The region
\(B_{\rm conv}^{\Box}/\log n<1\) and \(I_H/\log n<1\) has mean localization
error \(0.829\) and median error \(0.850\).  The rule
\(B_{\rm conv}^{\Box}/\log n<1\) detects very high-error configurations
\(\operatorname{Err}^*(F)\ge0.9\) with recall \(0.975\), although its
precision is \(0.340\).  Overall, \(I_H/\log n\) provides a sharp
success-side diagnostic, while \(B_{\rm conv}^{\Box}/\log n\) identifies a
conservative failure-side regime.  The corresponding design-map
visualizations are reported in Appendix~\ref{app:exp3_design_diagnostics}.

\begin{table}[!t]
\centering
\caption{
Unified graph-dependent design diagnostics for the Laplacian-energy hybrid
positional encoding. We write
\(b=B_{\rm conv}^{\Box}/\log n\) and \(h=I_H/\log n\).
Succ. denotes \(\operatorname{Err}^*(F)\le0.1\), and High denotes
\(\operatorname{Err}^*(F)\ge0.9\).
}
\label{tab:exp3_design_diagnostics}

\footnotesize
\setlength{\tabcolsep}{0pt}
\renewcommand{\arraystretch}{1.08}

\begin{tabular*}{\columnwidth}{@{\extracolsep{\fill}}llrrrr@{}}
\toprule
\multicolumn{6}{@{}l}{\textbf{(a) Design-plane regions}} \\
\midrule
\(b\) & \(h\) & Count & Mean & Med. & Succ./High \\
\midrule
\(<1\)   & \(<1\)   & \(1{,}150\)  & \(0.829\) & \(0.850\) & \(0.000/0.340\) \\
\(<1\)   & \(\ge1\) & \(0\)        & \multicolumn{3}{c@{}}{No configurations observed} \\
\(\ge1\) & \(<1\)   & \(5{,}033\)  & \(0.463\) & \(0.488\) & \(0.001/0.002\) \\
\(\ge1\) & \(\ge1\) & \(19{,}073\) & \(0.019\) & \(0.001\) & \(0.930/0.000\) \\
\bottomrule
\end{tabular*}

\vspace{0.45em}

\begin{tabular*}{\columnwidth}{@{\extracolsep{\fill}}lrrrrr@{}}
\toprule
\multicolumn{6}{@{}l}{\textbf{(b) Prediction quality of design rules}} \\
\midrule
Rule / target & TP & FP & FN & Prec. & Rec. \\
\midrule
\(h\ge1,\ \operatorname{Err}^*\le0.1\)
& \(17{,}745\) & \(1{,}328\) & \(3\)   & \(0.930\) & \(0.9998\) \\
\(h\ge1,\ \operatorname{Err}^*\le0.2\)
& \(18{,}883\) & \(190\)     & \(166\) & \(0.990\) & \(0.991\) \\
\(b<1,\ \operatorname{Err}^*\ge0.9\)
& \(391\)      & \(759\)     & \(10\)  & \(0.340\) & \(0.975\) \\
\bottomrule
\end{tabular*}

\vspace{-0.5em}
\end{table}

\paragraph{Held-out and graph-clustered validation}
Because the design sweep reuses graph instances across parameter settings,
we further evaluate the Laplacian-energy hybrid setting with graph-clustered
and held-out validation.  The expanded sweep contains \(129{,}360\)
configurations from \(210\) graph-instance clusters.  Configurations from
the same graph are grouped as one cluster, and held-out thresholds are
calibrated only on disjoint training graph groups.

\begin{table}[!t]
\centering
\caption{
Held-out and graph-clustered validation for the Laplacian-energy hybrid
encoding.  The target is
\(\operatorname{Err}^{*}(F)\leq0.1\), and \(h=I_H/\log n\).
All reported scores are graph-cluster macro metrics.  ``Avg.'' denotes
the unweighted mean over held-out folds.  The barbell negative-control
fold is excluded from family and setting averages because it contains no
positive success cases.
}
\label{tab:exp3_heldout_clustered}
\scriptsize
\setlength{\tabcolsep}{2.4pt}
\resizebox{\columnwidth}{!}{%
\begin{tabular}{lccccc}
\toprule
Protocol & Threshold & AUC & Prec. & Rec. & F1 \\
\midrule
Clustered, theory rule
& \(1.0000\) & \(0.9995\) & \(0.9393\) & \(0.9999\) & \(0.9685\) \\
Clustered, calibration reference
& \(1.1545\) & \(0.9995\) & \(0.9928\) & \(0.9925\) & \(0.9926\) \\
Leave-one-family-out, Avg.
& \(1.1546\) & \(0.9993\) & \(0.9914\) & \(0.9903\) & \(0.9908\) \\
Leave-one-setting-out, Avg.
& \(1.1562\) & \(0.9995\) & \(0.9929\) & \(0.9919\) & \(0.9924\) \\
Leave-one-size-out, Avg.
& \(1.1619\) & \(0.9995\) & \(0.9933\) & \(0.9892\) & \(0.9912\) \\
Size extrapolation, Avg.
& \(1.1876\) & \(0.9994\) & \(0.9996\) & \(0.9754\) & \(0.9873\) \\
\bottomrule
\end{tabular}%
}
\vspace{-0.8em}
\end{table}

Table~\ref{tab:exp3_heldout_clustered} shows that \(I_H/\log n\)
remains a stable success-side diagnostic after clustering by graph instance
and transferring thresholds across unseen graph families, structural
settings, and graph sizes.

\subsection{Real-graph structural task probes on Universal Dependencies}
\label{subsec:ud_localization}

This experiment evaluates whether the localization advantage observed on
synthetic graphs also translates into task-relevant structural recovery on
real dependency-tree graphs.  The goal is not full dependency parsing or
language-model fine-tuning, because the positional codes are computed from
gold dependency trees.  Instead, UD is used as a controlled PE-only
structural task-probe benchmark: tokens are nodes, undirected dependency
arcs are edges, and lexical forms, dependency labels, edge directions, and
token attributes are excluded.  Thus, any predictive signal must come from
the positional code computed on the unlabeled undirected dependency-tree
skeleton.

This design keeps the experiment aligned with the theory.  The theoretical
claims concern the intrinsic resolution of a positional observation map,
whereas end-to-end downstream accuracy would mix positional information
with lexical features, model architecture, optimization, and task-specific
correlations.  The UD probes therefore serve as a middle ground between
pure localization diagnostics and full downstream benchmarks: they test
whether a code that better localizes nodes also supports recovery of
syntactic geometry on real graphs.

We use five UD treebanks, with at most \(3000\) training sentences per
treebank and the full official development and test splits after length
filtering.  The comparison includes NoPE, distance-only anchor codes,
Laplacian-energy codes, and the hybrid distance-energy code.

\paragraph{PE-only structural task protocol}
We evaluate three supervised probes using only positional codes as input.
The surface-position probe predicts normalized token order and tests
whether the undirected dependency-tree skeleton carries a weak word-order
signal.  The dependency-depth probe predicts
\begin{equation}
y_{\mathrm{depth}}(v)
=
\frac{d_G(v,r)}{\max_{u\in V} d_G(u,r)},
\end{equation}
where \(r\) is the dependency root.  This probe measures whether the code
captures node-level syntactic geometry.  The pairwise dependency-distance
probe predicts a clipped bucket of \(d_G(u,v)\) from the pair representation
\begin{equation}
\bigl[
F(u)+F(v),\,
|F(u)-F(v)|,\,
F(u)\odot F(v)
\bigr],
\end{equation}
and tests whether pairwise tree geometry can be recovered from the code.

For the added dependency-depth and pairwise-distance probes, no new
hyperparameter search is performed.  The configurations selected by the
original surface-position development protocol are frozen and reused.  This
makes the additional probes confirmatory: they test whether configurations
chosen from a weak surface-order signal also support more direct
tree-structural recovery.

\paragraph{Main UD structural-probe results}
Table~\ref{tab:ud_structural_probes} summarizes the results.  The
surface-position probe shows only modest gains: HybridEnergy reduces the
average NMAE from \(0.262\) to \(0.257\).  This is expected, since linear
word order is only partially determined by the undirected dependency-tree
skeleton.  In contrast, the depth and pairwise-distance probes show a much
clearer separation.  HybridEnergy achieves the best depth NMAE and the
best pairwise-distance macro-F1, outperforming both distance-only and
spectral-energy encodings.  This indicates that anchor distances and
Laplacian-energy coordinates capture complementary aspects of
dependency-tree geometry.

The PE-row derangement control separates node-code alignment from marginal
PE statistics.  Within each sentence, PE rows are deranged while targets
are kept fixed.  This preserves sentence-level PE statistics, feature
dimension, and code budget, but removes the alignment between a token and
its own graph code.  The alignment margin
\(\Delta_{\rm align}\) in Table~\ref{tab:ud_structural_probes} is the
real-condition gain over NoPE minus the corresponding deranged-condition
gain.  HybridEnergy has the larger alignment-specific margin across all
three probes, showing that its gains are not explained solely by the
marginal distribution of PE values.

\begin{table}[t]
\centering
\scriptsize
\setlength{\tabcolsep}{2.4pt}
\caption{
UD structural task probes.  Position and depth use NMAE; pairwise distance
uses macro-F1.  D, S, and H denote distance-only, spectral-energy, and
hybrid-energy encodings.  \(\Delta_{\rm align}\) is the real-minus-null
alignment margin from PE-row derangement.
}
\label{tab:ud_structural_probes}
\resizebox{\columnwidth}{!}{%
\begin{tabular}{llcccccc}
\toprule
Probe
& Metric
& NoPE
& D
& S
& H
& \(\Delta_{\rm align}^{S}\)
& \(\Delta_{\rm align}^{H}\)
\\
\midrule
Position
& NMAE \(\downarrow\)
& 0.262
& 0.260
& 0.258
& \textbf{0.257}
& 0.00011
& \textbf{0.00097}
\\
Depth
& NMAE \(\downarrow\)
& 0.224
& 0.190
& 0.196
& \textbf{0.173}
& 0.0347
& \textbf{0.0509}
\\
Pairwise dist.
& Macro-F1 \(\uparrow\)
& 0.074
& 0.642
& 0.532
& \textbf{0.812}
& 0.3919
& \textbf{0.5881}
\\
\bottomrule
\end{tabular}%
}
\end{table}

\paragraph{Design ablations on UD localization}
Table~\ref{tab:exp4_ud_ablation} reports UD design diagnostics.  These
ablations are not additional downstream benchmarks; rather, they test how
the structural-probe behavior changes under the design variables suggested
by the theory.

Anchor placement has the strongest effect.  Root and farthest anchors both
improve over random anchors, with farthest anchors giving the best
localization and the best surface-position recovery.  This is consistent
with the role of anchor profiles in reducing distance-bucket ambiguity.
The signed-coordinate block separates within-tree collision control from
cross-sentence probe stability.  Signed Laplacian coordinates greatly
reduce collisions and increase \(I_H/\log n\), but they do not improve
prediction accuracy.  This suggests that low collision inside a single
tree is not sufficient for cross-sentence probing: the coordinate system
must also be stable enough across different trees.

The quantization block follows the expected diagnostic trend.  Coarser
\(\eta\) reduces \(I_H/\log n\) and slightly increases collision error,
while probe performance remains stable for \(\eta\leq 0.5\) and weakens
mildly at \(\eta=1.0\).  Additional protocol details,
configuration-level diagnostics, and the compact derangement table are
provided in Appendix~\ref{app:ud_localization_diagnostics}.

\begin{table}[t]
\centering
\scriptsize
\setlength{\tabcolsep}{2.8pt}
\caption{
UD localization ablations.  Anchor and sign blocks use three treebanks.
Quantization reports probe metrics on two treebanks and diagnostic metrics on
five treebanks.  NMAE and \(\mathrm{Err}^*(F)\) are lower better; other
metrics are higher better.
}
\label{tab:exp4_ud_ablation}
\resizebox{\columnwidth}{!}{%
\begin{tabular}{llcccccc}
\toprule
Ablation
& Setting
& Lang.
& NMAE
& \(\tau\)
& \(I_H/\log n\)
& \(\mathrm{Err}^*(F)\)
& \(B_{\mathrm{conv}}/\log n\)
\\
\midrule
Anchor
& random-H
& 3
& 0.261
& 0.088
& 3.687
& 0.030
& 13.90
\\
& root-H
& 3
& 0.241
& 0.214
& 3.612
& 0.032
& 13.89
\\
& farthest-H
& 3
& \textbf{0.188}
& \textbf{0.528}
& 3.846
& \textbf{0.024}
& 13.90
\\
\midrule
Sign
& S-energy
& 3
& \textbf{0.260}
& \textbf{0.100}
& 1.994
& 0.133
& 12.95
\\
& S-signed
& 3
& 0.261
& 0.077
& 4.889
& 0.005
& 12.95
\\
& H-energy
& 3
& 0.261
& 0.088
& 3.687
& 0.030
& 13.90
\\
& H-signed
& 3
& 0.262
& 0.082
& \textbf{4.901}
& \textbf{0.004}
& 13.90
\\
\midrule
Quant.
& H, \(\eta=0.125\)
& 2/5
& \textbf{0.260}
& 0.096
& \textbf{3.338}
& \textbf{0.042}
& 17.17
\\
& H, \(\eta=0.25\)
& 2/5
& 0.260
& \textbf{0.097}
& 3.335
& 0.042
& 14.09
\\
& H, \(\eta=0.5\)
& 2/5
& 0.260
& 0.095
& 3.329
& 0.043
& 11.14
\\
& H, \(\eta=1.0\)
& 2/5
& 0.261
& 0.087
& 3.313
& 0.044
& 8.41
\\
\bottomrule
\end{tabular}%
}
\end{table}

\section{Conclusions and Limitations}
\label{SE9}

\subsection{Discussions}

This paper studied node localization from hybrid graph positional
encodings that combine anchor-distance profiles with quantized
low-frequency Laplacian-energy coordinates.  By viewing the encoding as
an observation map, we separated intrinsic positional identifiability
from downstream architectures, attributes, and task-specific features.
The normalized converse identifies subcritical code budgets, while the
collision analysis gives \(\kappa_H=\kappa_D\kappa_{S|D}\) and the
information measure \(I_H\).  Experiments show that \(I_H/\log n\)
calibrates localization success, and UD structural task probes further
show that this localization information supports recovery of
syntactic-tree geometry, especially dependency depth and pairwise tree
distance.

\subsection{Limitations}

The main theoretical limitation is that the actual-Laplacian
specialization is conditional on
Assumption~\ref{ass:actual-spectral-collision-bound}, which requires a
distance-conditioned spectral collision bound for low-frequency
Laplacian-energy coordinates.  The formulation also focuses on finite,
connected, undirected, and unweighted graphs, whereas practical graph
learning and NLP systems often include directions, labels, attributes,
and task-specific features.  Accordingly, the UD experiments should be
interpreted as PE-only structural task probes on unlabeled undirected
dependency-tree skeletons, not as full dependency parsing or
language-model fine-tuning.  Repeated or nearly repeated eigenspaces may
also introduce basis-stability issues, even though squared energy
coordinates remove sign ambiguity.

\subsection{Future Extensions}

Future work should establish
Assumption~\ref{ass:actual-spectral-collision-bound} from quantitative
eigenvector universality or related local laws, and extend the
actual-coordinate analysis beyond random regular graphs to other random
and structured families, including bottleneck regimes. Another direction
is to develop basis-invariant projector-energy or canonical spectral codes
for repeated and nearly repeated eigenspaces, together with non-asymptotic,
\(I_H\)-guided choices of anchors, spectral dimension, and quantization.
Extending the framework to directed, labeled, weighted, and attributed
graphs, and testing whether encoding-level collision information predicts
gains in full downstream architectures, would connect the theory to
dependency parsing, relation classification, and graph reasoning.

\bibliographystyle{IEEEtran} 
\bibliography{references} 

@inproceedings{wang2022peg,
  title     = {Equivariant and Stable Positional Encoding for More Powerful
               Graph Neural Networks},
  author    = {Wang, Haorui and Yin, Haoteng and Zhang, Muhan and Li, Pan},
  booktitle = {International Conference on Learning Representations},
  year      = {2022}
}

@inproceedings{ma2023laplacian,
  title     = {Laplacian Canonization: A Minimalist Approach to Sign and Basis Invariant Spectral Embedding},
  author    = {Ma, George and Wang, Yifei and Wang, Yisen},
  booktitle = {Advances in Neural Information Processing Systems},
  volume    = {36},
  pages     = {11296--11337},
  year      = {2023}
}

@inproceedings{eliasof2023rfp,
  title     = {Graph Positional Encoding via Random Feature Propagation},
  author    = {Eliasof, Moshe and Frasca, Fabrizio and Bevilacqua, Beatrice
               and Treister, Eran and Chechik, Gal and Maron, Haggai},
  booktitle = {Proceedings of the 40th International Conference on Machine Learning},
  series    = {Proceedings of Machine Learning Research},
  volume    = {202},
  pages     = {9202--9223},
  year      = {2023},
  publisher = {PMLR}
}

@inproceedings{yan2024cyclenet,
  title     = {Cycle Invariant Positional Encoding for Graph Representation Learning},
  author    = {Yan, Zuoyu and Ma, Tengfei and Gao, Liangcai and Tang, Zhi
               and Chen, Chao and Wang, Yusu},
  booktitle = {Proceedings of the Second Learning on Graphs Conference},
  series    = {Proceedings of Machine Learning Research},
  volume    = {231},
  pages     = {4:1--4:21},
  year      = {2024},
  publisher = {PMLR}
}

@inproceedings{donnat2018graphwave,
  title     = {Learning Structural Node Embeddings via Diffusion Wavelets},
  author    = {Donnat, Claire and Zitnik, Marinka and Hallac, David
               and Leskovec, Jure},
  booktitle = {Proceedings of the 24th ACM SIGKDD International Conference
               on Knowledge Discovery and Data Mining},
  pages     = {1320--1329},
  year      = {2018},
  doi       = {10.1145/3219819.3220025}
}

@inproceedings{ribeiro2017struc2vec,
  title     = {struc2vec: Learning Node Representations from Structural
               Identity},
  author    = {Ribeiro, Leonardo F. R. and Saverese, Pedro H. P.
               and Figueiredo, Daniel R.},
  booktitle = {Proceedings of the 23rd ACM SIGKDD International Conference
               on Knowledge Discovery and Data Mining},
  pages     = {385--394},
  year      = {2017},
  doi       = {10.1145/3097983.3098061}
}

@inproceedings{chen2022sat,
  title     = {Structure-Aware Transformer for Graph Representation Learning},
  author    = {Chen, Dexiong and O'Bray, Leslie and Borgwardt, Karsten},
  booktitle = {Proceedings of the 39th International Conference on
               Machine Learning},
  series    = {Proceedings of Machine Learning Research},
  volume    = {162},
  pages     = {3469--3489},
  year      = {2022},
  publisher = {PMLR}
}

@inproceedings{kim2022tokengt,
  title     = {Pure Transformers are Powerful Graph Learners},
  author    = {Kim, Jinwoo and Nguyen, Dat and Min, Seonwoo and Cho, Sungjun
               and Lee, Moontae and Lee, Honglak and Hong, Seunghoon},
  booktitle = {Advances in Neural Information Processing Systems},
  volume    = {35},
  pages     = {14582--14595},
  year      = {2022}
}

@inproceedings{shirzad2023exphormer,
  title     = {Exphormer: Sparse Transformers for Graphs},
  author    = {Shirzad, Hamed and Velingker, Ameya and Venkatachalam, Balaji
               and Sutherland, Danica J. and Sinop, Ali Kemal},
  booktitle = {Proceedings of the 40th International Conference on
               Machine Learning},
  series    = {Proceedings of Machine Learning Research},
  volume    = {202},
  pages     = {31613--31632},
  year      = {2023},
  publisher = {PMLR}
}

@inproceedings{geisler2023directed,
  title     = {Transformers Meet Directed Graphs},
  author    = {Geisler, Simon and Li, Yujia and Mankowitz, Daniel J.
               and Cemgil, Ali Taylan and G{\"u}nnemann, Stephan
               and Paduraru, Cosmin},
  booktitle = {Proceedings of the 40th International Conference on
               Machine Learning},
  series    = {Proceedings of Machine Learning Research},
  volume    = {202},
  pages     = {11144--11172},
  year      = {2023},
  publisher = {PMLR}
}

@inproceedings{xu2019powerful,
  title     = {How Powerful are Graph Neural Networks?},
  author    = {Xu, Keyulu and Hu, Weihua and Leskovec, Jure
               and Jegelka, Stefanie},
  booktitle = {International Conference on Learning Representations},
  year      = {2019}
}

@inproceedings{morris2019weisfeiler,
  title     = {Weisfeiler and Leman Go Neural: Higher-Order Graph Neural
               Networks},
  author    = {Morris, Christopher and Ritzert, Martin and Fey, Matthias
               and Hamilton, William L. and Lenssen, Jan Eric
               and Rattan, Gaurav and Grohe, Martin},
  booktitle = {Proceedings of the AAAI Conference on Artificial Intelligence},
  volume    = {33},
  number    = {1},
  pages     = {4602--4609},
  year      = {2019},
  doi       = {10.1609/aaai.v33i01.33014602}
}

@article{morris2023wlstory,
  title   = {Weisfeiler and Leman Go Machine Learning:
             The Story so Far},
  author  = {Morris, Christopher and Lipman, Yaron and Maron, Haggai
             and Rieck, Bastian and Kriege, Nils M. and Grohe, Martin
             and Fey, Matthias and Borgwardt, Karsten},
  journal = {Journal of Machine Learning Research},
  volume  = {24},
  number  = {333},
  pages   = {1--59},
  year    = {2023}
}

@article{khuller1996landmarks,
  title   = {Landmarks in Graphs},
  author  = {Khuller, Samir and Raghavachari, Balaji
             and Rosenfeld, Azriel},
  journal = {Discrete Applied Mathematics},
  volume  = {70},
  number  = {3},
  pages   = {217--229},
  year    = {1996},
  doi     = {10.1016/0166-218X(95)00106-2}
}

@article{chartrand2000resolvability,
  title   = {Resolvability in Graphs and the Metric Dimension of a Graph},
  author  = {Chartrand, Gary and Eroh, Linda and Johnson, Mark A.
             and Oellermann, Ortrud R.},
  journal = {Discrete Applied Mathematics},
  volume  = {105},
  number  = {1--3},
  pages   = {99--113},
  year    = {2000},
  doi     = {10.1016/S0166-218X(00)00198-0}
}

@article{bollobas2013metric,
  title   = {Metric Dimension for Random Graphs},
  author  = {Bollob{\'a}s, B{\'e}la and Mitsche, Dieter and Pra{\l}at, Pawe{\l}},
  journal = {The Electronic Journal of Combinatorics},
  volume  = {20},
  number  = {4},
  pages   = {P1},
  year    = {2013},
  doi     = {10.37236/2639}
}

@article{dwivedi2023benchmark,
  title = {Benchmarking Graph Neural Networks},
  author = {Dwivedi, Vijay Prakash and Joshi, Chaitanya K. and Luu, Anh Tuan and Laurent, Thomas and Bengio, Yoshua and Bresson, Xavier},
  journal = {Journal of Machine Learning Research},
  volume = {24},
  number = {43},
  pages = {1--48},
  year = {2023}
}

@inproceedings{ying2021graphormer,
  title = {Do Transformers Really Perform Badly for Graph Representation?},
  author = {Ying, Chengxuan and Cai, Tianle and Luo, Shengjie and Zheng, Shuxin and Ke, Guolin and He, Di and Shen, Yanming and Liu, Tie-Yan},
  booktitle = {Advances in Neural Information Processing Systems},
  volume = {34},
  pages = {28877--28888},
  year = {2021}
}

@inproceedings{rampasek2022gps,
  title = {Recipe for a General, Powerful, Scalable Graph Transformer},
  author = {Rampasek, Ladislav and Galkin, Mikhail and Dwivedi, Vijay Prakash and Luu, Anh Tuan and Wolf, Guy and Beaini, Dominique},
  booktitle = {Advances in Neural Information Processing Systems},
  volume = {35},
  pages = {14501--14515},
  year = {2022}
}

@inproceedings{canturk2024gpse,
  title     = {Graph Positional and Structural Encoder},
  author    = {Cant{\"u}rk, Semih and Liu, Renming and Lapointe-Gagn{\'e}, Olivier
               and L{\'e}tourneau, Vincent and Wolf, Guy and Beaini, Dominique
               and Ramp{\'a}{\v{s}}ek, Ladislav},
  booktitle = {Proceedings of the 41st International Conference on Machine Learning},
  series    = {Proceedings of Machine Learning Research},
  volume    = {235},
  pages     = {5533--5566},
  year      = {2024},
  publisher = {PMLR}
}

@inproceedings{lim2023signnet,
  title = {Sign and Basis Invariant Networks for Spectral Graph Representation Learning},
  author = {Lim, Derek and Robinson, Joshua and Zhao, Lingxiao and Smidt, Tess and Sra, Suvrit and Maron, Haggai and Jegelka, Stefanie},
  booktitle = {International Conference on Learning Representations},
  year = {2023}
}

@inproceedings{huang2024stability,
  title = {On the Stability of Expressive Positional Encodings for Graphs},
  author = {Huang, Yinan and Lu, William and Robinson, Joshua and Yang, Yu and Zhang, Muhan and Jegelka, Stefanie and Li, Pan},
  booktitle = {International Conference on Learning Representations},
  year = {2024}
}

@inproceedings{li2020distance,
  title = {Distance Encoding: Design Provably More Powerful Neural Networks for Graph Representation Learning},
  author = {Li, Pan and Wang, Yanbang and Wang, Hongwei and Leskovec, Jure},
  booktitle = {Advances in Neural Information Processing Systems},
  volume = {33},
  pages = {4465--4478},
  year = {2020}
}

@inproceedings{dwivedi2022learnable,
  title = {Graph Neural Networks with Learnable Structural and Positional Representations},
  author = {Dwivedi, Vijay Prakash and Luu, Anh Tuan and Laurent, Thomas and Bengio, Yoshua and Bresson, Xavier},
  booktitle = {International Conference on Learning Representations},
  year = {2022}
}

@inproceedings{keriven2023role,
  title = {What Functions Can Graph Neural Networks Compute on Random Graphs? The Role of Positional Encoding},
  author = {Keriven, Nicolas and Vaiter, Samuel},
  booktitle = {Advances in Neural Information Processing Systems},
  volume = {36},
  year = {2023}
}

@inproceedings{li2024generalization,
  title = {What Improves the Generalization of Graph Transformers? A Theoretical Dive into the Self-Attention and Positional Encoding},
  author = {Li, Hongkang and Wang, Meng and Ma, Tengfei and Liu, Sijia and Zhang, Zaixi and Chen, Pin-Yu},
  booktitle = {Proceedings of the 41st International Conference on Machine Learning},
  series = {Proceedings of Machine Learning Research},
  volume = {235},
  pages = {28784--28829},
  year = {2024}
}

@inproceedings{marcheggiani2017encoding,
  title = {Encoding Sentences with Graph Convolutional Networks for Semantic Role Labeling},
  author = {Marcheggiani, Diego and Titov, Ivan},
  booktitle = {Proceedings of the 2017 Conference on Empirical Methods in Natural Language Processing},
  pages = {1506--1515},
  year = {2017}
}

@inproceedings{sinha2019clutrr,
  title = {{CLUTRR}: A Diagnostic Benchmark for Inductive Reasoning from Text},
  author = {Sinha, Koustuv and Sodhani, Shagun and Dong, Jin and Pineau, Joelle and Hamilton, William L.},
  booktitle = {Proceedings of the 2019 Conference on Empirical Methods in Natural Language Processing and the 9th International Joint Conference on Natural Language Processing},
  pages = {4506--4515},
  year = {2019}
}

@article{demarneffe2021universal,
  title = {Universal Dependencies},
  author = {de Marneffe, Marie-Catherine and Manning, Christopher D. and Nivre, Joakim and Zeman, Daniel},
  journal = {Computational Linguistics},
  volume = {47},
  number = {2},
  pages = {255--308},
  year = {2021}
}

@article{belkin2003laplacian,
  title = {Laplacian Eigenmaps for Dimensionality Reduction and Data Representation},
  author = {Belkin, Mikhail and Niyogi, Partha},
  journal = {Neural Computation},
  volume = {15},
  number = {6},
  pages = {1373--1396},
  year = {2003},
  doi = {10.1162/089976603321780317}
}

@inproceedings{kreuzer2021rethinking,
  title = {Rethinking Graph Transformers with Spectral Attention},
  author = {Kreuzer, Devin and Beaini, Dominique and Hamilton, William L. and L{\'e}tourneau, Vincent and Tossou, Prudencio},
  booktitle = {Advances in Neural Information Processing Systems},
  volume = {34},
  pages = {21618--21629},
  year = {2021},
  url = {https://proceedings.neurips.cc/paper_files/paper/2021/hash/b4fd1d2cb085390fbbadae65e07876a7-Abstract.html}
}

@inproceedings{you2019position,
  title = {Position-aware Graph Neural Networks},
  author = {You, Jiaxuan and Ying, Rex and Leskovec, Jure},
  booktitle = {Proceedings of the 36th International Conference on Machine Learning},
  series = {Proceedings of Machine Learning Research},
  volume = {97},
  pages = {7134--7143},
  publisher = {PMLR},
  year = {2019},
  url = {https://proceedings.mlr.press/v97/you19b.html}
}

@article{park2022grpe,
  title = {{GRPE}: Relative Positional Encoding for Graph Transformer},
  author = {Park, Wonpyo and Chang, Woonggi and Lee, Donggeon and Kim, Juntae and Hwang, Seung-won},
  journal = {arXiv preprint arXiv:2201.12787},
  year = {2022},
  url = {https://arxiv.org/abs/2201.12787}
}

@article{bruelgabrielsson2023rewiring,
  title = {Rewiring with Positional Encodings for Graph Neural Networks},
  author = {Br{\"u}el-Gabrielsson, Rickard and Yurochkin, Mikhail and Solomon, Justin},
  journal = {Transactions on Machine Learning Research},
  year = {2023},
  url = {https://openreview.net/forum?id=dn3ZkqG2YV}
}

@inproceedings{ma2023grit,
  title = {Graph Inductive Biases in Transformers without Message Passing},
  author = {Ma, Liheng and Lin, Chen and Lim, Derek and Romero-Soriano, Adriana and Dokania, Puneet K. and Coates, Mark and Torr, Philip H. S. and Lim, Ser-Nam},
  booktitle = {Proceedings of the 40th International Conference on Machine Learning},
  series = {Proceedings of Machine Learning Research},
  volume = {202},
  pages = {23321--23337},
  publisher = {PMLR},
  year = {2023},
  url = {https://proceedings.mlr.press/v202/ma23c.html}
}

@inproceedings{black2024comparing,
  title = {Comparing Graph Transformers via Positional Encodings},
  author = {Black, Mitchell and Wan, Zhengchao and Mishne, Gal and Nayyeri, Amir and Wang, Yusu},
  booktitle = {Proceedings of the 41st International Conference on Machine Learning},
  series = {Proceedings of Machine Learning Research},
  volume = {235},
  pages = {4103--4139},
  publisher = {PMLR},
  year = {2024},
  url = {https://proceedings.mlr.press/v235/black24b.html}
}

@inproceedings{mathieu2021simple,
  title = {A Simple Algorithm for Graph Reconstruction},
  author = {Mathieu, Claire and Zhou, Hang},
  booktitle = {29th Annual European Symposium on Algorithms},
  series = {Leibniz International Proceedings in Informatics},
  volume = {204},
  pages = {68:1--68:18},
  publisher = {Schloss Dagstuhl -- Leibniz-Zentrum f{\"u}r Informatik},
  year = {2021},
  doi = {10.4230/LIPIcs.ESA.2021.68},
  url = {https://drops.dagstuhl.de/entities/document/10.4230/LIPIcs.ESA.2021.68}
}

@inproceedings{slater1975leaves,
  title = {Leaves of Trees},
  author = {Slater, Peter J.},
  booktitle = {Proceedings of the Sixth Southeastern Conference on Combinatorics, Graph Theory, and Computing},
  series = {Congressus Numerantium},
  volume = {14},
  pages = {549--559},
  year = {1975}
}

@article{harary1976metric,
  title = {On the Metric Dimension of a Graph},
  author = {Harary, Frank and Melter, Robert A.},
  journal = {Ars Combinatoria},
  volume = {2},
  pages = {191--195},
  year = {1976}
}

@misc{yan2026informationtheoreticlimitsnodelocalization,
      title={Information-Theoretic Limits of Node Localization under Hybrid Graph Positional Encodings}, 
      author={Zimo Yan and Zheng Xie and Chang Liu and Yiqin Lv and Runfan Duan},
      year={2026},
      eprint={2603.25030},
      archivePrefix={arXiv},
      primaryClass={cs.IT},
      url={https://arxiv.org/abs/2603.25030}, 
}

\clearpage

\appendices

\newtheorem*{lemmaR}{Lemma}
\newtheorem*{theoremR}{Theorem}
\newtheorem*{propositionR}{Proposition}
\newtheorem*{corollaryR}{Corollary}
\newtheorem*{assumptionR}{Assumption}
\newtheorem*{definitionR}{Definition}
\section{Theoretical Derivations}
\label{app:proofs}

This appendix provides the full theoretical derivations for the results stated in Section~\ref{SE5}. 
For readability, each result is restated before its proof. 

\subsection{Proofs for the Normalized Refined Converse}
\label{app:conv}
We first prove
the deterministic image-size bound and then specialize it to random
regular graphs using the standard logarithmic diameter control.
\subsubsection{Proof of Theorem~\ref{thm:normalized-refined-converse}}
\label{app:proof-normalized-refined-converse}

\begin{theoremR}[Restatement of Theorem~\ref{thm:normalized-refined-converse}]
For any finite connected graph $G=(V,E)$, anchor set
$\mathcal A\subseteq V$, spectral dimension $m\in\{1,\dots,n-1\}$,
quantization level $\eta>0$, and trimming threshold $L>0$,
\begin{equation}
\bigl|\operatorname{Im}(F_{G,\mathcal A}^{(m,\eta)})\bigr|
\le
D(G,\mathcal A)
\binom{\lfloor L/\eta\rfloor+m}{m}
+
\frac{mn}{L}.
\end{equation}
Consequently,
\begin{equation}
\operatorname{Err}^*
\bigl(F_{G,\mathcal A}^{(m,\eta)}\bigr)
\ge
1
-
\frac{D(G,\mathcal A)}{n}
\binom{\lfloor L/\eta\rfloor+m}{m}
-
\frac{m}{L}.
\end{equation}
\end{theoremR}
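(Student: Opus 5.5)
The plan is to split the vertex set by the trimming threshold and count the codes produced by each part separately. Let
\[
V_{\le L}=\{v\in V:S_m(v)\le L\},\qquad V_{>L}=V\setminus V_{\le L}.
\]
The image of \(F_{G,\mathcal A}^{(m,\eta)}\) is contained in \(F(V_{\le L})\cup F(V_{>L})\), so it suffices to bound these two sets.

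\textbf{The heavy vertices.} Every code in \(F(V_{>L})\) comes from some vertex, so \(|F(V_{>L})|\le|V_{>L}|\). By the mass identity \eqref{eq:spectral-total-mass}, \(\sum_v S_m(v)=mn\), which follows from \(\sum_v\phi_j(v)^2=1\) for each \(j\). Markov's inequality then gives
\[
L\,|V_{>L}|\le \sum_{v\in V_{>L}}S_m(v)\le mn,
\]
and hence \(|V_{>L}|\le mn/L\).

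\textbf{The light vertices.} For \(v\in V_{\le L}\), write \(z=Z_{m,\eta}(v)\). Each coordinate satisfies \(z_j\in\mathbb Z_{\ge0}\) because \(X_m(v)_j\ge0\). Moreover,
\[
\sum_j z_j\le \sum_j X_m(v)_j/\eta\le L/\eta,
\]
and since the left side is an integer, \(\sum_j z_j\le R:=\lfloor L/\eta\rfloor\). By stars and bars, the number of nonnegative integer vectors in \(\mathbb Z^m\) with coordinate sum at most \(R\) is \(\binom{R+m}{m}\); one adds a slack variable and counts solutions of an equation with \(m+1\) variables summing to \(R\). The distance component takes at most \(D(G,\mathcal A)\) values. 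Therefore
\[
|F(V_{\le L})|\le D(G,\mathcal A)\binom{R+m}{m}.
\]
Adding the two bounds gives \eqref{eq:normalized-converse-image}.

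\textbf{The error bound.} Divide \eqref{eq:normalized-converse-image} by \(n\) and substitute into the image identity \eqref{eq:image-identity}. This yields \eqref{eq:normalized-converse-error} directly.

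There is no real obstacle here. The only step needing care is the floor step: \(\sum_j\lfloor x_j/\eta\rfloor\le\lfloor\sum_j x_j/\eta\rfloor\le\lfloor L/\eta\rfloor\). This is what keeps the count at \(\lfloor L/\eta\rfloor\) rather than \(\lceil L/\eta\rceil\). If a repeated-eigenspace projector-energy variant is used instead, the same mass identity holds, so the argument is unchanged.
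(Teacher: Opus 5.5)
Your proposal is correct and follows the paper's proof essentially step for step. Both arguments trim at $S_m(v)\le L$, bound the heavy set by $mn/L$ via the mass identity, and count light-vertex codes with the simplex (stars-and-bars) bound $D(G,\mathcal A)\binom{\lfloor L/\eta\rfloor+m}{m}$, using $\sum_j\lfloor x_j/\eta\rfloor\le\lfloor L/\eta\rfloor$, before finishing with the image-size identity.
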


\begin{proof}
Let
\begin{equation}
S_m(v):=\sum_{j=1}^{m}X_m(v)_j .
\end{equation}
By orthonormality of the Laplacian eigenbasis,
\begin{equation}
\sum_{v\in V}S_m(v)=mn.
\end{equation}
Decompose
\begin{equation}
V_{\mathrm{in}}
:=
\{v\in V:S_m(v)\le L\},
\quad
V_{\mathrm{out}}
:=
V\setminus V_{\mathrm{in}}.
\end{equation}
Then
\begin{equation}
\bigl|\operatorname{Im}(F_{G,\mathcal A}^{(m,\eta)})\bigr|
\le
\bigl|F_{G,\mathcal A}^{(m,\eta)}(V_{\mathrm{in}})\bigr|
+
|V_{\mathrm{out}}|.
\end{equation}

We first bound the trimmed part. Since $S_m(v)>L$ for
$v\in V_{\mathrm{out}}$,
\begin{equation}
L|V_{\mathrm{out}}|
\le
\sum_{v\in V_{\mathrm{out}}}S_m(v)
\le
\sum_{v\in V}S_m(v)
=
mn.
\end{equation}
Hence
\begin{equation}
|V_{\mathrm{out}}|\le \frac{mn}{L}.
\end{equation}

It remains to count the possible codes on $V_{\mathrm{in}}$.
For $v\in V_{\mathrm{in}}$, write
\begin{equation}
q(v):=Q_\eta(X_m(v))\in\mathbb Z_{\ge0}^{m}.
\end{equation}
Since $X_m(v)_j\ge0$ and $\sum_{j=1}^{m}X_m(v)_j\le L$,
\begin{equation}
\sum_{j=1}^{m}q(v)_j
=
\sum_{j=1}^{m}
\left\lfloor\frac{X_m(v)_j}{\eta}\right\rfloor
\le
\left\lfloor
\frac{1}{\eta}\sum_{j=1}^{m}X_m(v)_j
\right\rfloor
\le
\left\lfloor\frac{L}{\eta}\right\rfloor .
\end{equation}
Set
\begin{equation}
R:=\left\lfloor\frac{L}{\eta}\right\rfloor .
\end{equation}
Thus the quantized spectral code belongs to
\begin{equation}
\mathcal Q_{m,R}
:=
\left\{
q\in\mathbb Z_{\ge0}^{m}:
\sum_{j=1}^{m}q_j\le R
\right\}.
\end{equation}
By the standard stars-and-bars count,
\begin{equation}
|\mathcal Q_{m,R}|=\binom{R+m}{m}.
\end{equation}
The distance component takes at most $D(G,\mathcal A)$
values. Therefore,
\begin{equation}
\bigl|F_{G,\mathcal A}^{(m,\eta)}(V_{\mathrm{in}})\bigr|
\le
D(G,\mathcal A)
\binom{\lfloor L/\eta\rfloor+m}{m}.
\end{equation}
Combining the estimates gives
\begin{equation}
\bigl|\operatorname{Im}(F_{G,\mathcal A}^{(m,\eta)})\bigr|
\le
D(G,\mathcal A)
\binom{\lfloor L/\eta\rfloor+m}{m}
+
\frac{mn}{L}.
\end{equation}

Finally, the observation-map identity gives
\begin{equation}
\operatorname{Err}^*
\bigl(F_{G,\mathcal A}^{(m,\eta)}\bigr)
=
1-
\frac{
\bigl|\operatorname{Im}(F_{G,\mathcal A}^{(m,\eta)})\bigr|
}{n}.
\end{equation}
Substituting the image-size bound proves the stated error
lower bound.
\end{proof}

\subsubsection{Random-regular subcritical regime}
\label{app:proof-random-regular-subcritical}

\begin{corollaryR}[Random-regular subcritical regime]
\label{cor:random-regular-subcritical}
Fix $r\ge 3$. Suppose $G_n\sim\mathcal{G}_{n,r}$ is the
uniform random $r$-regular graph on $n$ vertices, and
$\mathcal A_n\subseteq V(G_n)$ is sampled uniformly without
replacement, independently of $G_n$, with $|\mathcal A_n|=k_n$.
Then there exists a constant $C_r>0$, depending only on $r$,
such that for any $L_n>0$,
\begin{equation}
\resizebox{\columnwidth}{!}{$
\operatorname{Err}^*
\bigl(F_{G_n,\mathcal A_n}^{(m_n,\eta_n)}\bigr)
\ge
1
-
\frac{(C_r\log n+1)^{k_n}}{n}
\binom{\lfloor L_n/\eta_n\rfloor+m_n}{m_n}
-
\frac{m_n}{L_n}
$}
\end{equation}
with probability tending to one as $n\to\infty$.

In particular, if
\begin{equation}
\frac{m_n}{L_n}\to 0
\end{equation}
and
\begin{equation}
(C_r\log n+1)^{k_n}
\binom{\lfloor L_n/\eta_n\rfloor+m_n}{m_n}
=
o(n),
\end{equation}
then
\begin{equation}
\operatorname{Err}^*
\bigl(F_{G_n,\mathcal A_n}^{(m_n,\eta_n)}\bigr)
\xrightarrow{\mathbb P}1.
\end{equation}

Equivalently, choosing $L_n=\log\log n$, the preceding
condition is implied by
\begin{equation}
m_n=o(\log\log n)
\end{equation}
and, for some $\varepsilon>0$,
\begin{equation}
\resizebox{\columnwidth}{!}{$
k_n\log(C_r\log n+1)
+
\log
\binom{
\left\lfloor
\frac{\log\log n}{\eta_n}
\right\rfloor
+
m_n
}{m_n}
\le
(1-\varepsilon)\log n
$}
\end{equation}
for all sufficiently large $n$.

Moreover, if $0<\eta_n\le 1$ for all sufficiently large $n$,
then the above logarithmic budget is implied, up to lower-order
terms absorbed by the slack $\varepsilon\log n$, by
\begin{equation}
k_n\log\log n
+
m_n\log\frac{1}{\eta_n}
+
m_n\log\log\log n
\le
(1-\varepsilon)\log n.
\end{equation}
\end{corollaryR}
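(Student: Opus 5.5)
The plan is to feed a high-probability bound on $D(G_n,\mathcal A_n)$ into Theorem~\ref{thm:normalized-refined-converse}. That theorem already gives the deterministic lower bound for every realization, so the only randomness enters through $D(G_n,\mathcal A_n)$.

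\textbf{Step 1: bound the number of distance profiles.} I will use the standard diameter estimate for random regular graphs (Bollob\'as--de la Vega). For fixed $r\ge3$ there is a constant $C_r$ such that $\operatorname{diam}(G_n)\le C_r\log n$ with probability tending to one; one may take $C_r=2/\log(r-1)$ with a little slack. The estimate also needs connectivity, which holds with high probability for $r\ge3$. On this event every coordinate $d_G(v,a_i)$ lies in $\{0,1,\dots,\lfloor C_r\log n\rfloor\}$. Hence $D(G_n,\mathcal A_n)\le(C_r\log n+1)^{k_n}$, whatever the anchors are, so independence of $\mathcal A_n$ is not even needed here. Substituting into \eqref{eq:normalized-converse-error} gives the first display with probability $1-o(1)$, simultaneously for all $L_n>0$. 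If connectivity fails, the theorem's hypothesis is not met, but that event is absorbed into the $o(1)$.

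\textbf{Step 2: the ``in particular'' claim.} On the good event, the two subtracted terms are $o(1)$ by assumption. So $\operatorname{Err}^*\ge1-o(1)$ with probability tending to one, and since $\operatorname{Err}^*\le1$ this is convergence in probability to $1$.

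\textbf{Step 3: the choice $L_n=\log\log n$.} Then $m_n=o(\log\log n)$ gives $m_n/L_n\to0$. The displayed budget inequality says the product of the two factors is at most $n^{1-\varepsilon}=o(n)$, so Step 2 applies.

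\textbf{Step 4: the simplified budget when $\eta_n\le1$.} I will use $\binom{R+m}{m}\le\bigl(e(R+m)/m\bigr)^m$, or more crudely $(R+1)^m$, with $R=\lfloor\log\log n/\eta_n\rfloor$. Since $\eta_n\le1$ and $m_n=o(\log\log n)$, we have $R+m_n\le2\log\log n/\eta_n$ for large $n$. Taking logarithms gives
\[
\log\binom{R+m_n}{m_n}\le m_n\log\tfrac{1}{\eta_n}+m_n\log\log\log n+O(m_n).
\]
Similarly, $k_n\log(C_r\log n+1)=k_n\log\log n+O(k_n)$.

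\textbf{Main obstacle.} The hardest point is justifying that the $O(k_n)+O(m_n)$ error terms are absorbed into the slack, which is exactly the ``up to lower-order terms'' caveat. The $O(m_n)$ term is harmless because $m_n=o(\log\log n)$. The $O(k_n)$ term is handled through $k_n\log\log n\le\log n$: this forces $k_n=O(\log n/\log\log n)$, so $O(k_n)$ is at most $o(\log n)$ relative to the main term. To make this rigorous, I will run the argument with $\varepsilon/2$ in place of $\varepsilon$. Concretely, the simplified inequality at level $\varepsilon$ implies the exact logarithmic budget at level $\varepsilon/2$ for large $n$, and Step 3 then applies.
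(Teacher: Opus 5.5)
Your proposal is correct and follows essentially the same route as the paper. The shared steps are a high-probability connectivity-and-diameter event giving $D(G_n,\mathcal A_n)\le(C_r\log n+1)^{k_n}$, substitution into Theorem~\ref{thm:normalized-refined-converse}, the choice $L_n=\log\log n$, and the box bound $\binom{R+m}{m}\le(R+1)^m$ with the $O(k_n)+O(m_n)$ errors absorbed into the slack. You also note that the simplified budget forces $k_n=O(\log n/\log\log n)$, so that passing from $\varepsilon$ to $\varepsilon/2$ suffices; this is exactly the justification the paper leaves implicit in ``absorbed by reducing the slack.''
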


\begin{proof}
We use the standard diameter estimate for fixed-degree random
regular graphs. Since $r\ge 3$ is fixed, there exists a constant
$C_r>0$, depending only on $r$, such that
\begin{equation}
\mathbb P\bigl(
G_n \text{ is connected and } \operatorname{diam}(G_n)\le C_r\log n
\bigr)
\to 1 .
\end{equation}
Let this high-probability event be denoted by $\mathcal E_n$.

On $\mathcal E_n$, each coordinate of the anchor-distance
profile takes values in
\begin{equation}
\{0,1,\dots,\operatorname{diam}(G_n)\}.
\end{equation}
Hence
\begin{equation}
D(G_n,\mathcal A_n)
\le
\bigl(\operatorname{diam}(G_n)+1\bigr)^{k_n}
\le
(C_r\log n+1)^{k_n}.
\end{equation}
Applying Theorem~\ref{thm:normalized-refined-converse} on
$\mathcal E_n$ gives
\begin{equation}
\resizebox{\columnwidth}{!}{$
\operatorname{Err}^*
\bigl(F_{G_n,\mathcal A_n}^{(m_n,\eta_n)}\bigr)
\ge
1
-
\frac{(C_r\log n+1)^{k_n}}{n}
\binom{\lfloor L_n/\eta_n\rfloor+m_n}{m_n}
-
\frac{m_n}{L_n}.
$}
\end{equation}
Since $\mathbb P(\mathcal E_n)\to1$, the first claim follows.

For the convergence statement, define
\begin{equation}
a_n
:=
\frac{(C_r\log n+1)^{k_n}}{n}
\binom{\lfloor L_n/\eta_n\rfloor+m_n}{m_n},
\quad
b_n
:=
\frac{m_n}{L_n}.
\end{equation}
The assumptions give $a_n\to0$ and $b_n\to0$. Thus, for any
fixed $\delta>0$, with probability tending to one,
\begin{equation}
\operatorname{Err}^*
\bigl(F_{G_n,\mathcal A_n}^{(m_n,\eta_n)}\bigr)
\ge
1-a_n-b_n
>
1-\delta .
\end{equation}
This proves
\begin{equation}
\operatorname{Err}^*
\bigl(F_{G_n,\mathcal A_n}^{(m_n,\eta_n)}\bigr)
\xrightarrow{\mathbb P}1.
\end{equation}

It remains to justify the logarithmic sufficient conditions.
Set $L_n=\log\log n$. Then $m_n=o(\log\log n)$ implies
$m_n/L_n\to0$. Moreover,
\begin{equation}
\begin{aligned}
&(C_r\log n+1)^{k_n}
\binom{
\left\lfloor
\frac{\log\log n}{\eta_n}
\right\rfloor
+
m_n
}{m_n}\\
=&
\exp\left\{
k_n\log(C_r\log n+1)
+
\log
\binom{
\left\lfloor
\frac{\log\log n}{\eta_n}
\right\rfloor
+
m_n
}{m_n}
\right\}.
\end{aligned}
\end{equation}
Therefore, the displayed logarithmic budget with positive slack
implies the required $o(n)$ condition.

Finally, since
\begin{equation}
\binom{R+m}{m}\le (R+1)^m
\end{equation}
for all $R,m\ge0$, the simplex-refined budget is no larger than
the previous box budget. Hence, for $0<\eta_n\le1$ and
$L_n=\log\log n$,
\begin{equation}
\log
\binom{
\left\lfloor
\frac{\log\log n}{\eta_n}
\right\rfloor
+
m_n
}{m_n}
\le
m_n
\log
\left(
\left\lceil
\frac{\log\log n}{\eta_n}
\right\rceil
+1
\right).
\end{equation}
As in the box-counting bound,
\begin{equation}
\resizebox{\columnwidth}{!}{$
m_n
\log
\left(
\left\lceil
\frac{\log\log n}{\eta_n}
\right\rceil
+1
\right)
\le
m_n\log\frac{1}{\eta_n}
+
m_n\log\log\log n
+
O(m_n).
$}
\end{equation}
Also,
\begin{equation}
\log(C_r\log n+1)
\le
\log\log n+O(1).
\end{equation}
The remaining $O(k_n)+O(m_n)$ terms are $o(\log n)$ under
the stated budget and $m_n=o(\log\log n)$, and can be absorbed
by reducing the slack. This proves the simplified sufficient
condition.
\end{proof}

\subsection{Proofs for Collision-Based Achievability}
\label{app:collision}

This appendix proves the collision identities used in
Section~\ref{subsec:collision-based-achievability}. The key point is that
hybrid collisions decompose exactly into distance collisions and residual
spectral collisions inside distance buckets.
\subsubsection{Proof of Lemma~\ref{lem:basic-collision-lemma}}
\label{app:proof-basic-collision-lemma}

\begin{lemmaR}[Restatement of Lemma~\ref{lem:basic-collision-lemma}]
For any observation map $F:V\to\mathcal Y$ on a finite set $V$ with $|V|=n$,
\begin{equation}
\operatorname{Err}^*(F)
\le
(n-1)\kappa_F.
\end{equation}
Moreover, the hybrid map satisfies the exact factorization
\begin{equation}
\kappa_{\mathrm H}
=
\kappa_{\mathrm D}
\cdot
\kappa_{\mathrm{S}\mid\mathrm D}.
\end{equation}
Consequently,
\begin{equation}
\operatorname{Err}^*
\bigl(
F_{G,\mathcal A}^{(m,\eta)}
\bigr)
\le
(n-1)
\kappa_{\mathrm D}
\kappa_{\mathrm{S}\mid\mathrm D}.
\end{equation}
\end{lemmaR}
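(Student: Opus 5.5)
The plan has three parts: bound the image size of $F$ from below by counting collisions, factor $\kappa_{\mathrm H}$ through the distance buckets, and chain the two.

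\textbf{First inequality.} We start from the observation-map identity \eqref{eq:image-identity}, $\operatorname{Err}^*(F)=1-|\operatorname{Im}(F)|/n=(n-|\operatorname{Im}(F)|)/n$. The task is then to lower-bound the image size using collisions. Partition $V$ into fibers $F^{-1}(y)$ of sizes $s_y\ge1$. Then
\[
n-|\operatorname{Im}(F)|=\sum_y (s_y-1)\le \sum_y s_y(s_y-1),
\]
since $s_y-1\le s_y(s_y-1)$ for integers $s_y\ge1$. The right-hand side counts the ordered pairs $(u,v)$ with $u\ne v$ and $F(u)=F(v)$. Sampling without replacement is uniform over the $n(n-1)$ ordered distinct pairs, so this count equals $n(n-1)\kappa_F$. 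Dividing by $n$ gives $\operatorname{Err}^*(F)\le (n-1)\kappa_F$.

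\textbf{Factorization.} The hybrid code collides exactly when both components agree. So the event $\{F(U)=F(V)\}$ is the intersection of $\{D_{\mathcal A}(U)=D_{\mathcal A}(V)\}$ and $\{Z_{m,\eta}(U)=Z_{m,\eta}(V)\}$. If $\kappa_{\mathrm D}>0$, the chain rule $\mathbb P(A\cap B)=\mathbb P(A)\,\mathbb P(B\mid A)$ gives $\kappa_{\mathrm H}=\kappa_{\mathrm D}\kappa_{\mathrm{S}\mid\mathrm D}$ directly. I would also check that this conditional probability matches the bucket formula \eqref{eq:conditional-spectral-collision}. The distance-colliding ordered pairs are precisely $\bigcup_t\{(u,v)\in B_t^2:u\ne v\}$, whose total count is $\sum_t|B_t|(|B_t|-1)$. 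The spectral-colliding subset of these is counted by the numerator, and both counts carry the common normalization $n(n-1)$, so the ratio is the conditional probability. If $\kappa_{\mathrm D}=0$, there are no distance-colliding pairs, hence no hybrid collisions, so $\kappa_{\mathrm H}=0=\kappa_{\mathrm D}\cdot 0$ under the stated convention.

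\textbf{Consequence.} Applying the first inequality with $F=F^{(m,\eta)}_{G,\mathcal A}$ and substituting the factorization yields $\operatorname{Err}^*\le (n-1)\kappa_{\mathrm D}\kappa_{\mathrm{S}\mid\mathrm D}$.

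There is no real obstacle. The only points needing care are the elementary inequality $s-1\le s(s-1)$ on positive integers and the degenerate case $\kappa_{\mathrm D}=0$, which is handled by the convention.
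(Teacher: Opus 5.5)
Your proposal is correct and follows the paper's proof in all three parts. The first part is the same fiber-size argument, $n-|\operatorname{Im}(F)|=\sum_y(s_y-1)\le\sum_y s_y(s_y-1)=n(n-1)\kappa_F$. The factorization is also the same: the paper counts within-bucket pairs with normalizer $W=\sum_t|B_t|(|B_t|-1)$, which is your chain-rule identity written out explicitly, and it handles $\kappa_{\mathrm D}=0$ with the same convention.
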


\begin{proof}
We first prove the general collision bound. Let
\begin{equation}
\mathcal Y_F:=\operatorname{Im}(F).
\end{equation}
For each \(y\in\mathcal Y_F\), define the fiber
\begin{equation}
C_y:=\{v\in V:F(v)=y\},
\end{equation}
and write
\begin{equation}
s_y:=|C_y|.
\end{equation}
Since \(\mathcal Y_F\) is the image of \(F\), each fiber is nonempty, and hence
\begin{equation}
s_y\ge 1,
\quad y\in\mathcal Y_F.
\end{equation}
Moreover, the fibers \(\{C_y\}_{y\in\mathcal Y_F}\) form a partition of \(V\), so
\begin{equation}
\sum_{y\in\mathcal Y_F}s_y=n.
\end{equation}

By the image-size identity,
\begin{equation}
\operatorname{Err}^*(F)
=
1-\frac{|\operatorname{Im}(F)|}{n}.
\end{equation}
Since
\begin{equation}
|\operatorname{Im}(F)|
=
|\mathcal Y_F|
=
\sum_{y\in\mathcal Y_F}1,
\end{equation}
we obtain
\begin{equation}
\begin{aligned}
\operatorname{Err}^*(F)
&=
1-\frac{|\mathcal Y_F|}{n}
\\
&=
\frac{n-|\mathcal Y_F|}{n}
\\
&=
\frac{
\sum_{y\in\mathcal Y_F}s_y
-
\sum_{y\in\mathcal Y_F}1
}{n}
\\
&=
\frac{1}{n}
\sum_{y\in\mathcal Y_F}(s_y-1).
\end{aligned}
\end{equation}

Next, because \(U,V\) are sampled uniformly from \(V\) without replacement, the collision rate of \(F\) is
\begin{equation}
\kappa_F
=
\mathbb P(F(U)=F(V))
=
\frac{1}{n(n-1)}
\sum_{\substack{u,v\in V\\u\neq v}}
\mathbf 1\{F(u)=F(v)\}.
\end{equation}
The ordered pairs \((u,v)\) with \(u\neq v\) and \(F(u)=F(v)=y\) are exactly the ordered pairs inside the fiber \(C_y\). Their number is
\begin{equation}
s_y(s_y-1).
\end{equation}
Therefore,
\begin{equation}
\kappa_F
=
\frac{1}{n(n-1)}
\sum_{y\in\mathcal Y_F}s_y(s_y-1).
\end{equation}
Multiplying both sides by \(n-1\), we get
\begin{equation}
(n-1)\kappa_F
=
\frac{1}{n}
\sum_{y\in\mathcal Y_F}s_y(s_y-1).
\end{equation}
Since \(s_y\ge 1\), we have
\begin{equation}
s_y-1
\le
s_y(s_y-1)
\end{equation}
for every \(y\in\mathcal Y_F\). Hence
\begin{equation}
\begin{aligned}
\operatorname{Err}^*(F)
&=
\frac{1}{n}
\sum_{y\in\mathcal Y_F}(s_y-1)
\\
&\le
\frac{1}{n}
\sum_{y\in\mathcal Y_F}s_y(s_y-1)
\\
&=
(n-1)\kappa_F.
\end{aligned}
\end{equation}
This proves the first claim.

We now prove the factorization for the hybrid map. Recall that
\begin{equation}
F_{G,\mathcal A}^{(m,\eta)}(v)
=
\bigl(D_{\mathcal A}(v),Z_{m,\eta}(v)\bigr).
\end{equation}
Thus two vertices \(u\neq v\) collide under the hybrid map if and only if
\begin{equation}
D_{\mathcal A}(u)=D_{\mathcal A}(v)
\end{equation}
and
\begin{equation}
Z_{m,\eta}(u)=Z_{m,\eta}(v).
\end{equation}
The first condition means that \(u\) and \(v\) lie in the same distance bucket \(B_t\) for some \(t\in\mathcal T_{G,\mathcal A}\). Therefore,
\begin{equation}
\begin{aligned}
\kappa_{\mathrm H}
&=
\mathbb P\bigl(
F_{G,\mathcal A}^{(m,\eta)}(U)
=
F_{G,\mathcal A}^{(m,\eta)}(V)
\bigr)
\\
&=
\frac{1}{n(n-1)}
\sum_{t\in\mathcal T_{G,\mathcal A}}
\sum_{\substack{u,v\in B_t\\u\neq v}}
\mathbf 1\{
Z_{m,\eta}(u)=Z_{m,\eta}(v)
\}.
\end{aligned}
\end{equation}

For each \(t\), by the definition of \(\kappa_{\mathrm S}(B_t)\), we have
\begin{equation}
\sum_{\substack{u,v\in B_t\\u\neq v}}
\mathbf 1\{
Z_{m,\eta}(u)=Z_{m,\eta}(v)
\}
=
|B_t|(|B_t|-1)\kappa_{\mathrm S}(B_t).
\end{equation}
This identity also holds when \(|B_t|\le 1\), because both sides are then equal to zero. Hence
\begin{equation}
\kappa_{\mathrm H}
=
\frac{1}{n(n-1)}
\sum_{t\in\mathcal T_{G,\mathcal A}}
|B_t|(|B_t|-1)\kappa_{\mathrm S}(B_t).
\end{equation}

Let
\begin{equation}
W
:=
\sum_{t\in\mathcal T_{G,\mathcal A}}
|B_t|(|B_t|-1).
\end{equation}
Then
\begin{equation}
\kappa_{\mathrm D}
=
\frac{W}{n(n-1)}.
\end{equation}

If \(W=0\), then every bucket has size at most one. Hence there are no distance collisions, so
\begin{equation}
\kappa_{\mathrm D}=0.
\end{equation}
There are also no hybrid collisions, and therefore
\begin{equation}
\kappa_{\mathrm H}=0.
\end{equation}
By convention,
\begin{equation}
\kappa_{\mathrm{S}\mid\mathrm D}=0.
\end{equation}
Thus
\begin{equation}
\kappa_{\mathrm H}
=
0
=
\kappa_{\mathrm D}\kappa_{\mathrm{S}\mid\mathrm D}. 
\end{equation}

Consider the case \(W>0\). By definition,
\begin{equation}
\kappa_{\mathrm{S}\mid\mathrm D}
=
\frac{
\sum_{t\in\mathcal T_{G,\mathcal A}}
|B_t|(|B_t|-1)\kappa_{\mathrm S}(B_t)
}{W}.
\end{equation}
Therefore,
\begin{equation}
\begin{aligned}
\kappa_{\mathrm D}\kappa_{\mathrm{S}\mid\mathrm D}
&=
\frac{W}{n(n-1)}
\cdot
\frac{
\sum_{t\in\mathcal T_{G,\mathcal A}}
|B_t|(|B_t|-1)\kappa_{\mathrm S}(B_t)
}{W}
\\
&=
\frac{1}{n(n-1)}
\sum_{t\in\mathcal T_{G,\mathcal A}}
|B_t|(|B_t|-1)\kappa_{\mathrm S}(B_t)
\\
&=
\kappa_{\mathrm H}.
\end{aligned}
\end{equation}
This proves the exact factorization
\begin{equation}
\kappa_{\mathrm H}
=
\kappa_{\mathrm D}
\kappa_{\mathrm{S}\mid\mathrm D}.
\end{equation}
This factorization is purely deterministic and does not rely on any independence assumption between the distance and spectral components.

Finally, applying the general collision bound to the hybrid observation map gives
\begin{equation}
\operatorname{Err}^*
\bigl(
F_{G,\mathcal A}^{(m,\eta)}
\bigr)
\le
(n-1)\kappa_{\mathrm H}.
\end{equation}
Using the factorization just proved, we obtain
\begin{equation}
\operatorname{Err}^*
\bigl(
F_{G,\mathcal A}^{(m,\eta)}
\bigr)
\le
(n-1)
\kappa_{\mathrm D}
\kappa_{\mathrm{S}\mid\mathrm D}.
\end{equation}
The proof is complete.
\end{proof}

\subsubsection{Equal-distance representation for random anchors}
\label{app:proof-bisector-representation}

\begin{lemmaR}[Equal-distance representation for random anchors]
\label{lem:bisector-representation}
Suppose \(\mathcal A\) consists of \(k\) vertices sampled uniformly from \(V\) without replacement. Then
\begin{equation}
\mathbb E_{\mathcal A}\kappa_{\mathrm D}
=
\mathbb E_{U,V}
\left[
\frac{(|\operatorname{Eq}_G(U,V)|)_k}{(n)_k}
\right],
\end{equation}
where \(U,V\) are sampled uniformly from \(V\) without replacement. In particular,
\begin{equation}
\mathbb E_{\mathcal A}\kappa_{\mathrm D}
\le
\mathbb E_{U,V}
\bigl[\beta_G(U,V)^k\bigr].
\end{equation}
\end{lemmaR}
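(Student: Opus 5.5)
The plan is to swap the order of expectation over the anchors and over the vertex pair, and then compute the anchor probability exactly for each fixed pair. Here \(\operatorname{Eq}_G(u,v):=\{w\in V:d_G(w,u)=d_G(w,v)\}\) is the equal-distance (bisector) set of \(u\neq v\). Also \(\beta_G(u,v):=|\operatorname{Eq}_G(u,v)|/n\) is its relative size, and \((x)_k=x(x-1)\cdots(x-k+1)\) is the falling factorial.

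\emph{Step 1 (pair form of \(\kappa_{\mathrm D}\)).} By definition, \(\kappa_{\mathrm D}\) is the uniform average over ordered pairs \(u\neq v\) of \(\mathbf 1\{D_{\mathcal A}(u)=D_{\mathcal A}(v)\}\). Moreover, \(D_{\mathcal A}(u)=D_{\mathcal A}(v)\) holds if and only if \(d_G(u,a_i)=d_G(v,a_i)\) for every \(i\). Since \(d_G\) is symmetric, this is the same as requiring \(a_i\in\operatorname{Eq}_G(u,v)\) for all \(i\), that is, \(\mathcal A\subseteq\operatorname{Eq}_G(u,v)\) as a set. The ordering of the anchor list is irrelevant here.

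\emph{Step 2 (Fubini and exact counting).} The pair \((U,V)\) is independent of \(\mathcal A\) and both sums are finite, so
\[
\mathbb E_{\mathcal A}\kappa_{\mathrm D}
=\mathbb E_{U,V}\bigl[\mathbb P_{\mathcal A}(\mathcal A\subseteq \operatorname{Eq}_G(U,V))\bigr].
\]
For a fixed set \(S\) of size \(s\), a uniformly random ordered sample of \(k\) distinct vertices lands entirely in \(S\) with probability \((s)_k/(n)_k\). This is the ratio of injective \(k\)-tuples into \(S\) to injective \(k\)-tuples into \(V\), and equivalently \(\binom{s}{k}/\binom{n}{k}\) for unordered samples. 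The formula also gives \(0\) when \(s<k\), as it should. Taking \(S=\operatorname{Eq}_G(U,V)\) yields the identity.

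\emph{Step 3 (the bound).} Each factor satisfies \((s-i)/(n-i)\le s/n\) for \(0\le i<k\) whenever \(s\le n\). Hence \((s)_k/(n)_k\le (s/n)^k=\beta_G(U,V)^k\), and taking expectations gives the inequality.

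No step is a real obstacle. The only care needed is to check that \(\beta_G\) is normalized by \(n\), so that the factorwise comparison holds, and to treat the case \(s<k\) correctly through the falling factorial.
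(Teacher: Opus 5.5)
Your proposal is correct and follows the paper's proof essentially step for step: the reduction \(D_{\mathcal A}(u)=D_{\mathcal A}(v)\iff\mathcal A\subseteq\operatorname{Eq}_G(u,v)\), the exact count \(\binom{b}{k}/\binom{n}{k}=(b)_k/(n)_k\), the exchange of \(\mathbb E_{\mathcal A}\) and \(\mathbb E_{U,V}\), and the factorwise bound \((b-i)/(n-i)\le b/n\). As in the paper, multiply the factorwise inequalities only when \(s\ge k\), since then all factors are nonnegative, and handle \(s<k\) separately via \((s)_k=0\); your closing remark already anticipates this.
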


\begin{proof}
Fix two distinct vertices \(u,v\in V\). By definition,
\begin{equation}
D_{\mathcal A}(u)=D_{\mathcal A}(v)
\end{equation}
if and only if
\begin{equation}
d_G(u,a)=d_G(v,a)
\end{equation}
for every anchor \(a\in\mathcal A\). Equivalently,
\begin{equation}
\mathcal A\subseteq \operatorname{Eq}_G(u,v),
\end{equation}
where
\begin{equation}
\operatorname{Eq}_G(u,v)
=
\{a\in V:d_G(a,u)=d_G(a,v)\}.
\end{equation}
Let
\begin{equation}
b(u,v):=|\operatorname{Eq}_G(u,v)|.
\end{equation}
Since \(\mathcal A\) is a uniformly sampled \(k\)-subset of \(V\), we have
\begin{equation}
\mathbb P_{\mathcal A}
\bigl(
D_{\mathcal A}(u)=D_{\mathcal A}(v)
\bigr)
=
\frac{\binom{b(u,v)}{k}}{\binom{n}{k}}
=
\frac{(b(u,v))_k}{(n)_k}.
\end{equation}
The equality remains valid when \(b(u,v)<k\), in which case both sides are zero.

Now recall that \(U,V\) are sampled uniformly from \(V\) without replacement. Hence
\begin{equation}
\begin{aligned}
\mathbb E_{\mathcal A}\kappa_{\mathrm D}
&=
\mathbb E_{\mathcal A}
\left[
\mathbb P_{U,V}
\bigl(
D_{\mathcal A}(U)=D_{\mathcal A}(V)
\bigr)
\right]
\\
&=
\mathbb E_{U,V}
\left[
\mathbb P_{\mathcal A}
\bigl(
D_{\mathcal A}(U)=D_{\mathcal A}(V)
\bigr)
\right]
\\
&=
\mathbb E_{U,V}
\left[
\frac{(|\operatorname{Eq}_G(U,V)|)_k}{(n)_k}
\right].
\end{aligned}
\end{equation}
This proves the claimed identity.

It remains to prove the inequality. Let
\begin{equation}
b:=|\operatorname{Eq}_G(u,v)|.
\end{equation}
If \(b<k\), then
\begin{equation}
\frac{(b)_k}{(n)_k}=0
\le
\left(\frac{b}{n}\right)^k.
\end{equation}
If \(b\ge k\), then
\begin{equation}
\frac{(b)_k}{(n)_k}
=
\prod_{i=0}^{k-1}\frac{b-i}{n-i}.
\end{equation}
For each \(0\le i\le k-1\), since \(b\le n\),
\begin{equation}
\frac{b-i}{n-i}
\le
\frac{b}{n}.
\end{equation}
Therefore,
\begin{equation}
\frac{(b)_k}{(n)_k}
\le
\left(\frac{b}{n}\right)^k.
\end{equation}
Using
\begin{equation}
\beta_G(u,v)
=
\frac{|\operatorname{Eq}_G(u,v)|}{n},
\end{equation}
we obtain
\begin{equation}
\frac{(|\operatorname{Eq}_G(u,v)|)_k}{(n)_k}
\le
\beta_G(u,v)^k.
\end{equation}
Taking expectation over \(U,V\) gives
\begin{equation}
\mathbb E_{\mathcal A}\kappa_{\mathrm D}
\le
\mathbb E_{U,V}
\bigl[\beta_G(U,V)^k\bigr].
\end{equation}
The proof is complete.
\end{proof}

\subsubsection{Proof of Theorem~\ref{thm:deterministic-collision-achievability}}
\label{app:proof-deterministic-collision-achievability}

\begin{theoremR}[Restatement of Theorem~\ref{thm:deterministic-collision-achievability}]
For a deterministic sequence of graphs, anchor sets, spectral dimensions, and quantization levels, if
\begin{equation}
n\,
\kappa_{\mathrm D}
\kappa_{\mathrm{S}\mid\mathrm D}
\to 0,
\end{equation}
then
\begin{equation}
\operatorname{Err}^*
\bigl(
F_{G,\mathcal A}^{(m,\eta)}
\bigr)
\to 0.
\end{equation}
Moreover, if the quantities above are random and
\begin{equation}
n\,
\kappa_{\mathrm D}
\kappa_{\mathrm{S}\mid\mathrm D}
\xrightarrow{\mathbb P}0,
\end{equation}
then
\begin{equation}
\operatorname{Err}^*
\bigl(
F_{G,\mathcal A}^{(m,\eta)}
\bigr)
\xrightarrow{\mathbb P}0.
\end{equation}
\end{theoremR}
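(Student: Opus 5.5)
The plan is to reduce both claims to Lemma~\ref{lem:basic-collision-lemma}, which already gives the deterministic, per-instance bound
\begin{equation}
0\le
\operatorname{Err}^*
\bigl(F_{G,\mathcal A}^{(m,\eta)}\bigr)
\le
(n-1)\kappa_{\mathrm D}\kappa_{\mathrm{S}\mid\mathrm D}
\le
n\,\kappa_{\mathrm D}\kappa_{\mathrm{S}\mid\mathrm D}.
\end{equation}
This holds for every finite graph, anchor set, spectral dimension and quantization level. In the degenerate case \(\kappa_{\mathrm D}=0\), the zero convention for \(\kappa_{\mathrm{S}\mid\mathrm D}\) makes the right-hand side zero, and indeed \(\kappa_{\mathrm H}=0\). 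No case split is therefore needed beyond what the lemma already handles.

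For the deterministic sequence, apply the displayed inequality termwise along the sequence indexed by \(n\). The hypothesis \(n\kappa_{\mathrm D}\kappa_{\mathrm{S}\mid\mathrm D}\to0\) together with the trivial lower bound \(\operatorname{Err}^*\ge0\) gives \(\operatorname{Err}^*\to0\) by squeezing.

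For the random version, everything is conditioned on the realized graph, anchors, and (if applicable) spectral randomness. The inequality holds pointwise on every realization, since the lemma is purely deterministic. Fix \(\delta>0\). The event \(\{\operatorname{Err}^*>\delta\}\) is contained in \(\{n\kappa_{\mathrm D}\kappa_{\mathrm{S}\mid\mathrm D}>\delta\}\), so
\begin{equation}
\mathbb P\bigl(\operatorname{Err}^*>\delta\bigr)
\le
\mathbb P\bigl(n\kappa_{\mathrm D}\kappa_{\mathrm{S}\mid\mathrm D}>\delta\bigr)\to0.
\end{equation}
This is exactly convergence in probability to zero.

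The only point needing care is measurability: \(\operatorname{Err}^*\) and the collision quantities must be random variables. This holds because they are functions of finitely many discrete quantities, together with measurable floor-quantized spectral coordinates, on a finite vertex set. Apart from that, the proof is a direct corollary of Lemma~\ref{lem:basic-collision-lemma}, and I expect no real obstacle.
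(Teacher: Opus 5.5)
Your proposal is correct and matches the paper's proof. Both apply Lemma~\ref{lem:basic-collision-lemma} to obtain $0\le \operatorname{Err}^*\le (n-1)\kappa_{\mathrm D}\kappa_{\mathrm{S}\mid\mathrm D}\le n\,\kappa_{\mathrm D}\kappa_{\mathrm{S}\mid\mathrm D}$ pointwise, then conclude by squeezing in the deterministic case and by the same pointwise domination for convergence in probability; your explicit event inclusion and measurability remark just spell out what the paper leaves implicit.
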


\begin{proof}
For each graph in the sequence, apply Lemma~\ref{lem:basic-collision-lemma} to the hybrid observation map
\begin{equation}
F_{G,\mathcal A}^{(m,\eta)}.
\end{equation}
By the collision bound and the exact factorization proved there, we have
\begin{equation}
\operatorname{Err}^*
\bigl(
F_{G,\mathcal A}^{(m,\eta)}
\bigr)
\le
(n-1)
\kappa_{\mathrm D}
\kappa_{\mathrm{S}\mid\mathrm D}.
\end{equation}
Since \(n-1\le n\), it follows that
\begin{equation}
\operatorname{Err}^*
\bigl(
F_{G,\mathcal A}^{(m,\eta)}
\bigr)
\le
n
\kappa_{\mathrm D}
\kappa_{\mathrm{S}\mid\mathrm D}.
\end{equation}
If
\begin{equation}
n
\kappa_{\mathrm D}
\kappa_{\mathrm{S}\mid\mathrm D}
\to 0,
\end{equation}
then by the squeeze theorem,
\begin{equation}
\operatorname{Err}^*
\bigl(
F_{G,\mathcal A}^{(m,\eta)}
\bigr)
\to 0.
\end{equation}
The deterministic claim follows.

The convergence-in-probability claim is identical. If
\begin{equation}
n
\kappa_{\mathrm D}
\kappa_{\mathrm{S}\mid\mathrm D}
\xrightarrow{\mathbb P}0,
\end{equation}
then the deterministic inequality
\begin{equation}
0
\le
\operatorname{Err}^*
\bigl(
F_{G,\mathcal A}^{(m,\eta)}
\bigr)
\le
n
\kappa_{\mathrm D}
\kappa_{\mathrm{S}\mid\mathrm D}
\end{equation}
implies
\begin{equation}
\operatorname{Err}^*
\bigl(
F_{G,\mathcal A}^{(m,\eta)}
\bigr)
\xrightarrow{\mathbb P}0.
\end{equation}
The proof is complete.
\end{proof}

\subsection{Proofs for Random-Regular Achievability under Collision Decay}
\label{app:proof-Random-Regular}
This appendix provides the random-regular ingredients used in
Section~\ref{subsec:rrg-achievability-transfer}. We first control
distance collisions through a two-source distinguisher estimate, then
prove the Gaussian-wave spectral anti-concentration bound, and finally
combine the two estimates to obtain the closed Gaussian-wave
achievability theorem. The final part introduces the expanded-cell majorant and states the conditional spectral-collision assumption sufficient for
actual Laplacian-energy coordinates.
\subsubsection{Two-source distinguisher bound}
\label{app:proof-two-source-distinguisher-rrg}

\begin{theoremR}[Two-source distinguisher bound on random regular graphs]
\label{thm:two-source-distinguisher-rrg}
Fix \(r\ge 3\). Let \(G_n\sim\mathcal G_{n,r}\) be the uniform random
\(r\)-regular graph on \(n\) vertices. There exists a constant
\(c_r>0\), depending only on \(r\), such that with probability tending to
one,
\begin{equation}
\left|\mathcal D_{G_n}(u,v)\right|
\ge
\frac{c_r n}{\log n}
\end{equation}
for every pair \(u\neq v\) satisfying \(d_{G_n}(u,v)\ge 2\).
\end{theoremR}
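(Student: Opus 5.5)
The plan is to exhibit an explicit large subset of $\mathcal D_{G_n}(u,v)=\{a\in V: d_{G_n}(a,u)\neq d_{G_n}(a,v)\}$ (the complement of $\operatorname{Eq}_{G_n}(u,v)$). The candidate is the strict Voronoi cell of one endpoint,
\begin{equation*}
\mathcal V_{v}:=\{a\in V:\ d_{G_n}(a,v)<d_{G_n}(a,u)\}\subseteq \mathcal D_{G_n}(u,v),
\end{equation*}
together with the symmetric cell $\mathcal V_u$. It therefore suffices to show that, with probability $1-o(1)$, $\max(|\mathcal V_u|,|\mathcal V_v|)\ge c_r n/\log n$ for all pairs with $d_{G_n}(u,v)\ge2$. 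I would work in the configuration model, which is contiguous to $\mathcal G_{n,r}$ for fixed $r$. For a fixed pair I would prove a failure probability of $o(n^{-2})$ and then take a union bound over the at most $n^2$ pairs.

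The argument has three steps, carried out in this order.

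\textbf{Step 1 (local structure).} Use the standard fact that, with high probability, every ball of radius $R_n=\lfloor(\tfrac12-\delta)\log_{r-1}n\rfloor$ contains at most one cycle. Let $\ell=d(u,v)$. Then at least $r-2\ge1$ neighbours of $v$ lie off every shortest $u$--$v$ path and off the unique short cycle. The branches rooted at these neighbours, truncated at depth $R_n-\ell$ (or depth $R_n$ when $\ell$ is large), are trees. Each vertex $a$ in such a branch satisfies
\begin{equation*}
d(a,u)=\ell+d(a,v)>d(a,v),
\end{equation*}
so a set of size $(r-1)^{\Theta(R_n)}$ lies in $\mathcal V_v$. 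When $\ell>R_n$, I would instead grow the two balls around $u$ and $v$ separately up to radius $R_n$; they are disjoint and tree-like, so every vertex of each ball is strictly closer to its own centre.

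\textbf{Step 2 (growing the cells to linear-over-log size).} Run the two-source BFS from $\{u,v\}$ simultaneously, exposing half-edges layer by layer. At layer $i$, let $\partial_i^u$ and $\partial_i^v$ be the vertices first reached at distance exactly $i$ from $u$ but not $v$, and from $v$ but not $u$, respectively. Before the explored set reaches size $n/2$, each fresh half-edge pairs with an unexplored vertex with probability bounded away from $0$. A Chernoff/martingale bound then gives that each frontier grows by a factor at least $1+\gamma_r$ per layer, except with probability $\exp(-\Omega(|\partial_i|))$. By Step 1 both frontiers start at size $n^{\Omega(1)}$, so these error probabilities are $o(n^{-2})$.

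The BFS terminates within $C_r\log n$ layers, since that bounds the diameter. Summed over layers, the private frontiers $\partial^u_i\cup\partial^v_i$ together with the shared layers account for all $n$ vertices. Vertices first reached simultaneously from both sources are exactly the ones in $\operatorname{Eq}$. The $\log n$ loss in the statement arises from pigeonholing: some layer $i^\star$ carries at least $n/(C_r\log n)$ vertices. The task is then to show that a constant fraction of that layer is private. This follows because a vertex in layer $i^\star+1$ is tied only if it receives edges from both sides, and the private frontier is already a constant fraction of the active boundary.

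\textbf{Step 3 (union bound and transfer).} Combine the per-pair bound $o(n^{-2})$ over all pairs. Transfer from the configuration model to $\mathcal G_{n,r}$ by conditioning on simplicity, which has probability bounded away from zero.

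\textbf{Main obstacle.} I expect the hardest step to be Step 2: ruling out that the shared (tied) region absorbs almost everything once the two explorations merge, at which point the frontiers stop growing independently. My first attempt would be to argue at the layer where the explored mass first exceeds $n/(2C_r\log n)$. At that stage the two private frontiers still have comparable sizes, because before merging they grew at the same rate from comparable starting sizes. The merged part is then controlled by the number of cross pairings between the two frontiers, whose expectation is $O(|\partial^u||\partial^v|/n)$, a vanishing fraction of $|\partial^u|+|\partial^v|$ at that stage. This is what yields the $c_r n/\log n$ threshold rather than a linear one.
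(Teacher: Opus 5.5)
You are replacing a citation with a proof. The paper does not prove this from first principles: it quotes Mathieu--Zhou \cite[Lemma~12]{mathieu2021simple}. In the configuration model, that lemma bounds, for each fixed pair, the probability of ``$d(u,v)\ge 2$ and at most $3n/\log n$ distinguishers'' by $o(n^{-2})$. The paper then union-bounds over pairs and conditions on simplicity, which is exactly your Step~3. Replacing the citation by Steps~1--2 runs into two genuine problems.

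First, your target set is not the paper's. There $\mathcal D_G(u,v)=\{a:|d_G(a,u)-d_G(a,v)|\ge 2\}$ (Mathieu--Zhou's distinguishers), not $V\setminus\operatorname{Eq}_G(u,v)$. So the inclusion $\mathcal V_v\subseteq\mathcal D_{G_n}(u,v)$ fails for every $a$ with $d(a,u)=d(a,v)+1$. As written you would prove only $|V\setminus\operatorname{Eq}_{G_n}(u,v)|\ge c_rn/\log n$. That weaker bound is all the subsequent equal-distance moment proposition uses, but it is not the theorem. The branch vertices of Step~1 are fine, since there $d(a,u)-d(a,v)=\ell\ge 2$. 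In Step~2, however, you must also show that most private vertices of $v$ at layer $j$ have no neighbour at distance $j$ from $u$ (private or tied). That needs the same kind of collision count, applied to the $u$-side layer.

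Second, Step~2, which carries the whole proof, is not actually carried out. The pigeonhole version is circular. The layer holding $n/(C_r\log n)$ vertices may come after the explored set is already a constant fraction of $n$, where none of your growth estimates apply. For $r=3$, the growth factor $(r-1)(1-|\mathrm{explored}|/n)$ is already $1$ at $n/2$.

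The fallback in your last paragraph, stopping once the explored mass first exceeds a small threshold, is the right idea. Its accounting is incomplete, though: tied vertices do not come only from cross pairings of the two private frontiers. If $d(u,v)=2$ with a common neighbour $w$, then $w$ and everything hanging off it is tied from layer~$1$. That region grows at the same rate $r-1$ as the private ones, and any private vertex that pairs into the tied frontier becomes tied.

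What is needed is an invariant, in two parts:
\begin{itemize}
\item At the end of Step~1, the private frontier is at least a constant multiple of the tied frontier. This comes from the local tree structure, e.g.\ at most two common neighbours when $d(u,v)=2$.
\item Afterwards, the private frontier grows by $(r-1)(1-O(|\mathrm{explored}|/n))$ per layer. The tied one grows by at most $r-1$ plus $O(|\mathrm{frontier}|^2/n)$ absorptions. Each layer fails with probability $\exp(-n^{\Omega(1)})$.
\end{itemize}
Run to a stopping time at $\varepsilon n$, this would even give a bound linear in $n$.

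Smaller fixes:
\begin{itemize}
\item A uniform ``at most one cycle'' statement holds only for radii below $\tfrac14\log_{r-1}n$.
\item The two radius-$R_n$ balls are disjoint only if $\ell>2R_n$. Use radius $\lfloor\ell/2\rfloor-1$ instead, which also gives distance difference at least $2$.
\item For $r=3$, $v$ can have no neighbour that avoids both the geodesic and a cycle through $v$. In that case, argue from $u$.
\end{itemize}
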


\begin{proof}
We use the structural distinguisher lemma of Mathieu and
Zhou~\cite[Lemma~12]{mathieu2021simple}. Their definition of a
distinguisher for a vertex pair \(\{u,v\}\) is a vertex \(a\) satisfying
\begin{equation}
\bigl|d_G(a,u)-d_G(a,v)\bigr|>1 .
\end{equation}
Since graph distances are integer-valued, this is exactly the set
\begin{equation}
\mathcal D_G(u,v)
=
\left\{
a\in V(G):
\bigl|d_G(a,u)-d_G(a,v)\bigr|\ge 2
\right\}.
\end{equation}

We first work in the configuration model. Mathieu and Zhou prove that,
for fixed \(r\ge 3\), if \(G_n'\) is the multigraph generated by a
uniform random \(r\)-regular configuration, then for every fixed
unordered vertex pair \(\{u,v\}\),
\begin{equation}
\mathbb P
\left(
d_{G_n'}(u,v)\ge 2
\ \text{and}\
\left|\mathcal D_{G_n'}(u,v)\right|
\le
\frac{3n}{\log n}
\right)
=
o(n^{-2}).
\end{equation}
Equivalently, outside an event of probability \(o(n^{-2})\), every
non-adjacent fixed pair has at least \(3n/\log n\) strong
distinguishers.

There are at most \(n(n-1)/2\) unordered vertex pairs. Therefore, by a
union bound,
\begin{align}
&\mathbb P
\left(
\exists\, u\neq v:
d_{G_n'}(u,v)\ge 2
\ \text{and}\
\left|\mathcal D_{G_n'}(u,v)\right|
\le
\frac{3n}{\log n}
\right)
\\
&\quad\le
\frac{n(n-1)}{2}\cdot o(n^{-2})
=
o(1).
\end{align}
Hence, with probability tending to one in the configuration model,
\begin{equation}
\left|\mathcal D_{G_n'}(u,v)\right|
>
\frac{3n}{\log n}
\end{equation}
holds simultaneously for every pair \(u\neq v\) satisfying
\(d_{G_n'}(u,v)\ge 2\).

It remains to pass from the configuration model to the uniform simple
\(r\)-regular graph. For fixed \(r\ge 3\), the configuration model is
simple with probability bounded away from zero. Conditioning on the
simplicity event therefore transfers any \(1-o(1)\) event in the
configuration model to a \(1-o(1)\) event under the uniform simple
\(r\)-regular graph model. Thus the same simultaneous distinguisher
bound holds for \(G_n\sim\mathcal G_{n,r}\).

Taking, for instance, any fixed constant
\begin{equation}
0<c_r<3
\end{equation}
gives
\begin{equation}
\left|\mathcal D_{G_n}(u,v)\right|
\ge
\frac{c_r n}{\log n}
\end{equation}
simultaneously for all \(u\neq v\) with \(d_{G_n}(u,v)\ge 2\), with
probability tending to one. This proves the theorem.
\end{proof}

\subsubsection{Equal-distance moment decay}
\label{app:proof-equal-distance-moment-two-source}

\begin{propositionR}[Equal-distance moment decay from two-source competition]
\label{prop:equal-distance-moment-two-source}
Fix \(r\ge 3\). Let \(G_n\sim\mathcal G_{n,r}\), and let \(U,V\) be sampled
uniformly from \(V(G_n)\) without replacement. Suppose
\begin{equation}
\frac{k_n}{\log n}\to\infty
\end{equation}
and
\begin{equation}
k_n=o\bigl((\log n)^2\bigr).
\end{equation}
Let \(c_r>0\) be the constant in
Theorem~\ref{thm:two-source-distinguisher-rrg}, and define
\begin{equation}
I_{\mathrm d,n}(r)
:=
\frac{c_r}{4\log n},
\end{equation}
\begin{equation}
\omega_{\mathrm d,n}
:=
\frac{c_r k_n}{4\log n}.
\end{equation}
Then
\begin{equation}
\omega_{\mathrm d,n}\to\infty,
\end{equation}
and with probability tending to one over \(G_n\),
\begin{equation}
\mathbb E_{U,V}
\left[
\beta_{G_n}(U,V)^{k_n}
\,\middle|\,
G_n
\right]
\le
\exp\{-k_n I_{\mathrm d,n}(r)-\omega_{\mathrm d,n}\}.
\end{equation}
\end{propositionR}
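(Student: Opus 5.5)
Observe first that the target bound is $\exp\{-2c_rk_n/(4\log n)\}=\exp\{-c_rk_n/(2\log n)\}$, because $k_nI_{\mathrm d,n}(r)=\omega_{\mathrm d,n}=c_rk_n/(4\log n)$. The claim $\omega_{\mathrm d,n}\to\infty$ is immediate from $k_n/\log n\to\infty$. The plan is to split the expectation over $(U,V)$ into adjacent and non-adjacent pairs. Both parts are handled on the high-probability event $\mathcal E_n$ of Theorem~\ref{thm:two-source-distinguisher-rrg}.

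\textbf{Non-adjacent pairs.} Take $d_{G_n}(u,v)\ge2$. Every strong distinguisher $a\in\mathcal D_{G_n}(u,v)$ satisfies $d(a,u)\neq d(a,v)$, so $\operatorname{Eq}_{G_n}(u,v)\cap\mathcal D_{G_n}(u,v)=\emptyset$. On $\mathcal E_n$ this gives
\begin{equation}
\beta_{G_n}(u,v)\le 1-\frac{c_r}{\log n}.
\end{equation}
Hence
\begin{equation}
\beta_{G_n}(u,v)^{k_n}\le\exp\{-c_rk_n/\log n\}.
\end{equation}
This is smaller than the target by a factor $\exp\{-c_rk_n/(2\log n)\}\to0$, so the slack is more than enough.

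\textbf{Adjacent pairs.} Take $u\sim v$. Then $d(a,u)$ and $d(a,v)$ differ by at most one, and $d(u,u)=0\neq1=d(u,v)$, so $u,v\notin\operatorname{Eq}$. The only deterministic bound is $\beta\le1-2/n$, which is useless. However, adjacent pairs carry weight only $\mathbb P(U\sim V)=r/(n-1)$. The naive bound therefore gives a contribution of $O(1/n)$, and this beats $\exp\{-c_rk_n/(2\log n)\}$ only when $k_n\ge(2/c_r)\log^2 n$. That fails under the hypothesis $k_n=o((\log n)^2)$, so adjacent pairs need a genuine equal-distance estimate.

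\textbf{Main obstacle: bounding $\beta$ for adjacent pairs.} Here I would use the local tree structure of $G_n$. Whp, every edge $uv$ lies in a neighbourhood that is tree-like up to radius $\tfrac12\log_{r-1}n$ after removing $O(1)$ cycles. In the tree-like part, vertices on the $u$-side are strictly closer to $u$, and those on the $v$-side are strictly closer to $v$. So equality $d(a,u)=d(a,v)$ requires the two geodesic balls to be reconnected by a long path of matching parity.

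I would try to prove $\beta(u,v)\le1-c$ for a constant $c>0$, simultaneously over all edges whp. The argument would mimic the two-source competition of Mathieu and Zhou: grow BFS from $u$ and from $v$ to radius about $\tfrac12\log_{r-1}n$, where the frontiers each have size about $\sqrt n$, and show that a constant fraction of vertices are reached first from one side. If the cited lemma does not cover edges directly, I would take a neighbour $w$ of $v$ with $d(u,w)=2$. I would then relate $\operatorname{Eq}(u,v)$ to sets governed by the strong distinguishers of the pair $(u,w)$, which Theorem~\ref{thm:two-source-distinguisher-rrg} does control, using the triangle inequality to track the parity shift.

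With $\beta\le1-c$ on edges, the adjacent contribution is at most
\begin{equation}
\frac{r}{n-1}\,e^{-ck_n}\le e^{-ck_n},
\end{equation}
which is negligible because $k_n\gg k_n/\log n$.

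\textbf{Conclusion.} Summing the two contributions on $\mathcal E_n$ gives
\begin{equation}
\mathbb E[\beta^{k_n}\mid G_n]\le 2\exp\{-c_rk_n/\log n\},
\end{equation}
possibly after shrinking $c_r$ to absorb $e^{-ck_n}$. Since $c_rk_n/(2\log n)\to\infty$ swallows the factor $2$, this is at most $\exp\{-k_nI_{\mathrm d,n}(r)-\omega_{\mathrm d,n}\}$ for large $n$, which completes the plan.
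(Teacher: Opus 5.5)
\textbf{Gap: the adjacent-pair step rests on an inverted inequality.} Your non-adjacent argument is exactly the paper's. The problem is the adjacent pairs.

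You need $\frac{r}{n-1}\le \exp\{-c_rk_n/(2\log n)\}$. This is equivalent to $c_rk_n/(2\log n)\le \log\frac{n-1}{r}$, i.e.\ roughly $k_n\le (2/c_r)(\log n)^2$. So the trivial bound $\beta\le 1$ on adjacent pairs works precisely when $k_n$ is \emph{small} compared with $(\log n)^2$, not large. Under the hypothesis $k_n=o((\log n)^2)$ you have $c_rk_n/(2\log n)=o(\log n)$. Hence the target $\exp\{-c_rk_n/(2\log n)\}=n^{-o(1)}$ dominates $r/(n-1)=\Theta(1/n)$.

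This is how the paper closes the proof. On $\mathcal E_n$,
\[
\mathbb E[\beta_{G_n}(U,V)^{k_n}\mid G_n]\le \frac{r}{n-1}+e^{-c_rk_n/\log n},
\]
and both terms are $o\bigl(e^{-c_rk_n/(2\log n)}\bigr)$. The first is small because $k_n=o((\log n)^2)$, the second because $k_n/\log n\to\infty$. The upper growth condition in the statement exists exactly for this step.

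\textbf{The replacement step is unnecessary and unproved.} Your ``main obstacle'' is therefore not needed. As written it is also not established: the uniform bound $\beta(u,v)\le 1-c$ over all edges is only a BFS-competition heuristic. The proposal as it stands is incomplete at that point.

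\textbf{Optional: your fallback does work.} If you want an edge bound anyway, the neighbour-$w$ idea can be made rigorous in one line, though it gives $1-c_r/\log n$ rather than $1-c$. Take $w\sim v$ with $d(u,w)=2$, and $a$ with $|d(a,u)-d(a,w)|\ge 2$.
\begin{itemize}
\item If $d(a,w)\ge d(a,u)+2$, then $d(a,v)\ge d(a,w)-1>d(a,u)$.
\item If $d(a,u)\ge d(a,w)+2$, then $d(a,v)\le d(a,w)+1<d(a,u)$.
\end{itemize}
So $\mathcal D_{G_n}(u,w)\cap\operatorname{Eq}_{G_n}(u,v)=\emptyset$, and on $\mathcal E_n$ you get $\beta(u,v)\le 1-c_r/\log n$ for edges too. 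You would also need that, with high probability, every edge $uv$ admits such a $w$. This holds because, with high probability, no two triangles of $G_n$ share an edge: the expected number of such configurations is $O(1/n)$.

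This route would let you drop the hypothesis $k_n=o((\log n)^2)$ entirely. Without it, you should simply use $\beta\le 1$ on edges as the paper does.
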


\begin{proof}
Let \(\mathcal E_n\) denote the high-probability event from
Theorem~\ref{thm:two-source-distinguisher-rrg}. Thus
\begin{equation}
\mathbb P(\mathcal E_n)\to 1.
\end{equation}
We condition on a graph \(G_n\) for which \(\mathcal E_n\) holds.

Recall that
\begin{equation}
\operatorname{Eq}_{G_n}(u,v)
=
\left\{
a\in V(G_n):
d_{G_n}(a,u)=d_{G_n}(a,v)
\right\},
\end{equation}
and
\begin{equation}
\beta_{G_n}(u,v)
=
\frac{
|\operatorname{Eq}_{G_n}(u,v)|
}{n}.
\end{equation}
Since every vertex in \(\mathcal D_{G_n}(u,v)\) distinguishes \(u\) and \(v\), we have
\begin{equation}
\mathcal D_{G_n}(u,v)
\subseteq
V(G_n)\setminus \operatorname{Eq}_{G_n}(u,v).
\end{equation}
Therefore, on \(\mathcal E_n\), for every \(u\neq v\) with \(d_{G_n}(u,v)\ge 2\),
\begin{equation}
\begin{aligned}
|\operatorname{Eq}_{G_n}(u,v)|
&\le
n-
|\mathcal D_{G_n}(u,v)|
\\
&\le
n-\frac{c_r n}{\log n}.
\end{aligned}
\end{equation}
Consequently,
\begin{equation}
\beta_{G_n}(u,v)
\le
1-\frac{c_r}{\log n}
\end{equation}
for every non-adjacent pair \(u\neq v\).

It remains to control adjacent ordered pairs. Define
\begin{equation}
\mathsf{Adj}_n
:=
\left\{
(u,v)\in V(G_n)^2:
u\neq v,\ d_{G_n}(u,v)=1
\right\}.
\end{equation}
Since \(G_n\) is \(r\)-regular,
\begin{equation}
|\mathsf{Adj}_n|=rn.
\end{equation}
Because \(U,V\) are sampled uniformly without replacement,
\begin{align}
\mathbb P
\left(
(U,V)\in\mathsf{Adj}_n
\,\middle|\,
G_n
\right)
&=
\frac{|\mathsf{Adj}_n|}{n(n-1)}
\\
&=
\frac{rn}{n(n-1)}
\\
&=
\frac{r}{n-1}.
\end{align}

Using the trivial bound
\begin{equation}
\beta_{G_n}(u,v)\le 1
\end{equation}
on adjacent pairs and the two-source bound on non-adjacent pairs, we get
\begin{align}
\mathbb E_{U,V}
\left[
\beta_{G_n}(U,V)^{k_n}
\,\middle|\,
G_n
\right]
&\le
\frac{r}{n-1}
+
\left(1-\frac{c_r}{\log n}\right)^{k_n}.
\end{align}
For all sufficiently large \(n\), we have
\begin{equation}
0<\frac{c_r}{\log n}<1.
\end{equation}
Hence
\begin{equation}
\left(1-\frac{c_r}{\log n}\right)^{k_n}
\le
\exp\left\{
-\frac{c_r k_n}{\log n}
\right\}.
\end{equation}
Therefore,
\begin{equation}
\mathbb E_{U,V}
\left[
\beta_{G_n}(U,V)^{k_n}
\,\middle|\,
G_n
\right]
\le
\frac{r}{n-1}
+
\exp\left\{
-\frac{c_r k_n}{\log n}
\right\}.
\end{equation}

Define
\begin{equation}
A_n
:=
\exp\left\{
-\frac{c_r k_n}{2\log n}
\right\}.
\end{equation}
Since
\begin{equation}
\frac{k_n}{\log n}\to\infty,
\end{equation}
we have
\begin{equation}
\exp\left\{
-\frac{c_r k_n}{\log n}
\right\}
=
o(A_n).
\end{equation}
Moreover, since
\begin{equation}
k_n=o\bigl((\log n)^2\bigr),
\end{equation}
we have
\begin{equation}
\frac{c_r k_n}{2\log n}
=
o(\log n).
\end{equation}
Thus
\begin{align}
\frac{r}{n-1}
&=
\exp\{-\log n+O(1)\}
\\
&=
o(A_n).
\end{align}
Combining these two estimates gives, for all sufficiently large \(n\),
\begin{equation}
\frac{r}{n-1}
+
\exp\left\{
-\frac{c_r k_n}{\log n}
\right\}
\le
A_n.
\end{equation}
Hence, on \(\mathcal E_n\),
\begin{equation}
\mathbb E_{U,V}
\left[
\beta_{G_n}(U,V)^{k_n}
\,\middle|\,
G_n
\right]
\le
\exp\left\{
-\frac{c_r k_n}{2\log n}
\right\}.
\end{equation}

Finally, by the definitions
\begin{equation}
I_{\mathrm d,n}(r)
=
\frac{c_r}{4\log n}
\end{equation}
and
\begin{equation}
\omega_{\mathrm d,n}
=
\frac{c_r k_n}{4\log n},
\end{equation}
we have
\begin{align}
k_n I_{\mathrm d,n}(r)+\omega_{\mathrm d,n}
&=
k_n\cdot \frac{c_r}{4\log n}
+
\frac{c_r k_n}{4\log n}
\\
&=
\frac{c_r k_n}{2\log n}.
\end{align}
Therefore,
\begin{equation}
\mathbb E_{U,V}
\left[
\beta_{G_n}(U,V)^{k_n}
\,\middle|\,
G_n
\right]
\le
\exp\{-k_n I_{\mathrm d,n}(r)-\omega_{\mathrm d,n}\}.
\end{equation}
Since
\begin{equation}
\frac{k_n}{\log n}\to\infty,
\end{equation}
we also have
\begin{equation}
\omega_{\mathrm d,n}\to\infty.
\end{equation}
The proof is complete.
\end{proof}

\subsubsection{Distance collision decay}
\label{app:proof-distance-collision-decay-two-source}

\begin{corollaryR}[Distance collision decay from two-source competition]
\label{cor:distance-collision-decay-two-source}
Fix \(r\ge 3\). Let \(G_n\sim\mathcal G_{n,r}\), and let
\(\mathcal A_n\subseteq V(G_n)\) be sampled uniformly without replacement,
independently of \(G_n\), with
\begin{equation}
|\mathcal A_n|=k_n.
\end{equation}
Suppose
\begin{equation}
\frac{k_n}{\log n}\to\infty
\end{equation}
and
\begin{equation}
k_n=o\bigl((\log n)^2\bigr).
\end{equation}
Then, with
\begin{equation}
I_{\mathrm d,n}(r)
=
\frac{c_r}{4\log n},
\end{equation}
we have
\begin{equation}
\kappa_{\mathrm D}(G_n,\mathcal A_n)
\le
\exp\{-k_n I_{\mathrm d,n}(r)\}
\end{equation}
with probability tending to one as \(n\to\infty\).
\end{corollaryR}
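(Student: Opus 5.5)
The plan is to combine the equal-distance representation (Lemma~\ref{lem:bisector-representation}) with the equal-distance moment decay (Proposition~\ref{prop:equal-distance-moment-two-source}), and then pass from an expectation over the random anchors to a high-probability statement via Markov's inequality. The slack $\omega_{\mathrm d,n}\to\infty$ supplied by the proposition is exactly what makes the Markov step cost only $o(1)$ in probability.

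\textbf{Step 1 (conditional first moment).} Since $\mathcal A_n$ is independent of $G_n$, condition on $G_n$. Lemma~\ref{lem:bisector-representation} applies with the graph fixed and gives
\begin{equation}
\mathbb E\bigl[\kappa_{\mathrm D}(G_n,\mathcal A_n)\,\big|\,G_n\bigr]
\le
\mathbb E_{U,V}\bigl[\beta_{G_n}(U,V)^{k_n}\,\big|\,G_n\bigr].
\end{equation}
Let $\mathcal E_n$ be the event of Proposition~\ref{prop:equal-distance-moment-two-source}, which satisfies $\mathbb P(\mathcal E_n)\to1$. On $\mathcal E_n$ the right-hand side is at most $\exp\{-k_nI_{\mathrm d,n}(r)-\omega_{\mathrm d,n}\}$.

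\textbf{Step 2 (Markov over anchors).} On $\mathcal E_n$, apply the conditional Markov inequality to the nonnegative variable $\kappa_{\mathrm D}$:
\begin{equation}
\mathbb P\Bigl(\kappa_{\mathrm D}>e^{-k_nI_{\mathrm d,n}(r)}\,\Big|\,G_n\Bigr)
\le
e^{k_nI_{\mathrm d,n}(r)}\,
\mathbb E\bigl[\kappa_{\mathrm D}\,\big|\,G_n\bigr]
\le
e^{-\omega_{\mathrm d,n}}.
\end{equation}

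\textbf{Step 3 (unconditioning).} Averaging over $G_n$ gives
\begin{equation}
\mathbb P\Bigl(\kappa_{\mathrm D}>e^{-k_nI_{\mathrm d,n}(r)}\Bigr)
\le
\mathbb P(\mathcal E_n^c)+e^{-\omega_{\mathrm d,n}}.
\end{equation}
Both terms vanish, since $\mathbb P(\mathcal E_n^c)\to0$ and $\omega_{\mathrm d,n}=c_rk_n/(4\log n)\to\infty$ because $k_n/\log n\to\infty$. This yields the claim with probability tending to one.

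The main point needing care is measurability and independence in Step~1. Lemma~\ref{lem:bisector-representation} is stated for a fixed graph with uniformly sampled anchors, so it must be applied conditionally on $G_n$. That is legitimate because $\mathcal A_n$ is sampled independently of $G_n$, uniformly without replacement, with the deterministic size $k_n$. Once this is in place, the remaining work is bookkeeping, since all the analytic content sits in the proposition.
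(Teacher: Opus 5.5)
Your proposal is correct and follows the paper's proof essentially step for step. You condition on $G_n$, bound $\mathbb E[\kappa_{\mathrm D}\mid G_n]$ on $\mathcal E_n$ via the equal-distance representation and the moment-decay proposition, apply conditional Markov to get failure probability at most $e^{-\omega_{\mathrm d,n}}$, and then uncondition to obtain $\mathbb P(\mathcal E_n^c)+e^{-\omega_{\mathrm d,n}}\to0$.
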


\begin{proof}
Let
\begin{equation}
\resizebox{\columnwidth}{!}{$
\mathcal E_n
:=
\left\{
\mathbb E_{U,V}
\left[
\beta_{G_n}(U,V)^{k_n}
\,\middle|\,
G_n
\right]
\le
\exp\{-k_n I_{\mathrm d,n}(r)-\omega_{\mathrm d,n}\}
\right\}.
$}
\end{equation}
By Proposition~\ref{prop:equal-distance-moment-two-source},
\begin{equation}
\mathbb P(\mathcal E_n)\to 1.
\end{equation}

Condition on a graph \(G_n\) for which \(\mathcal E_n\) holds. By
Lemma~\ref{lem:bisector-representation}, applied conditionally on \(G_n\),
\begin{equation}
\begin{aligned}
\mathbb E_{\mathcal A_n}
\left[
\kappa_{\mathrm D}(G_n,\mathcal A_n)
\,\middle|\,
G_n
\right]
&\le
\mathbb E_{U,V}
\left[
\beta_{G_n}(U,V)^{k_n}
\,\middle|\,
G_n
\right]
\\
&\le
\exp\{-k_n I_{\mathrm d,n}(r)-\omega_{\mathrm d,n}\}.
\end{aligned}
\end{equation}

Since
\begin{equation}
\kappa_{\mathrm D}(G_n,\mathcal A_n)\ge 0,
\end{equation}
Markov's inequality gives
\begin{align}
&\mathbb P_{\mathcal A_n}
\left(
\kappa_{\mathrm D}(G_n,\mathcal A_n)
>
\exp\{-k_n I_{\mathrm d,n}(r)\}
\,\middle|\,
G_n
\right)
\\
&\quad\le
\frac{
\mathbb E_{\mathcal A_n}
\left[
\kappa_{\mathrm D}(G_n,\mathcal A_n)
\,\middle|\,
G_n
\right]
}{
\exp\{-k_n I_{\mathrm d,n}(r)\}
}
\\
&\quad\le
e^{-\omega_{\mathrm d,n}}.
\end{align}
Therefore,
\begin{align}
&\mathbb P
\left(
\kappa_{\mathrm D}(G_n,\mathcal A_n)
>
\exp\{-k_n I_{\mathrm d,n}(r)\}
\right)
\\
&\quad\le
\mathbb P(\mathcal E_n^c)+e^{-\omega_{\mathrm d,n}}.
\end{align}
Since
\begin{equation}
\mathbb P(\mathcal E_n^c)\to 0
\end{equation}
and
\begin{equation}
e^{-\omega_{\mathrm d,n}}\to 0,
\end{equation}
we conclude that
\begin{equation}
\mathbb P
\left(
\kappa_{\mathrm D}(G_n,\mathcal A_n)
\le
\exp\{-k_n I_{\mathrm d,n}(r)\}
\right)
\to 1.
\end{equation}
The proof is complete.
\end{proof}
\subsubsection{Gaussian-wave surrogate model}
\label{app:gw-surrogate-model}

The Gaussian-wave surrogate is used as a reference model for
\begin{equation}
X_m(v)_j
=
n\phi_{j+1}(v)^2
=
\bigl(\sqrt n\,\phi_{j+1}(v)\bigr)^2 .
\end{equation}

\begin{definitionR}[Admissible bounded-correlation Gaussian-wave surrogate]
\label{def:gw-surrogate-app}
Fix a finite graph \(G=(V,E)\), an anchor set \(\mathcal A\), a spectral
dimension \(m\), and a constant \(\rho_\star\in[0,1)\). A random field
\begin{equation}
\{\widetilde Y_j(v):v\in V,\ 1\le j\le m\}
\end{equation}
is an admissible bounded-correlation Gaussian-wave surrogate if,
conditionally on \(G\) and \(\mathcal A\), the following conditions hold:
\begin{enumerate}
\item for each \(j\), \(\{\widetilde Y_j(v):v\in V\}\) is a centered
Gaussian field;
\item for every \(v\in V\) and every \(j\),
\begin{equation}
\mathbb E[\widetilde Y_j(v)^2]=1;
\end{equation}
\item for every \(u\neq v\) and every \(j\),
\begin{equation}
\left|
\operatorname{Corr}
\bigl(
\widetilde Y_j(u),
\widetilde Y_j(v)
\bigr)
\right|
\le
\rho_\star;
\end{equation}
\item the fields are independent across \(j=1,\ldots,m\).
\end{enumerate}
The associated energy coordinate and quantized spectral code are
\begin{equation}
\widetilde X_m(v)
:=
\bigl(
\widetilde Y_1(v)^2,\ldots,\widetilde Y_m(v)^2
\bigr),
\end{equation}
and
\begin{equation}
\widetilde Z_{m,\eta}(v)
:=
Q_\eta(\widetilde X_m(v)).
\end{equation}
\end{definitionR}

\subsubsection{Expanded-cell majorant}
\label{app:expanded-cell-majorant}

For \(x,y\in\mathbb R_{\ge0}^{m}\), define
\begin{equation}
H_{\eta}(x,y)
:=
\mathbf 1\{Q_{\eta}(x)=Q_{\eta}(y)\}.
\end{equation}
For a smoothing scale \(\alpha\ge0\), define
\begin{equation}
\begin{aligned}
H^+_{\eta,\alpha}(x,y)
:=
\mathbf 1
\Bigl\{
&\exists q\in\mathbb Z_{\ge0}^{m}:                     \\
&x_i,y_i\in
[q_i\eta-\alpha,(q_i+1)\eta+\alpha],
\ \forall i
\Bigr\}.
\end{aligned}
\end{equation}

Then
\begin{equation}
H_{\eta}(x,y)
\le
H^+_{\eta,\alpha}(x,y)
\end{equation}
for all \(x,y\in\mathbb R_{\ge0}^{m}\).

\subsubsection{Gaussian-Wave Spectral Collision Estimate}
\label{app:gw-spectral-collision-estimate}

We first prove a one-dimensional estimate for squared correlated Gaussian
variables. This small-ball estimate is the main ingredient in the
Gaussian-wave spectral anti-concentration proof below.

\begin{lemma}[Quantized collision of correlated squared Gaussians]
\label{lem:correlated-gaussian-square-collision}
Fix \(\rho_\star\in[0,1)\). There exists a constant
\(C_{\rho_\star}>0\) such that the following holds. Let \((G,H)\) be a
centered bivariate Gaussian vector satisfying
\begin{equation}
\mathbb E[G^2]=\mathbb E[H^2]=1
\end{equation}
and
\begin{equation}
|\operatorname{Corr}(G,H)|\le \rho_\star .
\end{equation}
Then, for every \(\eta\in(0,e^{-1})\),
\begin{equation}
\mathbb P\left(
\left\lfloor \frac{G^2}{\eta}\right\rfloor
=
\left\lfloor \frac{H^2}{\eta}\right\rfloor
\right)
\le
C_{\rho_\star}\eta\log(e/\eta).
\end{equation}
\end{lemma}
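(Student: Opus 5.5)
We reduce the collision event to a union over quantization cells, compare the joint density of $(G,H)$ with a product of marginal densities, and sum the resulting cell probabilities. Write $\rho=\operatorname{Corr}(G,H)$ with $|\rho|\le\rho_\star<1$. The collision event is the disjoint union over $q\ge0$ of
\[
\{G^2\in I_q,\;H^2\in I_q\},\qquad I_q:=[q\eta,(q+1)\eta).
\]
Hence
\[
\mathbb P(\text{collision})=\sum_{q\ge0}\mathbb P\bigl(G^2\in I_q,\ H^2\in I_q\bigr).
\]

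\textbf{Step 1: density comparison.} The joint density of $(G,H)$ is
\[
\frac{1}{2\pi\sqrt{1-\rho^2}}\exp\Bigl\{-\frac{g^2-2\rho gh+h^2}{2(1-\rho^2)}\Bigr\}.
\]
Since $2|\rho gh|\le|\rho|(g^2+h^2)$, the exponent is at most $-\frac{(1-|\rho|)(g^2+h^2)}{2(1-\rho^2)}=-\frac{g^2+h^2}{2(1+|\rho|)}$. So the joint density is bounded by
\[
\frac{1}{2\pi\sqrt{1-\rho_\star^2}}\;e^{-g^2/(2(1+\rho_\star))}\,e^{-h^2/(2(1+\rho_\star))}.
\]
Set $\sigma^2:=1+\rho_\star\le 2$. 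Then
\[
\mathbb P\bigl(G^2\in I_q,\ H^2\in I_q\bigr)\le C'_{\rho_\star}\,p_q^2,
\]
where $p_q:=\mathbb P(W^2\in I_q)$ for $W\sim N(0,\sigma^2)$. The constant $C'_{\rho_\star}$ absorbs the normalization ratio $\sigma^2/\sqrt{1-\rho_\star^2}$. This decoupling is the only place the correlation bound is used, and it is what makes the constant depend on $\rho_\star$ alone.

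\textbf{Step 2: bounding $p_q$.} The density of $W^2$ is $f(s)=(2\pi\sigma^2 s)^{-1/2}e^{-s/(2\sigma^2)}\le c\,s^{-1/2}e^{-s/4}$.
\begin{itemize}
\item For $q=0$: $p_0\le c\int_0^\eta s^{-1/2}\,ds=2c\sqrt\eta$, so $p_0^2\le 4c^2\eta$.
\item For $q\ge1$: $p_q\le \eta\sup_{I_q}f\le c\,\eta\,(q\eta)^{-1/2}e^{-q\eta/4}$, so $p_q^2\le c^2\eta\,q^{-1}e^{-q\eta/2}$.
\end{itemize}

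\textbf{Step 3: summing over cells.} This step carries the logarithm. We need
\[
\sum_{q\ge1}q^{-1}e^{-q\eta/2}\le \log(e/\eta)+O(1).
\]
Split at $q\approx 1/\eta$. For $q\le 1/\eta$, drop the exponential and get the harmonic sum $\le 1+\log(1/\eta)$. For $q>1/\eta$, use $q^{-1}<\eta$ to get $\eta\sum_q e^{-q\eta/2}\le \eta\cdot\frac{1}{1-e^{-\eta/2}}=O(1)$ for $\eta<e^{-1}$.

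Combining the three steps gives
\[
\mathbb P(\text{collision})\le C''\eta\bigl(1+\log(1/\eta)+O(1)\bigr)\le C_{\rho_\star}\eta\log(e/\eta),
\]
using $\log(e/\eta)\ge 2$ on $(0,e^{-1})$ to absorb the constants.

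The main obstacle is the small-$q$ regime near the origin, where the $\chi^2_1$ density blows up like $s^{-1/2}$. Because the $q=0$ cell is handled directly by integrating $s^{-1/2}$ rather than by a sup bound, and each $q\ge1$ cell loses only a factor $q^{-1}$, the total is exactly the harmonic-type sum that produces the $\log(e/\eta)$ factor.
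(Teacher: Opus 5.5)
Your proof is correct and follows essentially the same route as the paper. The paper bounds the joint density of $(G^2,H^2)$ by the product form $C_{\rho_\star}(xy)^{-1/2}e^{-c_{\rho_\star}(x+y)}$ and sums over the same three ranges of cells ($q=0$, $1\le q\le 1/\eta$, $q>1/\eta$), and your decoupling inequality $2|\rho gh|\le|\rho|(g^2+h^2)$ supplies the justification for that density bound, which the paper only asserts.
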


\begin{proof}
Let
\begin{equation}
\rho:=\operatorname{Corr}(G,H).
\end{equation}
The joint density of \((G,H)\) is
\begin{equation}
p_\rho(g,h)
=
\frac{1}{2\pi\sqrt{1-\rho^2}}
\exp\left\{
-\frac{g^2-2\rho gh+h^2}{2(1-\rho^2)}
\right\}.
\end{equation}
Let
\begin{equation}
X:=G^2,
\quad
Y:=H^2.
\end{equation}
The joint law of \((X,Y)\) has a density \(f_\rho\) on
\((0,\infty)^2\). By summing over the four sign choices of \(G\) and \(H\),
and using \(|\rho|\le \rho_\star<1\), there exist constants
\(C_{\rho_\star},c_{\rho_\star}>0\), depending only on \(\rho_\star\), such
that
\begin{equation}
f_\rho(x,y)
\le
C_{\rho_\star}
(xy)^{-1/2}
\exp\{-c_{\rho_\star}(x+y)\},
\quad x,y>0.
\end{equation}

For \(\ell=0,1,2,\ldots\), define
\begin{equation}
I_\ell:=[\ell\eta,(\ell+1)\eta).
\end{equation}
Then
\begin{align}
\mathbb P\left(
\left\lfloor \frac{G^2}{\eta}\right\rfloor
=
\left\lfloor \frac{H^2}{\eta}\right\rfloor
\right)
&=
\sum_{\ell=0}^{\infty}
\int_{I_\ell}\int_{I_\ell}
f_\rho(x,y)\,dx\,dy .
\end{align}

For \(\ell=0\), using the density bound without the exponential factor,
\begin{align}
\int_0^\eta\int_0^\eta f_\rho(x,y)\,dx\,dy
&\le
C_{\rho_\star}
\left(\int_0^\eta x^{-1/2}\,dx\right)^2        \\
&\le
C_{\rho_\star}\eta .
\end{align}

For \(1\le \ell\le \lfloor 1/\eta\rfloor\), we have
\(x,y\ge \ell\eta\) on \(I_\ell\times I_\ell\). Hence
\begin{align}
\int_{I_\ell}\int_{I_\ell} f_\rho(x,y)\,dx\,dy
&\le
C_{\rho_\star}\eta^2(\ell\eta)^{-1}        \\
&=
C_{\rho_\star}\frac{\eta}{\ell}.
\end{align}
Therefore,
\begin{equation}
\sum_{\ell=1}^{\lfloor 1/\eta\rfloor}
\int_{I_\ell}\int_{I_\ell} f_\rho(x,y)\,dx\,dy
\le
C_{\rho_\star}\eta\log(e/\eta).
\end{equation}

Finally, for \(\ell>1/\eta\), the exponential decay gives
\begin{equation}
\int_{I_\ell}\int_{I_\ell} f_\rho(x,y)\,dx\,dy
\le
C_{\rho_\star}\eta^2 e^{-2c_{\rho_\star}\ell\eta}.
\end{equation}
Thus
\begin{equation}
\sum_{\ell>1/\eta}
\int_{I_\ell}\int_{I_\ell} f_\rho(x,y)\,dx\,dy
\le
C_{\rho_\star}\eta.
\end{equation}
Combining the three ranges yields
\begin{equation}
\mathbb P\left(
\left\lfloor \frac{G^2}{\eta}\right\rfloor
=
\left\lfloor \frac{H^2}{\eta}\right\rfloor
\right)
\le
C_{\rho_\star}\eta\log(e/\eta).
\end{equation}
The proof is complete.
\end{proof}

\subsubsection{Gaussian-wave spectral anti-concentration}
\label{app:proof-rw-spectral-anti-concentration}

\begin{theoremR}[Gaussian-wave spectral anti-concentration]
\label{thm:rw-spectral-anti-concentration}

Fix \(\rho_\star\in[0,1)\). There exists a constant
\begin{equation}
C_{\mathrm s}=C_{\mathrm s}(\rho_\star)>0
\end{equation}
such that the following holds. Let \(G=(V,E)\) be a finite connected graph,
let \(\mathcal A\subseteq V\) be any anchor set, and let
\begin{equation}
\{\widetilde Y_j(v):v\in V,\ 1\le j\le m\}
\end{equation}
be an admissible bounded-correlation Gaussian-wave surrogate with parameter
\(\rho_\star\). Then, for every \(\eta\in(0,e^{-1})\),
\begin{equation}
\mathbb E
\left[
\widetilde\kappa_{\mathrm{S}\mid\mathrm{D}}
\,\middle|\,
G,\mathcal A
\right]
\le
\bigl(C_{\mathrm s}\eta\log(e/\eta)\bigr)^m .
\end{equation}
Consequently, for any sequence \((G_n,\mathcal A_n,m_n,\eta_n)\) and any
sequence \(\omega_{\mathrm s,n}\to\infty\), we have
\begin{equation}
\widetilde\kappa_{\mathrm{S}\mid\mathrm{D}}
\le
\exp\left\{
-m_n I_{\mathrm s}^{\mathrm{rw}}(\eta_n)
+
\omega_{\mathrm s,n}
\right\}
\end{equation}
with probability at least \(1-e^{-\omega_{\mathrm s,n}}\), conditionally on
\(G_n,\mathcal A_n\), where
\begin{equation}
I_{\mathrm s}^{\mathrm{gw}}(\eta)
:=
\log\frac{1}{C_{\mathrm s}\eta\log(e/\eta)}.
\end{equation}
\end{theoremR}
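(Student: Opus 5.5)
The plan is to reduce the conditional spectral collision rate to a pairwise statement and then apply Lemma~\ref{lem:correlated-gaussian-square-collision} coordinatewise. The set of distance-colliding pairs depends only on \(G\) and \(\mathcal A\), so it is fixed under the conditioning. If \(\kappa_{\mathrm D}=0\), the convention gives \(\widetilde\kappa_{\mathrm{S}\mid\mathrm D}=0\) and there is nothing to prove. Otherwise, by \eqref{eq:conditional-spectral-collision} applied to the surrogate code,
\[
\widetilde\kappa_{\mathrm{S}\mid\mathrm D}=\frac{1}{W}\sum_{t}\sum_{\substack{u,v\in B_t\\u\ne v}}\mathbf 1\{\widetilde Z_{m,\eta}(u)=\widetilde Z_{m,\eta}(v)\},
\]
where \(W=\sum_t|B_t|(|B_t|-1)>0\) is deterministic given \(G,\mathcal A\). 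By linearity of conditional expectation, it then suffices to show that for every fixed pair \(u\ne v\),
\[
\mathbb P\bigl(\widetilde Z_{m,\eta}(u)=\widetilde Z_{m,\eta}(v)\mid G,\mathcal A\bigr)\le\bigl(C_{\mathrm s}\eta\log(e/\eta)\bigr)^m .
\]
The right-hand side is then a uniform bound on a convex combination of these pairwise probabilities, which gives the expectation bound.

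For the pairwise bound, observe that \(\widetilde Z_{m,\eta}(u)=\widetilde Z_{m,\eta}(v)\) holds exactly when \(\lfloor \widetilde Y_j(u)^2/\eta\rfloor=\lfloor \widetilde Y_j(v)^2/\eta\rfloor\) for every \(j=1,\dots,m\). By condition 4 of the definition, the fields are independent across \(j\), so the pairs \((\widetilde Y_j(u),\widetilde Y_j(v))\) are independent across \(j\), and the probability factorizes into a product over \(j\). Each factor concerns a centered bivariate Gaussian with unit variances (conditions 1 and 2) and correlation of absolute value at most \(\rho_\star\) (condition 3). Lemma~\ref{lem:correlated-gaussian-square-collision} therefore bounds each factor by \(C_{\rho_\star}\eta\log(e/\eta)\). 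Setting \(C_{\mathrm s}:=C_{\rho_\star}\) gives the \(m\)-th power bound, with a constant that is independent of \(G\), \(\mathcal A\), \(m\), and the pair.

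For the high-probability statement, apply conditional Markov's inequality to the nonnegative variable \(\widetilde\kappa_{\mathrm{S}\mid\mathrm D}\). The threshold \(\exp\{-m_nI_{\mathrm s}^{\mathrm{gw}}(\eta_n)+\omega_{\mathrm s,n}\}\) equals \(e^{\omega_{\mathrm s,n}}\bigl(C_{\mathrm s}\eta_n\log(e/\eta_n)\bigr)^{m_n}\), so the probability of exceeding it is at most \(e^{-\omega_{\mathrm s,n}}\). Here \(I_{\mathrm s}^{\mathrm{rw}}\) in the statement is read as \(I_{\mathrm s}^{\mathrm{gw}}\). This step requires \(\eta_n\in(0,e^{-1})\), which is implicit in the statement.

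The substantive analytic content lies entirely in the one-dimensional Lemma~\ref{lem:correlated-gaussian-square-collision}, and we may assume it. The step that needs the most care is therefore the independence factorization. It relies on the full independence across \(j\) of the fields, not just of their marginals, and this is exactly what condition 4 provides. We also need to check that the conditional measure over the surrogate is independent of which bucket the pair lies in. This holds because the distance buckets are deterministic given \(G,\mathcal A\), so no selection bias enters the average.
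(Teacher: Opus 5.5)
Your proposal is correct and follows essentially the same route as the paper: treat the distance buckets as fixed given $G,\mathcal A$, bound each pairwise collision probability by factorizing over $j$ via independence and applying the squared-Gaussian collision lemma coordinatewise with $C_{\mathrm s}:=C_{\rho_\star}$, average over the distance-colliding pairs, and finish with conditional Markov. Your reading of $I_{\mathrm s}^{\mathrm{rw}}$ as $I_{\mathrm s}^{\mathrm{gw}}$ and your note that $\eta_n\in(0,e^{-1})$ is implicitly required both match the paper's intent.
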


\begin{proof}
Fix \(G\) and \(\mathcal A\), and condition on the induced distance buckets
\begin{equation}
\{B_t\}_{t\in\mathcal T_{G,\mathcal A}}.
\end{equation}
Let
\begin{equation}
W
:=
\sum_{t\in\mathcal T_{G,\mathcal A}} |B_t|(|B_t|-1).
\end{equation}
If \(W=0\), then there are no distance-collision pairs. By convention,
\begin{equation}
\widetilde\kappa_{\mathrm{S}\mid\mathrm{D}}=0,
\end{equation}
and the claim is immediate.

Assume \(W>0\). Fix an ordered pair \(u\ne v\). For each spectral coordinate
\(j\), admissibility of the Gaussian-wave surrogate gives
\begin{equation}
\mathbb E[\widetilde Y_j(u)^2]
=
\mathbb E[\widetilde Y_j(v)^2]
=
1
\end{equation}
and
\begin{equation}
\left|
\operatorname{Corr}
\bigl(
\widetilde Y_j(u),
\widetilde Y_j(v)
\bigr)
\right|
\le
\rho_\star.
\end{equation}
Therefore, by Lemma~\ref{lem:correlated-gaussian-square-collision},
\begin{equation}
\mathbb P\left(
\left\lfloor
\frac{\widetilde Y_j(u)^2}{\eta}
\right\rfloor
=
\left\lfloor
\frac{\widetilde Y_j(v)^2}{\eta}
\right\rfloor
\right)
\le
C_{\rho_\star}\eta\log(e/\eta).
\end{equation}

The Gaussian fields are independent across \(j\). Hence
\begin{align}
\mathbb P\left(
\widetilde Z_{m,\eta}(u)
=
\widetilde Z_{m,\eta}(v)
\right)
&\le
\bigl(C_{\rho_\star}\eta\log(e/\eta)\bigr)^m .
\end{align}
Set
\begin{equation}
C_{\mathrm s}:=C_{\rho_\star}.
\end{equation}
Using the definition of the weighted within-bucket spectral collision rate,
we obtain
\begin{align}
\mathbb E
\left[
\widetilde\kappa_{\mathrm{S}\mid\mathrm{D}}
\,\middle|\,
G,\mathcal A
\right]
&=
\frac{1}{W}
\sum_{t\in\mathcal T_{G,\mathcal A}}
\sum_{\substack{u,v\in B_t\\ u\ne v}}
\mathbb P\left(
\widetilde Z_{m,\eta}(u)
=
\widetilde Z_{m,\eta}(v)
\right)                                      \\
&\le
\frac{1}{W}
\sum_{t\in\mathcal T_{G,\mathcal A}}
|B_t|(|B_t|-1)
\bigl(C_{\mathrm s}\eta\log(e/\eta)\bigr)^m  \\
&=
\bigl(C_{\mathrm s}\eta\log(e/\eta)\bigr)^m.
\end{align}

For the high-probability statement, Markov's inequality gives
\begin{align}
&
\mathbb P\left(
\widetilde\kappa_{\mathrm{S}\mid\mathrm{D}}
>
e^{\omega_{\mathrm s,n}}
\bigl(C_{\mathrm s}\eta_n\log(e/\eta_n)\bigr)^{m_n}
\,\middle|\,
G_n,\mathcal A_n
\right)                                      \\
&\hspace{5em}\le
e^{-\omega_{\mathrm s,n}}.
\end{align}
By the definition of \(I_{\mathrm s}^{\mathrm{rw}}\), this is equivalent to
\begin{equation}
\widetilde\kappa_{\mathrm{S}\mid\mathrm{D}}
\le
\exp\left\{
-m_n I_{\mathrm s}^{\mathrm{rw}}(\eta_n)
+
\omega_{\mathrm s,n}
\right\}
\end{equation}
with conditional probability at least \(1-e^{-\omega_{\mathrm s,n}}\).
The proof is complete.
\end{proof}

\subsubsection{Proof of Theorem~\ref{thm:closed-gw-hybrid-achievability}}
\label{app:proof-closed-gw-hybrid-achievability}

Let
\(\widetilde F_{G_n,\mathcal A_n}^{(m_n,\eta_n)}\)
denote the hybrid observation map whose spectral component is
an admissible bounded-correlation Gaussian-wave energy
surrogate with correlation bound \(\rho_\star\in[0,1)\).
For constants \(c_r>0\) and
\(C_{\mathrm s}=C_{\mathrm s}(\rho_\star)>0\), write
\begin{equation}
I_{\mathrm d,n}(r)
:=
\frac{c_r}{4\log n},
\quad
I_{\mathrm s}^{\mathrm{gw}}(\eta)
:=
\log
\frac{1}{
C_{\mathrm s}\eta\log(e/\eta)
}.
\label{eq:app-gw-information-scales}
\end{equation}

\begin{theoremR}[Restatement of Theorem~\ref{thm:closed-gw-hybrid-achievability}]
Fix \(r\geq 3\), and let \(G_n\sim\mathcal G_{n,r}\). Let
\(\mathcal A_n\subseteq V(G_n)\) be sampled uniformly without
replacement, independently of \(G_n\), with
\(|\mathcal A_n|=k_n\). Assume
\begin{equation}
\frac{k_n}{\log n}\to\infty,
\quad
k_n=o\bigl((\log n)^2\bigr).
\label{eq:app-closed-gw-anchor-regime}
\end{equation}
There exist constants \(c_r>0\) and
\(C_{\mathrm s}=C_{\mathrm s}(\rho_\star)>0\) such that, if
\begin{equation}
\eta_n\in(0,e^{-1}),
\quad
C_{\mathrm s}\eta_n\log(e/\eta_n)<1
\label{eq:app-closed-gw-positive-info}
\end{equation}
for all sufficiently large \(n\), and if, for some
\(\varepsilon>0\),
\begin{equation}
k_n I_{\mathrm d,n}(r)
+
m_n I_{\mathrm s}^{\mathrm{gw}}(\eta_n)
\geq
(1+\varepsilon)\log n
\label{eq:app-closed-gw-condition}
\end{equation}
for all sufficiently large \(n\), then
\begin{equation}
\operatorname{Err}^*
\bigl(
\widetilde F_{G_n,\mathcal A_n}^{(m_n,\eta_n)}
\bigr)
\xrightarrow{\mathbb P}0.
\label{eq:app-closed-gw-achievability}
\end{equation}
\end{theoremR}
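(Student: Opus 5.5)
The plan is to combine the two collision-decay estimates with the deterministic criterion of Theorem~\ref{thm:deterministic-collision-achievability}. Take $c_r$ from Theorem~\ref{thm:two-source-distinguisher-rrg} (hence Lemma~\ref{lem:rrg-distance-collision-decay-main}), and $C_{\mathrm s}=C_{\rho_\star}$ from Lemma~\ref{lem:gw-spectral-collision-decay-main}. By Lemma~\ref{lem:basic-collision-lemma}, applied to the surrogate hybrid map $\widetilde F$, we have
\[
\operatorname{Err}^*(\widetilde F)\le (n-1)\kappa_{\mathrm D}\widetilde\kappa_{\mathrm S\mid\mathrm D}.
\]
The factorization is purely combinatorial, so it holds for any realization of the surrogate field. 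It therefore suffices to show $n\kappa_{\mathrm D}\widetilde\kappa_{\mathrm S\mid\mathrm D}\xrightarrow{\mathbb P}0$.

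\textbf{Step 1: distance collisions.} Let $\mathcal E_n^{\mathrm d}$ be the event $\kappa_{\mathrm D}\le\exp\{-k_nI_{\mathrm d,n}(r)\}$. By Lemma~\ref{lem:rrg-distance-collision-decay-main}, $\mathbb P(\mathcal E_n^{\mathrm d})\to1$ under the anchor regime \eqref{eq:closed-gw-anchor-regime}.

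\textbf{Step 2: spectral collisions, with a slowly growing slack.} Choose $\omega_{\mathrm s,n}:=\tfrac{\varepsilon}{2}\log n$. This tends to infinity while consuming only half of the slack. Lemma~\ref{lem:gw-spectral-collision-decay-main} holds conditionally on $(G_n,\mathcal A_n)$ for every graph and anchor set. It gives an event $\mathcal E_n^{\mathrm s}$ with conditional probability at least $1-n^{-\varepsilon/2}$ on which
\[
\widetilde\kappa_{\mathrm S\mid\mathrm D}\le\exp\{-m_nI^{\mathrm{gw}}_{\mathrm s}(\eta_n)+\tfrac{\varepsilon}{2}\log n\}.
\]
Integrating over $(G_n,\mathcal A_n)$ yields $\mathbb P(\mathcal E_n^{\mathrm s})\to1$ unconditionally. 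Condition \eqref{eq:closed-gw-positive-info} guarantees $I_{\mathrm s}^{\mathrm{gw}}(\eta_n)>0$, so the exponent has the correct sign and the lemma is used in its nontrivial range. This step also needs the convention $\widetilde\kappa_{\mathrm S\mid\mathrm D}=0$ when $\kappa_{\mathrm D}=0$. That case is harmless, since then the product vanishes.

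\textbf{Step 3: combine.} On $\mathcal E_n^{\mathrm d}\cap\mathcal E_n^{\mathrm s}$, the budget condition \eqref{eq:closed-gw-condition} gives
\[
n\kappa_{\mathrm D}\widetilde\kappa_{\mathrm S\mid\mathrm D}\le\exp\{\log n-(1+\varepsilon)\log n+\tfrac{\varepsilon}{2}\log n\}=n^{-\varepsilon/2}\to0 .
\]
Since $\mathbb P(\mathcal E_n^{\mathrm d}\cap\mathcal E_n^{\mathrm s})\to1$, the product converges to zero in probability. The in-probability part of Theorem~\ref{thm:deterministic-collision-achievability} then gives \eqref{eq:closed-gw-achievability}.

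\textbf{Main obstacle.} The delicate point is the choice of slack $\omega_{\mathrm s,n}$. It must diverge so that the Markov failure probability vanishes. It must also be $o(\varepsilon\log n)$, or at most a fixed fraction of it, so that it does not consume the $\varepsilon\log n$ margin; the choice $\tfrac{\varepsilon}{2}\log n$ meets both requirements. A secondary care point is making sure the conditional statements in Lemmas~\ref{lem:rrg-distance-collision-decay-main} and~\ref{lem:gw-spectral-collision-decay-main} refer to the same probability space. The joint law is $G_n$, then $\mathcal A_n$, then the surrogate field given $(G_n,\mathcal A_n)$, so the tower property suffices to intersect the two events.
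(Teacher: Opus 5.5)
Your proposal is correct and follows the paper's proof essentially step for step. Both arguments combine the high-probability distance bound, the Markov-based Gaussian-wave spectral bound with a diverging slack, the exact factorization and the in-probability collision criterion. The only difference is cosmetic: you fix $\omega_{\mathrm s,n}=\tfrac{\varepsilon}{2}\log n$, whereas the paper takes any deterministic $\omega_{\mathrm s,n}\to\infty$ with $\omega_{\mathrm s,n}=o(\log n)$, and both yield the same $n^{-\varepsilon/2}$ bound.
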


\begin{proof}
By Corollary~\ref{cor:distance-collision-decay-two-source},
with probability tending to one,
\begin{equation}
\kappa_{\mathrm D}(G_n,\mathcal A_n)
\le
\exp\{-k_n I_{\mathrm d,n}(r)\}.
\end{equation}
Let
\begin{equation}
E_{\mathrm D,n}
:=
\left\{
\kappa_{\mathrm D}(G_n,\mathcal A_n)
\le
\exp\{-k_n I_{\mathrm d,n}(r)\}
\right\}.
\end{equation}
Then
\begin{equation}
\mathbb P(E_{\mathrm D,n})\to 1.
\end{equation}

Choose any deterministic sequence
\(\omega_{\mathrm s,n}\to\infty\) such that
\begin{equation}
\omega_{\mathrm s,n}=o(\log n).
\end{equation}
By Theorem~\ref{thm:rw-spectral-anti-concentration},
conditionally on \(G_n,\mathcal A_n\),
\begin{equation}
\begin{aligned}
&\mathbb P
\Bigl(
\widetilde\kappa_{\mathrm S\mid\mathrm D}
\le
\exp
\bigl\{
-m_n I_{\mathrm s}^{\mathrm{gw}}(\eta_n)
+
\omega_{\mathrm s,n}
\bigr\}
\,\Bigm|\,
G_n,\mathcal A_n
\Bigr) \\
&\quad\ge
1-e^{-\omega_{\mathrm s,n}}.
\end{aligned}
\end{equation}
Therefore the spectral event
\begin{equation}
E_{\mathrm S,n}
:=
\left\{
\widetilde\kappa_{\mathrm S\mid\mathrm D}
\le
\exp
\bigl\{
-m_n I_{\mathrm s}^{\mathrm{gw}}(\eta_n)
+
\omega_{\mathrm s,n}
\bigr\}
\right\}
\end{equation}
satisfies
\begin{equation}
\mathbb P(E_{\mathrm S,n})
\ge
1-e^{-\omega_{\mathrm s,n}}
\to 1.
\end{equation}
Hence
\begin{equation}
\mathbb P(E_{\mathrm D,n}\cap E_{\mathrm S,n})\to 1.
\end{equation}

On \(E_{\mathrm D,n}\cap E_{\mathrm S,n}\), the exact hybrid
collision factorization gives
\begin{equation}
\widetilde\kappa_{\mathrm H}
=
\kappa_{\mathrm D}\,
\widetilde\kappa_{\mathrm S\mid\mathrm D}.
\end{equation}
Thus
\begin{equation}
\widetilde\kappa_{\mathrm H}
\le
\exp
\bigl\{
-k_n I_{\mathrm d,n}(r)
-m_n I_{\mathrm s}^{\mathrm{gw}}(\eta_n)
+
\omega_{\mathrm s,n}
\bigr\}.
\end{equation}
By the budget condition,
\begin{equation}
k_n I_{\mathrm d,n}(r)
+
m_n I_{\mathrm s}^{\mathrm{gw}}(\eta_n)
\ge
(1+\varepsilon)\log n.
\end{equation}
Since \(\omega_{\mathrm s,n}=o(\log n)\), for all sufficiently
large \(n\),
\begin{equation}
\widetilde\kappa_{\mathrm H}
\le
n^{-1-\varepsilon/2}.
\end{equation}
Consequently,
\begin{equation}
(n-1)\widetilde\kappa_{\mathrm H}
\le
n\widetilde\kappa_{\mathrm H}
\le
n^{-\varepsilon/2}
\to 0
\end{equation}
on an event whose probability tends to one.

Applying the deterministic collision-achievability criterion to
the Gaussian-wave hybrid observation map gives
\begin{equation}
\operatorname{Err}^*
\bigl(
\widetilde F_{G_n,\mathcal A_n}^{(m_n,\eta_n)}
\bigr)
\le
(n-1)\widetilde\kappa_{\mathrm H}.
\end{equation}
Therefore,
\begin{equation}
\operatorname{Err}^*
\bigl(
\widetilde F_{G_n,\mathcal A_n}^{(m_n,\eta_n)}
\bigr)
\xrightarrow{\mathbb P}0.
\end{equation}
The proof is complete.
\end{proof}

\subsubsection{Laplacian achievability under an actual spectral collision bound}
\label{app:proof-laplacian-spectral-collision-achievability}

\begin{assumptionR}[Restatement of Assumption~\ref{ass:actual-spectral-collision-bound}]
\label{assR:actual-spectral-collision-bound}
Fix \(r\geq 3\), and let \(G_n\sim\mathcal{G}_{n,r}\).
Let \(\mathcal A_n\subseteq V(G_n)\) be sampled uniformly
without replacement and independently of \(G_n\). Let
\begin{equation}
X_n(v)
:=
n\bigl(
\phi_2(v)^2,\ldots,\phi_{m_n+1}(v)^2
\bigr)
\in
\mathbb{R}_{\geq 0}^{m_n}.
\label{eq:app-actual-laplacian-energy}
\end{equation}
On the event
\begin{equation}
\sum_{t\in\mathcal{T}_{G_n,\mathcal{A}_n}}
|B_t|(|B_t|-1)>0,
\label{eq:app-positive-distance-collision-mass}
\end{equation}
define
\begin{equation}
\begin{aligned}
\pi_{\mathrm D,n}(u,v)
:=
\frac{
\mathbf{1}
\bigl\{
u\neq v,\,
D_{\mathcal A_n}(u)=D_{\mathcal A_n}(v)
\bigr\}
}{
\sum_{t\in\mathcal{T}_{G_n,\mathcal{A}_n}}
|B_t|(|B_t|-1)
}.
\end{aligned}
\label{eq:app-distance-collision-pair-measure}
\end{equation}
If the denominator is zero, all
\(\pi_{\mathrm D,n}\)-expectations below are set to zero.

Let
\begin{equation}
H_{\eta_n}(x,y)
:=
\mathbf{1}
\bigl\{
Q_{\eta_n}(x)=Q_{\eta_n}(y)
\bigr\}.
\label{eq:app-hard-spectral-collision}
\end{equation}
Let \(H^+_{\eta_n,\alpha_n}\) be the expanded-cell majorant
from Appendix~\ref{app:expanded-cell-majorant}, satisfying
\begin{equation}
0
\leq
H^+_{\eta_n,\alpha_n}(x,y)
\leq
1,
\quad
H_{\eta_n}(x,y)
\leq
H^+_{\eta_n,\alpha_n}(x,y).
\label{eq:app-expanded-majorant-dominates}
\end{equation}
Set
\begin{equation}
\alpha_n
:=
\eta_n(\log n)^{-2}.
\label{eq:app-smoothing-scale}
\end{equation}
Define
\begin{equation}
\Gamma_n^+
:=
\mathbb{E}_{\pi_{\mathrm D,n}}
\Bigl[
H^+_{\eta_n,\alpha_n}
\bigl(
X_n(U),X_n(V)
\bigr)
\Bigr].
\label{eq:app-smoothed-spectral-rate}
\end{equation}

Assume
\begin{equation}
\begin{aligned}
m_n
&=
O(\log n),
&
\eta_n
&\in
[n^{-c_0},e^{-1}),
&
\alpha_n
&=
\eta_n(\log n)^{-2},
\end{aligned}
\label{eq:app-actual-spectral-bound-regime}
\end{equation}
where \(c_0>0\) is fixed. Suppose that there exist a constant
\(C_{\mathrm{sc},r}>0\), depending only on \(r\), and a
deterministic sequence \(\xi_n=o(\log n)\), such that, with
probability tending to one,
\begin{equation}
\Gamma_n^+
\leq
\exp
\Bigl\{
-m_n I_{\mathrm{sc},r}(\eta_n)
+
\xi_n
\Bigr\},
\label{eq:app-actual-spectral-collision-bound}
\end{equation}
where
\begin{equation}
I_{\mathrm{sc},r}(\eta)
:=
\log
\frac{1}{
C_{\mathrm{sc},r}\eta\log(e/\eta)
}.
\label{eq:app-spectral-collision-information-scale}
\end{equation}
\end{assumptionR}

\begin{lemmaR}[Actual Laplacian spectral collision decay]
\label{lemR:actual-laplacian-spectral-collision-decay}
Under Assumption~\ref{assR:actual-spectral-collision-bound},
with probability tending to one,
\begin{equation}
\kappa_{\mathrm S\mid\mathrm D}^{\mathrm{Lap}}
\leq
\exp
\Bigl\{
-m_n I_{\mathrm{sc},r}(\eta_n)
+
\xi_n
\Bigr\}.
\label{eq:app-lap-spectral-collision-decay}
\end{equation}
\end{lemmaR}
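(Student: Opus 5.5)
The plan is to show that the hard conditional collision rate \(\kappa_{\mathrm S\mid\mathrm D}^{\mathrm{Lap}}\) is pointwise dominated by the smoothed rate \(\Gamma_n^+\). Once that holds, the lemma follows directly from Assumption~\ref{assR:actual-spectral-collision-bound}, on the same high-probability event. No probabilistic argument beyond the assumption itself is needed; the comparison is deterministic, for every realization of \(G_n\) and \(\mathcal A_n\).

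\textbf{Step 1 (degenerate case).} On the event \(\sum_t|B_t|(|B_t|-1)=0\), the paper's convention gives \(\kappa_{\mathrm S\mid\mathrm D}^{\mathrm{Lap}}=0\). The right-hand side of \eqref{eq:app-lap-spectral-collision-decay} is an exponential and hence strictly positive, so the bound holds trivially there.

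\textbf{Step 2 (rewrite the hard rate).} On the complementary event, compare \eqref{eq:conditional-spectral-collision} with the definition \eqref{eq:app-distance-collision-pair-measure} of \(\pi_{\mathrm D,n}\). The numerator of \(\kappa_{\mathrm S\mid\mathrm D}^{\mathrm{Lap}}\) sums the indicator \(\mathbf 1\{Q_{\eta_n}(X_n(u))=Q_{\eta_n}(X_n(v))\}=H_{\eta_n}(X_n(u),X_n(v))\) over distance-colliding ordered pairs. The denominator is exactly the normalizer of \(\pi_{\mathrm D,n}\). Hence
\[
\kappa_{\mathrm S\mid\mathrm D}^{\mathrm{Lap}}
=\mathbb E_{\pi_{\mathrm D,n}}\bigl[H_{\eta_n}(X_n(U),X_n(V))\bigr].
\]

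\textbf{Step 3 (majorize).} Take \(q=Q_{\eta_n}(x)=Q_{\eta_n}(y)\) in the definition of \(H^+_{\eta_n,\alpha_n}\). Both \(x_i\) and \(y_i\) then lie in \([q_i\eta_n,(q_i+1)\eta_n)\), which is contained in the expanded interval, so pointwise \(H_{\eta_n}\le H^+_{\eta_n,\alpha_n}\) by \eqref{eq:app-expanded-majorant-dominates}. Since \(\pi_{\mathrm D,n}\) is a nonnegative measure, integrating against it gives \(\kappa_{\mathrm S\mid\mathrm D}^{\mathrm{Lap}}\le\Gamma_n^+\).

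\textbf{Step 4 (conclude).} On the event of probability tending to one supplied by \eqref{eq:app-actual-spectral-collision-bound}, we have \(\Gamma_n^+\le\exp\{-m_nI_{\mathrm{sc},r}(\eta_n)+\xi_n\}\), and Step 3 transfers this bound to \(\kappa_{\mathrm S\mid\mathrm D}^{\mathrm{Lap}}\). The degenerate case of Step 1 is covered on every event.

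I do not expect a genuine obstacle. The only point requiring care is Step 2: checking that the ordered-pair conventions agree (distinct \(u\neq v\), both orderings counted) and that the zero conventions match in the degenerate case.
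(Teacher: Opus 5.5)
Your proposal is correct and matches the paper's proof step for step. Like the paper, you dispose of the case with no distance-collision pairs via the zero convention, write $\kappa^{\mathrm{Lap}}_{\mathrm S\mid\mathrm D}$ as the $\pi_{\mathrm D,n}$-expectation of $H_{\eta_n}$, dominate it by $\Gamma_n^+$ using $H_{\eta_n}\le H^+_{\eta_n,\alpha_n}$, and then apply the assumption on its high-probability event. You also check explicitly the majorant inequality (taking $q=Q_{\eta_n}(x)=Q_{\eta_n}(y)$) and the positivity of the right-hand side in the degenerate case, which the paper leaves implicit.
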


\begin{proof}
If
\begin{equation}
\sum_{t\in\mathcal{T}_{G_n,\mathcal{A}_n}}
|B_t|(|B_t|-1)=0,
\label{eq:app-no-distance-collision-pairs}
\end{equation}
then \(\kappa_{\mathrm D}=0\), and the hybrid collision
probability is zero. The claim is immediate under the stated
convention.

We therefore work on the event where distance-collision
pairs exist. For the actual Laplacian-energy coordinates,
\begin{equation}
\begin{aligned}
\kappa_{\mathrm S\mid\mathrm D}^{\mathrm{Lap}}
&=
\mathbb{E}_{\pi_{\mathrm D,n}}
\Bigl[
H_{\eta_n}
\bigl(
X_n(U),X_n(V)
\bigr)
\Bigr].
\end{aligned}
\label{eq:app-lap-hard-collision-rate}
\end{equation}
Since
\(H_{\eta_n}\leq H^+_{\eta_n,\alpha_n}\),
we have
\begin{equation}
\begin{aligned}
\kappa_{\mathrm S\mid\mathrm D}^{\mathrm{Lap}}
&\leq
\mathbb{E}_{\pi_{\mathrm D,n}}
\Bigl[
H^+_{\eta_n,\alpha_n}
\bigl(
X_n(U),X_n(V)
\bigr)
\Bigr] \\
&=
\Gamma_n^+.
\end{aligned}
\label{eq:app-lap-smoothed-control}
\end{equation}
The result follows from
Assumption~\ref{assR:actual-spectral-collision-bound}.
\end{proof}

\begin{corollaryR}[Restatement of Corollary~\ref{cor:lap-hybrid-achievability}]
Assume the hypotheses of
Corollary~\ref{cor:distance-collision-decay-two-source}. Let
\begin{equation}
F_n^{\mathrm{Lap}}(v)
:=
\bigl(
D_{\mathcal A_n}(v),
Q_{\eta_n}(X_n(v))
\bigr).
\label{eq:app-lap-hybrid-map}
\end{equation}
Suppose Assumption~\ref{assR:actual-spectral-collision-bound}
holds and
\begin{equation}
C_{\mathrm{sc},r}\eta_n\log(e/\eta_n)<1
\label{eq:app-lap-positive-spectral-information}
\end{equation}
for all sufficiently large \(n\). If there exists
\(\varepsilon>0\) such that
\begin{equation}
\begin{aligned}
k_n I_{\mathrm d,n}(r)
+
m_n I_{\mathrm{sc},r}(\eta_n)
\geq
(1+\varepsilon)\log n
\end{aligned}
\label{eq:app-lap-information-budget}
\end{equation}
for all sufficiently large \(n\), then
\begin{equation}
\operatorname{Err}^*
\bigl(
F_n^{\mathrm{Lap}}
\bigr)
\xrightarrow{\mathbb P}0.
\label{eq:app-lap-achievability-conclusion}
\end{equation}
\end{corollaryR}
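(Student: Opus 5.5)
The argument follows the proof of Theorem~\ref{thm:closed-gw-hybrid-achievability}, with the Gaussian-wave anti-concentration step replaced by the actual-coordinate collision decay of the preceding Lemma. There are three steps: obtain two high-probability collision bounds, combine them through the exact factorization of Lemma~\ref{lem:basic-collision-lemma}, and conclude with Theorem~\ref{thm:deterministic-collision-achievability}.

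\textbf{Step 1 (distance collisions).} The hypotheses of Corollary~\ref{cor:distance-collision-decay-two-source} are assumed: $r\ge3$, uniform anchors independent of $G_n$, $k_n/\log n\to\infty$ and $k_n=o((\log n)^2)$. That corollary therefore gives an event
\[
E_{\mathrm D,n}:=\bigl\{\kappa_{\mathrm D}(G_n,\mathcal A_n)\le \exp\{-k_nI_{\mathrm d,n}(r)\}\bigr\},
\qquad \mathbb P(E_{\mathrm D,n})\to1 .
\]

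\textbf{Step 2 (spectral collisions).} Under Assumption~\ref{assR:actual-spectral-collision-bound}, the Lemma on actual Laplacian spectral collision decay gives an event $E_{\mathrm S,n}$ of probability tending to one. On it,
\[
\kappa^{\mathrm{Lap}}_{\mathrm S\mid\mathrm D}\le\exp\{-m_nI_{\mathrm{sc},r}(\eta_n)+\xi_n\}.
\]
Here $\kappa^{\mathrm{Lap}}_{\mathrm S\mid\mathrm D}$ is the conditional spectral collision rate of $F_n^{\mathrm{Lap}}$. This holds because the hard indicator $H_{\eta_n}$ is dominated by $H^+_{\eta_n,\alpha_n}$.

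\textbf{Step 3 (combine and conclude).} A union bound gives $\mathbb P(E_{\mathrm D,n}\cap E_{\mathrm S,n})\to1$. On this intersection, the deterministic factorization $\kappa_{\mathrm H}=\kappa_{\mathrm D}\kappa_{\mathrm S\mid\mathrm D}$ applies. It uses no independence and holds even when $\kappa_{\mathrm D}=0$, by the zero convention. Together with the budget condition~\eqref{eq:app-lap-information-budget}, it yields
\[
n\kappa_{\mathrm H}\le n\exp\{-(1+\varepsilon)\log n+\xi_n\}=n^{-\varepsilon}e^{\xi_n}.
\]
Since $\xi_n=o(\log n)$, for large $n$ we have $\xi_n\le(\varepsilon/2)\log n$, so $n\kappa_{\mathrm H}\le n^{-\varepsilon/2}\to0$ on an event of probability tending to one. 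Hence $n\kappa_{\mathrm D}\kappa_{\mathrm S\mid\mathrm D}\xrightarrow{\mathbb P}0$, and the in-probability part of Theorem~\ref{thm:deterministic-collision-achievability} gives $\operatorname{Err}^*(F_n^{\mathrm{Lap}})\xrightarrow{\mathbb P}0$.

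Condition~\eqref{eq:app-lap-positive-spectral-information} ensures $I_{\mathrm{sc},r}(\eta_n)>0$. It is not strictly needed for the inequality chain, but it keeps the spectral term a genuine information gain.

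\textbf{Main obstacle.} The only delicate point is bookkeeping, not analysis. The slack $\xi_n$ from the Assumption is deterministic and $o(\log n)$, so it is absorbed into $\varepsilon\log n$ exactly as $\omega_{\mathrm s,n}$ was in the Gaussian-wave proof. The other point is that the two high-probability events live on the same joint probability space of $(G_n,\mathcal A_n)$, which the union bound handles. Everything substantive is delegated to the Assumption.
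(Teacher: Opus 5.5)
Your proposal is correct and follows the paper's proof essentially step for step: distance-collision decay from Corollary~\ref{cor:distance-collision-decay-two-source}, spectral decay from the actual-Laplacian collision lemma via $H_{\eta_n}\le H^+_{\eta_n,\alpha_n}$, the exact factorization $\kappa_{\mathrm H}=\kappa_{\mathrm D}\kappa_{\mathrm S\mid\mathrm D}$, absorption of $\xi_n=o(\log n)$ into the $\varepsilon\log n$ slack, and finally the in-probability collision-achievability criterion. Your side remark is also accurate: the condition $C_{\mathrm{sc},r}\eta_n\log(e/\eta_n)<1$ is never used in the inequality chain, and the paper's proof does not invoke it either.
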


\begin{proof}
By Corollary~\ref{cor:distance-collision-decay-two-source},
with probability tending to one,
\begin{equation}
\kappa_{\mathrm D}
\leq
\exp
\bigl\{
-k_n I_{\mathrm d,n}(r)
\bigr\}.
\label{eq:app-lap-proof-distance-decay}
\end{equation}
By Lemma~\ref{lemR:actual-laplacian-spectral-collision-decay},
with probability tending to one,
\begin{equation}
\kappa_{\mathrm S\mid\mathrm D}^{\mathrm{Lap}}
\leq
\exp
\Bigl\{
-m_n I_{\mathrm{sc},r}(\eta_n)
+
\xi_n
\Bigr\}.
\label{eq:app-lap-proof-spectral-decay}
\end{equation}
The exact hybrid collision factorization gives
\begin{equation}
\kappa_{\mathrm H}^{\mathrm{Lap}}
=
\kappa_{\mathrm D}
\kappa_{\mathrm S\mid\mathrm D}^{\mathrm{Lap}}.
\label{eq:app-lap-proof-factorization}
\end{equation}
Therefore, with probability tending to one,
\begin{equation}
\begin{aligned}
\kappa_{\mathrm H}^{\mathrm{Lap}}
&\leq
\exp
\Bigl\{
-k_n I_{\mathrm d,n}(r)
-m_n I_{\mathrm{sc},r}(\eta_n)
+
\xi_n
\Bigr\}.
\end{aligned}
\label{eq:app-lap-proof-hybrid-collision}
\end{equation}
Using
\eqref{eq:app-lap-information-budget} and
\(\xi_n=o(\log n)\), we obtain
\begin{equation}
\begin{aligned}
(n-1)
\kappa_{\mathrm H}^{\mathrm{Lap}}
&\leq
\exp
\Bigl\{
\log n
-(1+\varepsilon)\log n
+
o(\log n)
\Bigr\} \\
&=
\exp
\Bigl\{
-\varepsilon\log n
+
o(\log n)
\Bigr\}.
\end{aligned}
\label{eq:app-lap-proof-vanishing-product}
\end{equation}
Thus
\begin{equation}
(n-1)
\kappa_{\mathrm H}^{\mathrm{Lap}}
\xrightarrow{\mathbb P}0.
\label{eq:app-lap-proof-product-to-zero}
\end{equation}
The deterministic collision-achievability criterion then gives
\begin{equation}
\operatorname{Err}^*
\bigl(
F_n^{\mathrm{Lap}}
\bigr)
\xrightarrow{\mathbb P}0.
\label{eq:app-lap-proof-final}
\end{equation}
The proof is complete.
\end{proof}

\subsection{Proofs for Two-Sided Localization Bounds}
\label{app:two-sided}

This appendix proves the graph-dependent design principle stated in
Section~\ref{subsec:two-sided-localization-criterion}. The impossibility
side follows from the simplex-refined normalized image-size converse,
while the achievability side follows from the hybrid collision
factorization.

\subsubsection{Proof of Corollary~\ref{cor:graph-dependent-design-principle}}
\label{app:proof-graph-dependent-design-principle}

\begin{proof}
We first prove the impossibility regime. By
Theorem~\ref{thm:normalized-refined-converse},
\begin{equation}
\begin{aligned}
&\operatorname{Err}^*
\bigl(F_{G_n,\mathcal A_n}^{(m_n,\eta_n)}\bigr)\\
\ge{}&
1
-
\frac{D(G_n,\mathcal A_n)}{n}
\binom{\lfloor L_n/\eta_n\rfloor+m_n}{m_n}
-
\frac{m_n}{L_n}.
\end{aligned}
\end{equation}
Using the definition
\begin{equation}
\begin{aligned}
&\mathcal B_{\mathrm{conv}}^{\Delta}
(G_n,\mathcal A_n,m_n,\eta_n,L_n)\\
:={}&
\log D(G_n,\mathcal A_n)
+
\log\binom{\lfloor L_n/\eta_n\rfloor+m_n}{m_n},
\end{aligned}
\end{equation}
the preceding bound becomes
\begin{equation}
\begin{aligned}
&\operatorname{Err}^*
\bigl(F_{G_n,\mathcal A_n}^{(m_n,\eta_n)}\bigr)\\
\ge{}&
1
-
\exp\!\left\{
\mathcal B_{\mathrm{conv}}^{\Delta}
(G_n,\mathcal A_n,m_n,\eta_n,L_n)
-
\log n
\right\}
-
\frac{m_n}{L_n}.
\end{aligned}
\end{equation}
If
\begin{equation}
\mathcal B_{\mathrm{conv}}^{\Delta}
(G_n,\mathcal A_n,m_n,\eta_n,L_n)
\le
(1-\varepsilon)\log n
\end{equation}
and \(m_n/L_n\to0\), then
\begin{equation}
\begin{aligned}
&\exp\!\left\{
\mathcal B_{\mathrm{conv}}^{\Delta}
(G_n,\mathcal A_n,m_n,\eta_n,L_n)
-
\log n
\right\}\\
&\le
n^{-\varepsilon}
\to0.
\end{aligned}
\end{equation}
Therefore,
\begin{equation}
\operatorname{Err}^*
\bigl(F_{G_n,\mathcal A_n}^{(m_n,\eta_n)}\bigr)
\to1.
\end{equation}

We next prove the achievability regime. By
Lemma~\ref{lem:basic-collision-lemma},
\begin{equation}
\operatorname{Err}^*
\bigl(F_{G_n,\mathcal A_n}^{(m_n,\eta_n)}\bigr)
\le
(n-1)
\kappa_{\mathrm D}
\kappa_{\mathrm{S}\mid\mathrm D}.
\end{equation}
Using
\begin{equation}
\mathcal I_{\mathrm H}(G_n,\mathcal A_n,m_n,\eta_n)
=
-\log\kappa_{\mathrm D}
-
\log\kappa_{\mathrm{S}\mid\mathrm D},
\end{equation}
we obtain
\begin{equation}
\operatorname{Err}^*
\bigl(F_{G_n,\mathcal A_n}^{(m_n,\eta_n)}\bigr)
\le
(n-1)
\exp\!\left\{
-\mathcal I_{\mathrm H}(G_n,\mathcal A_n,m_n,\eta_n)
\right\}.
\end{equation}
If
\begin{equation}
\mathcal I_{\mathrm H}(G_n,\mathcal A_n,m_n,\eta_n)
\ge
(1+\varepsilon)\log n,
\end{equation}
then
\begin{equation}
\begin{aligned}
&(n-1)
\exp\!\left\{
-\mathcal I_{\mathrm H}(G_n,\mathcal A_n,m_n,\eta_n)
\right\}\\
&\le
(n-1)n^{-1-\varepsilon}\\
&\le
n^{-\varepsilon}
\to0.
\end{aligned}
\end{equation}
Thus,
\begin{equation}
\operatorname{Err}^*
\bigl(F_{G_n,\mathcal A_n}^{(m_n,\eta_n)}\bigr)
\to0.
\end{equation}
The proof is complete.
\end{proof}

\section{Detailed Experimental Protocol}
\label{app:exp-details}

\subsection{Synthetic diagnostics}

The synthetic localization experiment uses five graph families: random
\(r\)-regular graphs with \(r\in\{3,6,10\}\), Erd\H{o}s-R\'enyi graphs
with average degree \(6\), stochastic block models with \(4\) blocks and
\((c_{\mathrm{in}},c_{\mathrm{out}})=(14,2)\), two-dimensional grids,
and barbell graphs.  For non-grid graphs,
\(n\in\{500,1000,2000\}\); for grids,
\(n\in\{529,1024,2025\}\).  Each configuration uses \(10\) graph trials.

We evaluate
\begin{equation}
k\in\{1,2,3,4,5,6,8,12,16,24,32\},
\end{equation}
and
\begin{equation}
\begin{aligned}
m&\in\{1,2,3,4,6,8,12,16\},\\
\eta&\in\{0.8,0.6,0.5,0.4,0.3,0.2,0.1\}.
\end{aligned}
\end{equation}
The spectral variants are
\begin{equation}
\texttt{lap\_energy},\quad
\texttt{gw\_energy},\quad
\texttt{lap\_signed},\quad
\texttt{gw\_signed}.
\end{equation}
For eigenvector \(\phi_j\),
\begin{equation}
S^{\mathrm{energy}}_j(v)=n\phi_j(v)^2,
\quad
S^{\mathrm{signed}}_j(v)=\sqrt n\,\phi_j(v).
\end{equation}

The collision-transfer experiment uses the same graph families with
\begin{equation}
\begin{aligned}
n&\in\{500,1000,2000\},\\
k&\in\{2,4,8,16,32\},\\
m&\in\{2,4,8,16\},\\
\eta&\in\{0.7,0.5,0.3,0.1\}.
\end{aligned}
\end{equation}
Each setting uses \(5\) graph trials.  Diagnostics based on the
expanded-cell majorant sample \(20{,}000\) ordered pairs from
distance-collision buckets and use
\(\alpha/\eta\in\{0.05,0.10\}\).

\subsection{Universal Dependencies structural task probes}

Each UD sentence is converted into an unlabeled undirected dependency-tree
skeleton.  Tokens are nodes, each head-dependent relation gives one
undirected edge, and multiword tokens and empty nodes are skipped.  More
specifically, only integer-token CoNLL-U rows are retained; non-integer
IDs corresponding to multiword tokens or empty nodes are ignored, and
edges whose heads are not retained are skipped.  We keep sentences with a
valid dependency root whose resulting skeleton is connected and satisfies
\(6\le n\le80\).  The requested treebanks are
\begin{equation}
\begin{gathered}
\text{UD\_English-EWT},\quad
\text{UD\_Chinese-GSD},\quad
\text{UD\_Spanish-GSD},\\
\text{UD\_French-GSD},\quad
\text{UD\_German-GSD}.
\end{gathered}
\end{equation}
The UD root directory is discovered automatically.  The implementation
first searches for a local \texttt{ud-treebanks-v2.18} directory and then
falls back to nearby directories containing \texttt{UD\_*} treebank
folders.  The resolved path is recorded in the experiment log.  If a
requested treebank is unavailable, a predefined alias is used.  The aliases
are Spanish-AnCora for Spanish-GSD, German-HDT for German-GSD,
French-Sequoia and French-ParisStories for French-GSD, English-GUM for
English-EWT, and Chinese-GSDSimp for Chinese-GSD.  Alias replacements and
unavailable treebanks are written to the log.

Splits are inferred from the CoNLL-U filenames.  Files containing
\texttt{-train.}, \texttt{-dev.}, or \texttt{-test.} are assigned to the
corresponding official split; other files are assigned to an unknown split.
Each treebank uses at most \(3000\) training sentences and all available
development and test sentences after filtering.  If a split exceeds its
limit, sentences are sampled without replacement using a deterministic
treebank-specific seed.  Unknown-split sentences, when present, are capped
at \(20\).  If an official train/dev/test partition is not available after
loading, the probe code falls back to a deterministic random
\(70\%/15\%/\)remaining train/dev/test split.

For each sentence graph, we precompute all-pairs shortest-path distances,
normalized-Laplacian energy coordinates \(n\phi_j(v)^2\), signed
coordinates \(\sqrt n\,\phi_j(v)\), and basic graph statistics.  The
normalized Laplacian is
\(\mathcal L=I-D^{-1/2}AD^{-1/2}\), and the nontrivial eigenvectors
\(j=2,\ldots,m+1\) are used.  For signed coordinates, eigenvector signs
are fixed by making the largest-magnitude entry positive.  Energy
coordinates are sign-invariant.  Lexical forms, dependency labels, edge
directions, POS tags, and token attributes are not used by the supervised
probes.  Dependency labels and POS tags are stored only for
encoding-level ambiguity diagnostics.

\subsection{UD probe protocol}

The UD experiments use only PE codes as input.  The surface-position
probe predicts normalized token position
\begin{equation}
y_{\mathrm{pos}}(v)=\frac{i(v)}{n-1}.
\end{equation}
The dependency-depth probe predicts normalized distance to the dependency
root,
\begin{equation}
y_{\mathrm{depth}}(v)
=
\frac{d_G(v,r)}{\max_{u\in V}d_G(u,r)}.
\end{equation}
The pairwise dependency-distance probe predicts a clipped bucket of
\(d_G(u,v)\) from
\begin{equation}
\bigl[
F(u)+F(v),\,
|F(u)-F(v)|,\,
F(u)\odot F(v)
\bigr].
\end{equation}

The probe input is the code representation produced by the PE module.
Distance features are divided by the sentence-graph diameter.  Quantized
spectral bins are cast to floating-point features.  NoPE is represented by
a constant feature.  For HybridEnergy, the input concatenates normalized
anchor-distance features and quantized Laplacian-energy bins.  Features
are standardized using the training split mean and standard deviation for
each treebank and configuration, and the same transformation is applied to
development and test examples.

The node-level probes use a three-layer MLP with hidden dimension \(128\),
dropout \(0.10\), AdamW optimizer, learning rate \(10^{-3}\), weight
decay \(10^{-4}\), batch size \(1024\), and SmoothL1 loss.  The regression
head uses a sigmoid output for normalized targets.  Training uses early
stopping with patience \(8\) and at most \(50\) epochs.  The model seeds
are \(\{42,43,44\}\).  The pairwise-distance probe uses the same training
budget on the pair representation above.

The original surface-position grid is
\begin{equation}
\begin{aligned}
k&\in\{1,2,4,8\},\\
m&\in\{2,4,8,16\},\\
\eta&=0.25.
\end{aligned}
\end{equation}
For Distance and HybridEnergy, random anchors are sampled without
replacement within each sentence graph.  Three random anchor trials are
used.  The anchor RNG is deterministic and depends on the global seed, the
graph identifier, the encoding, the anchor strategy, and the configuration
indices.  NoPE uses a single constant-code configuration, and
SpectralEnergy uses no anchors.

For the depth and pairwise-distance probes, no new hyperparameter search
is performed.  We freeze the HybridEnergy configurations selected by
development-set NMAE in the surface-position protocol.  The selected
\((k,m)\) values for English, Chinese, Spanish, French, and German are
\begin{equation}
(1,16),\quad (4,4),\quad (8,4),\quad (4,4),\quad (1,16),
\end{equation}
respectively, with \(\eta=0.25\).

We report NMAE, RMSE, ordinal pair accuracy, mean Kendall-\(\tau\), and
4-bin/8-bin accuracy for the surface-position probe; NMAE for the depth
probe; and macro-F1 for the pairwise-distance probe.  Ordinal pair
accuracy is computed within each sentence by comparing the predicted and
true order of token pairs; prediction ties are counted as half-correct.
Mean Kendall-\(\tau\) is averaged over sentence graphs, with undefined
values set to zero.  The 4-bin and 8-bin accuracies use equal-width bins
over the normalized interval.

\subsection{Encoding-level UD diagnostics}

For each encoding, we compute
\begin{equation}
\operatorname{Err}^{*}(F)
=
1-\frac{|\operatorname{Im}(F)|}{n},
\end{equation}
as well as
\(\kappa(F)\),
\(I(F)/\log n\),
\(I_D/\log n\),
\(I_H/\log n\),
\(I_{S|D}/\log n\), and
\(B_{\mathrm{conv}}/\log n\).
The collision rates are computed from code-fiber counts over ordered
vertex pairs sampled without replacement.  The distance-conditioned
quantity is computed as \(\kappa_{S|D}=\kappa_H/\kappa_D\) when
\(\kappa_D>0\).  When \(\kappa_D=0\), we use the theoretical convention
\(\kappa_{S|D}=0\) and \(I_{S|D}=+\infty\), while capped information
ratios use the numerical floor \(\epsilon=10^{-300}\).  Information
ratios are capped at \(5\) for aggregation.

The converse diagnostic uses \(L=4.0\) and the implemented conservative
box-budget version of \(B_{\mathrm{conv}}\).  In addition to localization
diagnostics, we compute code-fiber task ambiguity for stored categorical
labels, including exact surface position, 4-bin and 8-bin surface
position, root-depth buckets, UPOS, and dependency relation labels.  These
diagnostics measure encoding-level localization and label ambiguity and
are reported separately from the supervised UD structural-probe metrics.

\subsection{UD ablations}

We run three design-variable ablation groups, with the quantization group
split into a diagnostics-only block and a smaller probe block:
\begin{itemize}
    \item \textbf{Anchor strategy:}
    random, root, and farthest anchors with
    \(k\in\{1,4,8\}\), \(m\in\{8,16\}\), and \(\eta=0.25\).
    This block uses English-EWT, Chinese-GSD, and German-GSD.
    Random anchors use three trials.  Root anchors use the dependency root
    as the first anchor; for \(k>1\), remaining anchors are sampled
    randomly.  Farthest anchors start from the dependency root and then
    greedily add the node maximizing the minimum distance to the current
    anchor set.  Since farthest \(k=1\) duplicates root \(k=1\), it is
    skipped.
    \item \textbf{Spectral type:}
    energy versus signed coordinates with
    \(k\in\{4,8\}\), \(m\in\{8,16\}\), and \(\eta=0.25\).
    This block compares SpectralEnergy and HybridEnergy against
    SpectralSigned and HybridSigned on English-EWT, Chinese-GSD, and
    German-GSD.
    \item \textbf{Quantization:}
    \(\eta\in\{0.125,0.25,0.5,1.0\}\) with
    \(k\in\{4,8\}\) and \(m\in\{8,16\}\).
    The quantization diagnostics are computed on all five treebanks,
    while the reduced quantization probe is run on English-EWT and
    Chinese-GSD.
\end{itemize}
The reduced ablation probe uses model seed \(42\), at most \(30\) epochs,
and patience \(6\).  All ablation blocks use the same PE construction,
feature standardization, optimizer, and MLP architecture as the main
surface-position probe unless otherwise stated.

\subsection{PE-row derangement control}

The PE-row derangement control computes PE features on the true dependency
tree and then deranges feature rows within each sentence while keeping
targets fixed.  This preserves sentence length, feature dimension, and PE
marginals, but breaks token-level PE-target alignment.  The control
wrapper intercepts PE-computer outputs and applies the same within-sentence
row index to row-indexed feature arrays for a given graph, configuration,
control mode, and shuffle seed.  Square all-pairs matrices are not row
shuffled.  The default control mode is derangement, which enforces no
fixed points when \(n>1\); if random attempts fail, a cyclic shift is used.
The implementation also supports ordinary row permutation and row
resampling with replacement excluding each token's original row, but these
are not the default paper setting.

The default grid is
\begin{equation}
\begin{aligned}
k&\in\{4,8\},\\
m&\in\{8,16\},\\
\eta&=0.25.
\end{aligned}
\end{equation}
It compares NoPE, Distance, SpectralEnergy, and HybridEnergy.  Probe
results are run on English-EWT, Chinese-GSD, and German-GSD; diagnostics
are computed using the loaded five-treebank graph store.  The default
seeds are
\begin{equation}
\begin{aligned}
\text{model seeds}&=\{42,43,44\},\\
\text{derangement seeds}&=\{20260611,20260612,20260613\}.
\end{aligned}
\end{equation}
The null-control runs use the same MLP architecture and optimizer as the
main probe, with a shorter training budget of at most \(20\) epochs and
patience \(5\).  Shuffle audits record fixed-point counts, unique-row
fractions, and duplicate selected rows for reproducibility.  The same
row-derangement definition is used wherever a derangement control is
reported for the surface-position, dependency-depth, and pairwise-distance
probe features.

For more details on datasets and hyperparameter configurations, please refer to
\url{https://anonymous.4open.science/r/Converse-and-Collision-Based-Achievability-0DDC}.

\textbf{Use of AI assistants.}
AI assistants were used to support code development, including debugging
and boilerplate generation. All experimental results, analyses, and
claims were produced and verified by the authors.

\section{Supplementary Empirical Diagnostics}
\label{app:empirical_results}

\subsection{Additional diagnostics for mathematical localization and transfer}
\label{app:math_localization_transfer}
\label{app:exp1_math_localization}
\label{app:exp2_collision_transfer}
\label{app:exp3_design_diagnostics}

This appendix provides supplementary diagnostics for the mathematical
localization, collision-decomposition, Gaussian-wave transfer, and unified
design-map experiments in
Section~\ref{subsec:math_localization_transfer}.  The results verify the
implementation invariants used in the theory, report the full sensitivity and
transfer diagnostics omitted from the main text, and provide additional
visualizations for the graph-dependent design map.

\paragraph{Implementation invariants}
For the localization experiment, the \(566{,}790\) evaluated observation maps
and \(2{,}267{,}160\) conservative box-budget rows exactly match their
expected counts. The maximum numerical discrepancy in
\begin{equation}
\operatorname{Err}^{*}(F)
=
1-\frac{|\operatorname{Im}(F)|}{n}
\end{equation}
is \(2.220\times10^{-16}\), while the collision-to-error inequality has no
observed violation.  The maximum factorization error in
\begin{equation}
\kappa_H
=
\kappa_D\kappa_{S|D}
\end{equation}
is \(2.776\times10^{-17}\). Hybrid encoding is never worse than its distance-only component, and no
violation of either the collision-to-error inequality or the conservative
box-budget converse corollary is observed.

The empirical conditional collision estimator before applying the above
convention is non-estimable for \(19.39\%\) of the evaluated hybrid
configurations. These cases occur only when \(\kappa_{\mathrm D}=0\),
meaning that the distance code has already separated all nodes and no
residual distance-collision measure remains. Following the convention above,
we set
$\kappa_{\mathrm{S}\mid\mathrm D}=0,
\kappa_{\mathrm H}=0,
I_{\mathrm H}=+\infty$
for these distance-saturated configurations. They therefore represent
successful saturation rather than numerical failure. In finite diagnostic
plots, such infinite-information rows are either marked as saturated or
excluded from finite discrepancy summaries.

For the collision-decomposition experiment, the raw experiment contains
\(33{,}600\) evaluated configurations, matching the expected count exactly.
The expanded-cell diagnostic contains \(13{,}752\) rows, below the
theoretical upper bound of \(18{,}900\) because configurations with
\(\kappa_D=0\) have no residual distance-collision measure and are already
separated by the distance code. Table~\ref{tab:app_exp2_sanity} reports the
corresponding implementation checks. All deterministic identities hold up to
machine precision.

\begin{table}[!t]
\centering
\caption{
Sanity checks for the collision-decomposition and transfer diagnostics.
}
\label{tab:app_exp2_sanity}

\footnotesize
\setlength{\tabcolsep}{0pt}
\renewcommand{\arraystretch}{1.08}

\begin{tabular*}{\columnwidth}{@{\extracolsep{\fill}}lrl@{}}
\toprule
Diagnostic & Value & Status \\
\midrule
Raw rows / expected rows
& \(33{,}600 / 33{,}600\) & complete \\
Smoothed rows / upper bound
& \(13{,}752 / 18{,}900\) & valid \\
\(\max |\kappa_H-\kappa_D\kappa_{S|D}|\)
& \(9.918\times 10^{-17}\) & numerical zero \\
\(\max |I_H-I_D-I_{S|D}|\)
& \(3.553\times 10^{-15}\) & numerical zero \\
Collision-to-error violations
& \(0\) & none \\
Box-budget converse violations
& \(0\) & none \\
Rows with \(H_{\eta,\alpha}^{+}<H_\eta\)
& \(0\) & none \\
\bottomrule
\end{tabular*}

\vspace{-0.5em}
\end{table}

\paragraph{Localization sensitivity}
Figure~\ref{fig:app_exp1_k_sensitivity} provides the complete sensitivity
analysis for the number of anchors.  Increasing the number of anchors
monotonically improves the distance code, while larger spectral dimension and
finer quantization generally reduce the residual within-bucket collision
rate.  The qualitative conclusion is also stable across the evaluated truncation
choices used to compute the conservative box-budget diagnostic
\(B_{\rm conv}^{\Box}\). In the low-complexity
configuration search, every non-barbell setting that reaches
\(\operatorname{Err}^{*}(F)\le 0.1\) has \(I_H/\log n>1\); no evaluated
barbell configuration reaches this threshold.

\begin{figure*}[htbp]
    \centering
    \includegraphics[width=0.75\textwidth]{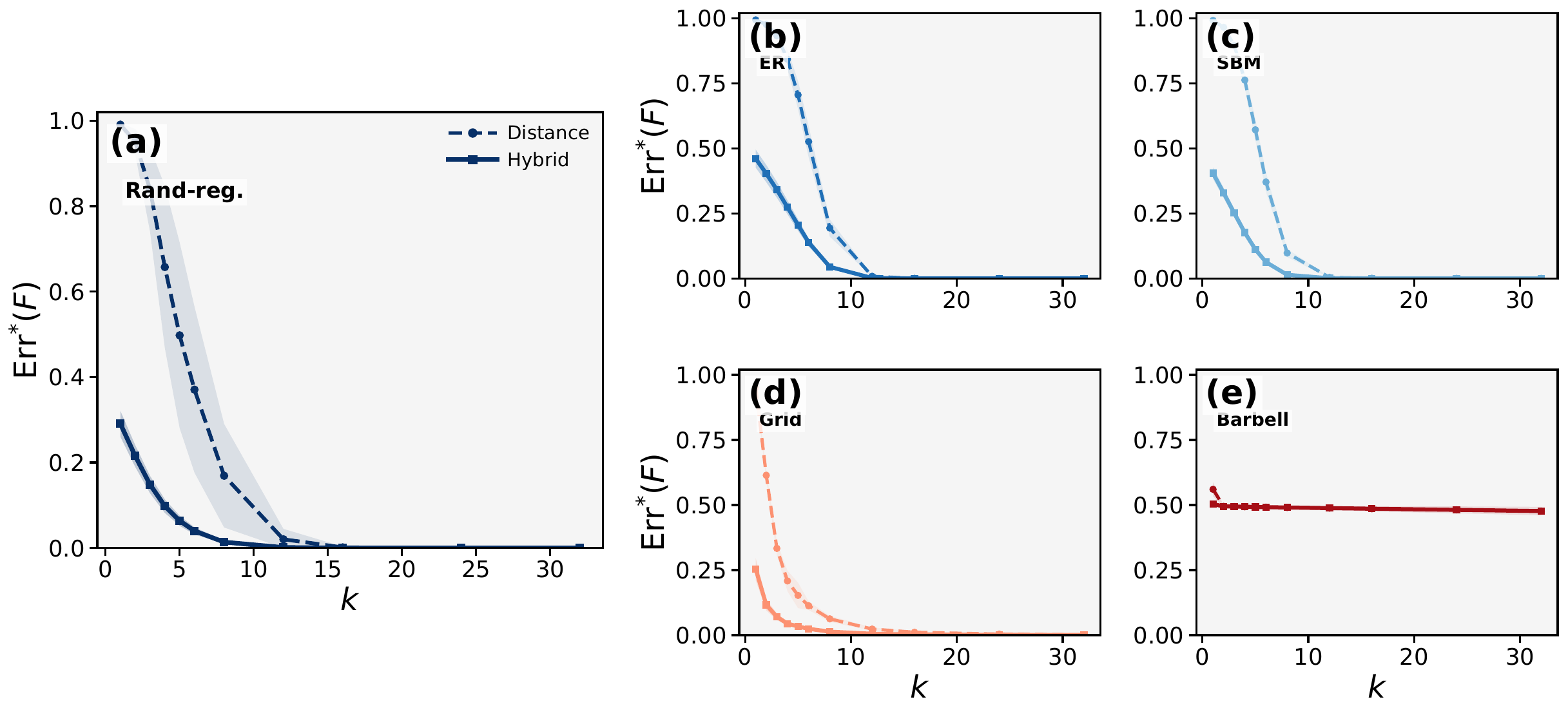}
    \caption{
    Sensitivity to the number of anchors \(k\).
    The random-regular setting is enlarged because it is the graph family
    directly aligned with the main theoretical achievability result.  The four
    smaller panels show ER, SBM, grid, and barbell graphs.  Solid curves denote
    hybrid Laplacian-energy encodings, while dashed curves denote distance-only
    encodings.  Shaded regions are \(95\%\) confidence intervals.
    }
    \label{fig:app_exp1_k_sensitivity}
\end{figure*}

\paragraph{Information regimes and Gaussian-wave transfer}
Table~\ref{tab:app_exp2_regime} gives a regime-binned localization summary.
The four regimes are defined by
\begin{equation}
I_H/\log n
\in
(-\infty,0.75),\quad
[0.75,1),\quad
[1,1.25),\quad
[1.25,\infty).
\end{equation}
The mean localization error decreases monotonically across these regimes,
from \(0.515\) in the low-information regime to \(0.003\) in the
high-information regime.  This supports the use of \(I_H/\log n\) as a
finite-sample diagnostic for the achievability side of the theory.

\begin{table}[htbp]
\centering
\scriptsize
\setlength{\tabcolsep}{4pt}
\caption{
Regime-binned localization summary over all spectral variants.
}
\label{tab:app_exp2_regime}
\begin{tabular}{lcccc}
\toprule
Regime
& Rows
& Mean Err.
& Median Err.
& Mean \(I_H/\log n\) \\
\midrule
Low \((<0.75)\)
& \(3{,}173\)
& \(0.515\)
& \(0.493\)
& \(0.386\) \\
Near-below \([0.75,1)\)
& \(1{,}427\)
& \(0.355\)
& \(0.342\)
& \(0.879\) \\
Near-above \([1,1.25)\)
& \(1{,}677\)
& \(0.139\)
& \(0.129\)
& \(1.127\) \\
High \((\ge 1.25)\)
& \(27{,}323\)
& \(0.003\)
& \(0.000\)
& \(1.919\) \\
\bottomrule
\end{tabular}
\end{table}

Table~\ref{tab:app_exp2_hard_transfer} reports the hard surrogate discrepancy
\begin{equation}
\Delta_{\rm tr}
=
\frac{
|I_{S|D}^{\rm Lap}-I_{S|D}^{\rm GW}|
}{\log n}.
\end{equation}
Random regular graphs have the smallest gap, confirming that the
Gaussian-wave surrogate is most accurate on expander-like graphs.  The gap is
moderate for SBM graphs, larger for ER and grid graphs, and largest for
barbell graphs.  The barbell case is a useful negative control: its
low-frequency Laplacian eigenvectors are dominated by the global bottleneck
and are therefore poorly approximated by independent Gaussian-wave
coordinates.

\begin{table}[htbp]
\centering
\scriptsize
\setlength{\tabcolsep}{4pt}
\caption{
Hard Gaussian-wave surrogate discrepancy.
Values are averaged over graph sizes, anchors, spectral dimensions,
quantization levels, and trials.
}
\label{tab:app_exp2_hard_transfer}
\begin{tabular}{lcc}
\toprule
Graph family
& Energy gap
& Signed gap \\
\midrule
Random regular
& \(0.050\)
& \(0.035\) \\
SBM
& \(0.106\)
& \(0.068\) \\
ER
& \(0.359\)
& \(0.268\) \\
Grid
& \(0.448\)
& \(0.696\) \\
Barbell
& \(1.240\)
& \(1.520\) \\
\bottomrule
\end{tabular}
\end{table}

We also evaluate the smoothed surrogate discrepancy for the expanded-cell majorant
\(H_{\eta,\alpha}^{+}\),
\begin{equation}
\Delta_{\rm tr}^{+}(\alpha)
=
\frac{
|I_{S|D}^{+,{\rm Lap}}(\alpha)
-
I_{S|D}^{+,{\rm GW}}(\alpha)|
}{\log n}.
\end{equation}
Table~\ref{tab:app_exp2_smooth_transfer} reports the family-level averages,
while Figure~\ref{fig:app_exp2_expanded_cell} shows the full distribution
over graph sizes, anchors, spectral dimensions, quantization levels, and
trials.  The same qualitative ordering persists under smoothing: random
regular and SBM graphs have small gaps, ER and grid graphs have intermediate
gaps, and barbell graphs have the largest gap.  This is consistent with
Corollary~\ref{cor:lap-hybrid-achievability}: actual Laplacian achievability requires Gaussian-wave anti-concentration
together with a distance-conditioned spectral-collision bound for the actual
Laplacian-energy coordinates.

\begin{table}[htbp]
\centering
\scriptsize
\setlength{\tabcolsep}{4pt}
\caption{
Smoothed Gaussian-wave surrogate discrepancy for energy coordinates.
}
\label{tab:app_exp2_smooth_transfer}
\begin{tabular}{lcc}
\toprule
Graph family
& \(\alpha/\eta=0.05\)
& \(\alpha/\eta=0.10\) \\
\midrule
Random regular
& \(0.086\)
& \(0.082\) \\
SBM
& \(0.141\)
& \(0.144\) \\
ER
& \(0.482\)
& \(0.478\) \\
Grid
& \(0.646\)
& \(0.693\) \\
Barbell
& \(1.602\)
& \(1.565\) \\
\bottomrule
\end{tabular}
\end{table}

\begin{figure}[htbp]
    \centering
    \includegraphics[width=0.98\columnwidth]{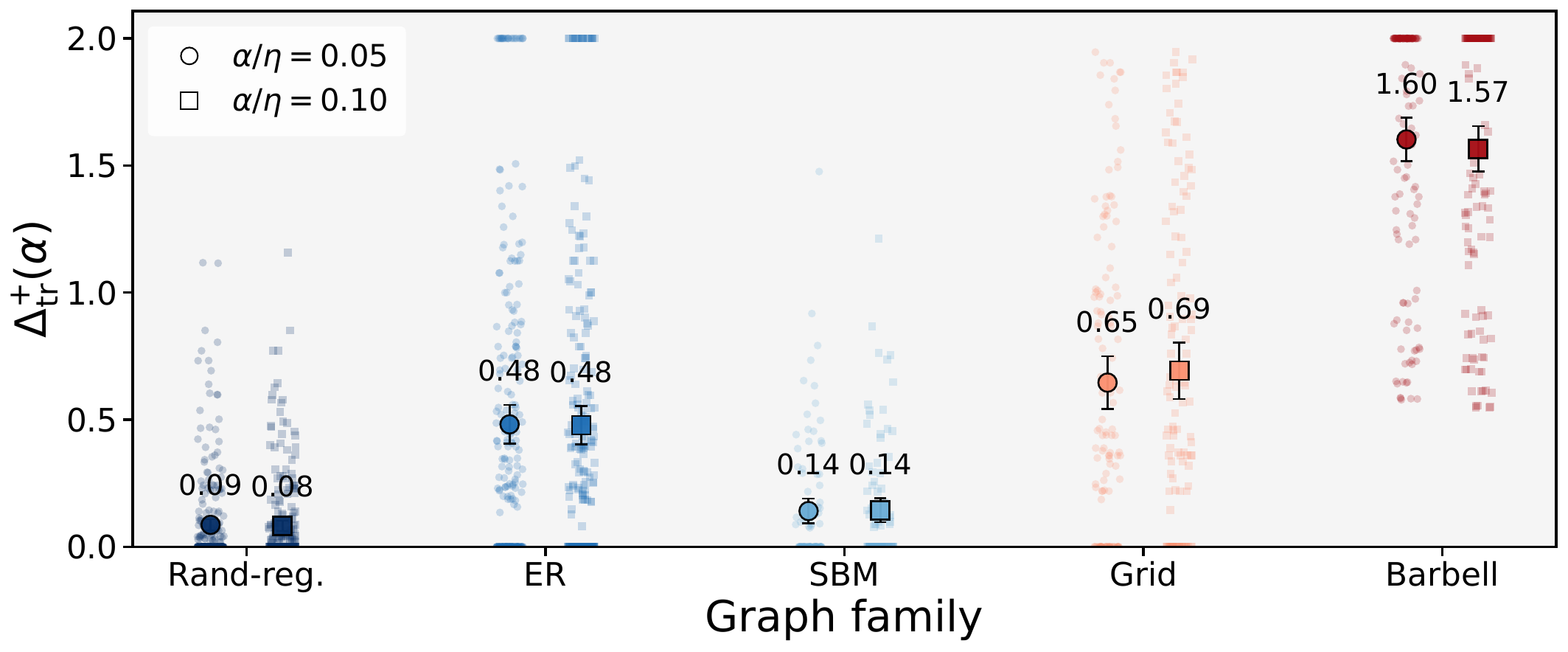}
    \caption{
    Expanded-cell surrogate-to-actual spectral-collision diagnostics for
    energy coordinates. Faint points show individual grouped configurations,
    and large markers with error bars show family-level means with \(95\%\)
    confidence intervals.
    }
    \label{fig:app_exp2_expanded_cell}
\end{figure}

\paragraph{Unified design-map diagnostics}
We further provide the binned design-map diagnostic corresponding to
Table~\ref{tab:exp3_design_diagnostics}.  To avoid overplotting from dense
configuration-level scatter points,
Figure~\ref{fig:app_exp3_design_heatmap_mean_error}
aggregates configurations with similar values of
\(B_{\rm conv}^{\Box}/\log n\) and \(I_H/\log n\).  The horizontal axis is the
normalized conservative box-budget diagnostic, the vertical axis is the
normalized hybrid collision information, and color indicates the mean empirical
localization error \(\operatorname{Err}^{*}(F)\) within each bin.  The dashed
reference lines mark the box-budget failure-side reference
\(B_{\rm conv}^{\Box}/\log n=1\) and the achievability-side threshold
\(I_H/\log n=1\).

\begin{figure}[htbp]
    \centering
    \includegraphics[width=0.98\columnwidth]{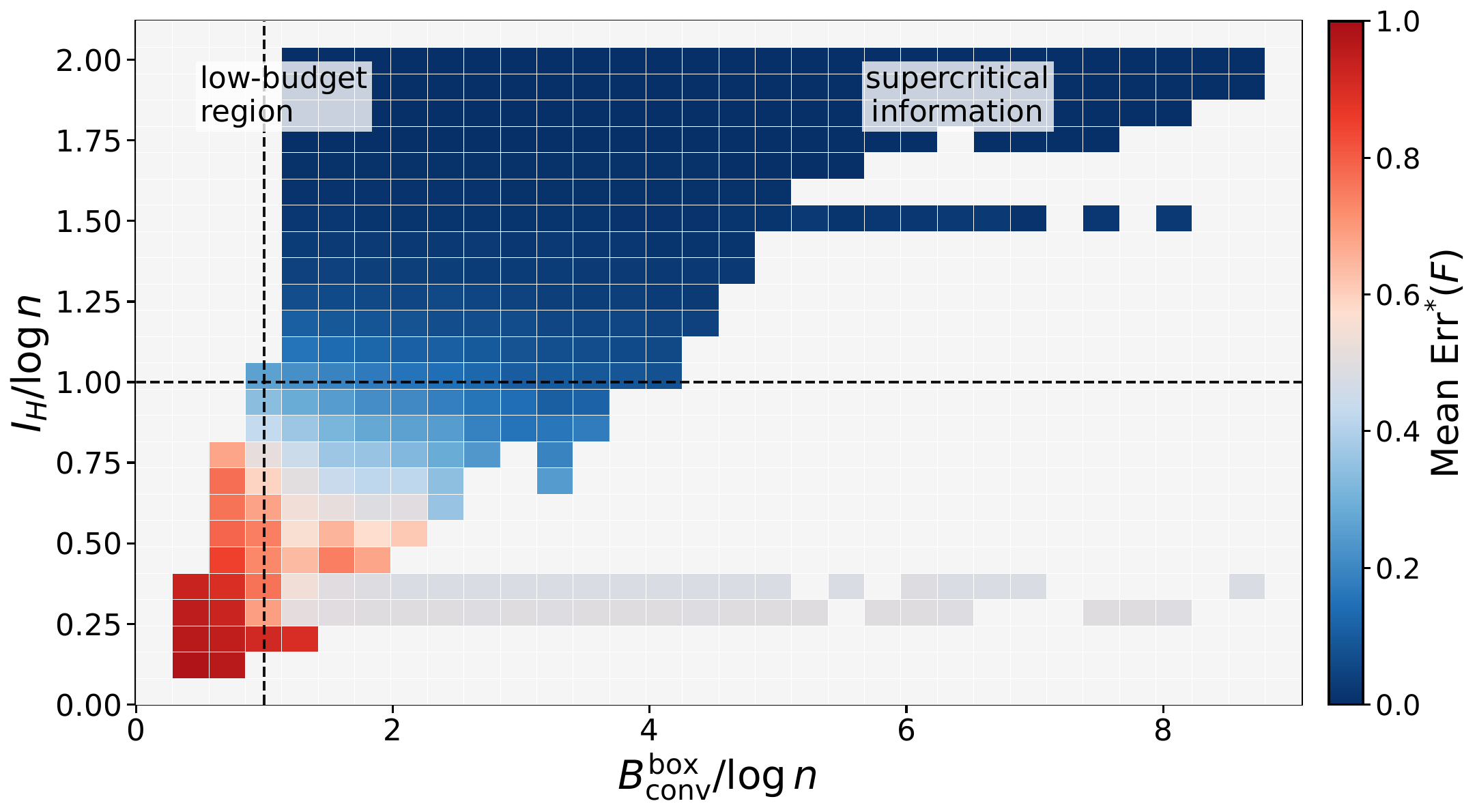}
    \caption{
    Binned unified graph-dependent design map for the Laplacian-energy hybrid
    positional encoding.  Each bin aggregates configurations with similar
    values of \(B_{\rm conv}^{\Box}/\log n\) and \(I_H/\log n\); color indicates
    the mean empirical localization error \(\operatorname{Err}^{*}(F)\) within the
    bin.  The dashed reference lines mark the box-budget reference
    \(B_{\rm conv}^{\Box}/\log n=1\) and the achievability-side threshold
    \(I_H/\log n=1\).
    }
    \label{fig:app_exp3_design_heatmap_mean_error}
\end{figure}

Table~\ref{tab:app_exp3_family_summary} reports the graph-setting breakdown
for the Laplacian-energy hybrid design diagnostics.  The random regular and
grid families are well organized by the information diagnostic and achieve
low average localization error.  The barbell graph is the most difficult
case: its mean information ratio is only \(0.309\), and no configuration
reaches \(\operatorname{Err}^{*}(F)\le0.1\).  This confirms that localization
is governed by graph-dependent collision geometry, not only by the raw choices
of \(k\), \(m\), and \(\eta\).

\begin{table}[htbp]
\centering
\scriptsize
\setlength{\tabcolsep}{3pt}
\caption{
Graph-setting summary for the Laplacian-energy hybrid design diagnostics.
}
\label{tab:app_exp3_family_summary}
\begin{tabular}{lrrrr}
\toprule
Graph setting & Mean err. & Median err. & Mean \(I_H/\log n\) & Succ. \\
\midrule
Random regular, \(r=3\)
& \(0.045\) & \(0.000\) & \(1.816\) & \(0.902\) \\
Grid
& \(0.051\) & \(0.004\) & \(1.651\) & \(0.865\) \\
Random regular, \(r=6\)
& \(0.085\) & \(0.000\) & \(1.688\) & \(0.818\) \\
Random regular, \(r=10\)
& \(0.107\) & \(0.000\) & \(1.625\) & \(0.776\) \\
SBM
& \(0.123\) & \(0.002\) & \(1.522\) & \(0.741\) \\
ER, average degree \(6\)
& \(0.170\) & \(0.004\) & \(1.443\) & \(0.675\) \\
Barbell
& \(0.490\) & \(0.491\) & \(0.309\) & \(0.000\) \\
\bottomrule
\end{tabular}
\end{table}

\subsection{Additional details for the UD structural task probes}
\label{app:ud_localization_details}
\label{app:ud_localization_diagnostics}

\paragraph{Treebanks and graph construction}
The UD experiments use English-EWT, Chinese-GSD, Spanish-GSD,
French-GSD, and German-GSD.  Sentences with length outside \([6,80]\)
are removed.  For each treebank, we use at most \(3000\) training
sentences and evaluate on the full official development and test splits.

Each sentence is converted into an unlabeled undirected dependency-tree
skeleton.  Tokens are nodes, dependency arcs are treated as undirected
edges, and lexical forms, dependency labels, edge directions, and token
attributes are excluded.  This construction matches the encoding-level
setting of the paper: the probes test what syntactic geometry can be
recovered from positional codes alone, rather than from lexical or
label-specific information.

\paragraph{Surface-position selection protocol}
The original surface-position probe is used to select configurations.
The grid is
\begin{equation}
\begin{aligned}
k &\in \{1,2,4,8\},\\
m &\in \{2,4,8,16\},\\
\eta &= 0.25,
\end{aligned}
\end{equation}
with three random-anchor trials and three probe seeds.

For each treebank, the HybridEnergy configuration with the best
development-set NMAE on the surface-position probe is selected.  The
selected \((k,m)\) values for English, Chinese, Spanish, French, and
German are
\begin{equation}
(1,16),\quad (4,4),\quad (8,4),\quad (4,4),\quad (1,16),
\end{equation}
respectively, with \(\eta=0.25\).

The selected HybridEnergy surface-position probe improves over NoPE by
\(0.0055\) NMAE on average, corresponding to a \(2.08\%\) relative
reduction, and improves Kendall-\(\tau\) by \(0.118\).  Its average NMAE
improvements over Distance-only and SpectralEnergy are \(1.27\%\) and
\(0.33\%\), respectively.  These gains are modest, which is expected
because undirected dependency-tree structure only weakly determines
linear word order.

\paragraph{Confirmatory depth and pairwise-distance probes}
The dependency-depth and pairwise dependency-distance probes are added
as more direct structural probes.  No new hyperparameter search is
performed for these probes.  Instead, the HybridEnergy configurations
selected by the surface-position development protocol are frozen and
reused.  This makes the depth and pairwise-distance evaluation
confirmatory rather than another tuning round.

The depth probe predicts normalized distance to the dependency root, and
therefore measures node-level syntactic geometry.  The pairwise-distance
probe predicts a clipped bucket of tree distance from a pairwise
combination of two positional codes.  These two probes are closer to the
graph geometry used to construct the positional encodings, and therefore
serve as task-relevant structural tests on real dependency-tree graphs.

\paragraph{Configuration-level localization diagnostics}
Figure~\ref{fig:app_exp4_ud_summary} gives the configuration-level
diagnostic view supporting the UD surface-position results.  Higher
\(I_H/\log n\) regimes produce larger average gains over the NoPE
baseline, and the selected best-performing configurations occupy the
low-collision, higher-rank-recovery region.  This supports the main
paper's interpretation that the collision information is informative not
only for exact node localization, but also for structural probe
performance on real dependency trees.

\begin{figure}[h]
    \centering
    \includegraphics[width=0.95\columnwidth]{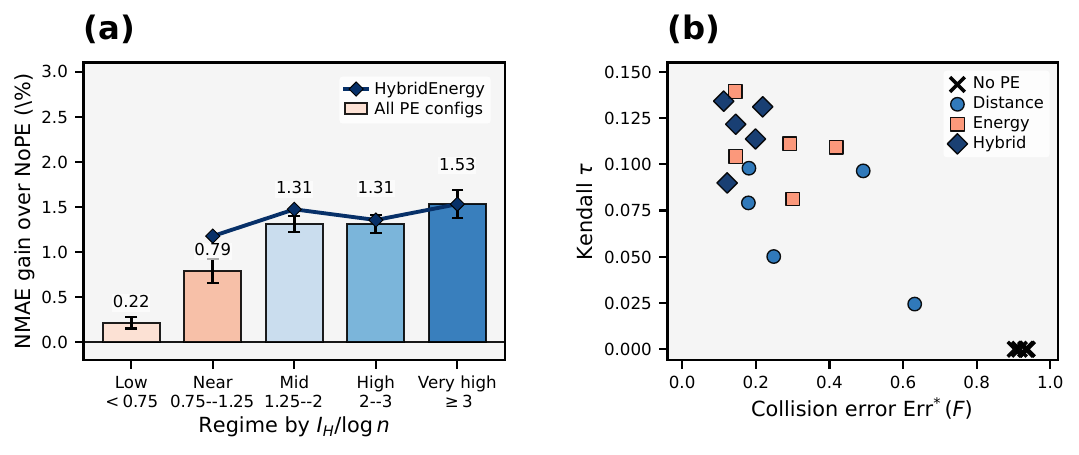}
    \caption{
    UD dependency-tree localization diagnostics.
    (a) NMAE gain over the NoPE baseline after grouping nontrivial PE
    configurations by \(I_H/\log n\). Bars summarize all PE
    configurations, and the line shows the HybridEnergy subset.
    (b) Best-configuration diagnostic map for each treebank and encoding
    family, comparing exact collision error \(\operatorname{Err}^{*}(F)\)
    with Kendall rank recovery.
    }
    \label{fig:app_exp4_ud_summary}
\end{figure}

\paragraph{PE-row derangement control}
Table~\ref{tab:app_ud_derangement_control_compact} reports the compact
derangement summary used to compute the alignment margins in the main
text.  For each sentence, positional codes are first computed from the
original dependency-tree skeleton.  The rows of the node-feature matrix
are then deranged within the same sentence while targets are kept
unchanged.  This preserves sentence length, feature dimension, PE
marginal statistics, and code budget, but removes the alignment between
each token and its own graph positional code.

For NMAE-based probes, \(\Delta_{\rm real}\) is the reduction over NoPE
in the real condition.  For the pairwise-distance probe,
\(\Delta_{\rm real}\) is the macro-F1 gain over NoPE.  The alignment
margin \(\Delta_{\rm align}\) subtracts the corresponding
deranged-condition gain.  HybridEnergy has the larger
alignment-specific margin on all three probes, indicating that its
advantages are not explained solely by sentence-level PE marginal
statistics.

\begin{table}[h]
\centering
\scriptsize
\setlength{\tabcolsep}{1.8pt}
\renewcommand{\arraystretch}{1.05}
\caption{
Compact PE-row derangement control on UD.  Position uses the original
surface-position control; depth and pairwise distance use the frozen
structural-probe protocol.  S and H denote SpectralEnergy and
HybridEnergy.  For position and depth, the metric is NMAE, lower better;
for pairwise distance, the metric is macro-F1, higher better.  NoPE is
the real-condition baseline used to compute the gains; its null value is
identical or numerically negligible after derangement.
}
\label{tab:app_ud_derangement_control_compact}
\resizebox{\columnwidth}{!}{%
\begin{tabular}{llcccccc}
\toprule
Probe
& Metric
& Enc.
& NoPE
& Real
& Null
& \(\Delta_{\rm real}\)
& \(\Delta_{\rm align}\)
\\
\midrule
Position
& NMAE \(\downarrow\)
& S
& 0.26386
& 0.26021
& 0.26034
& 0.00364
& 0.00011
\\
Position
& NMAE \(\downarrow\)
& H
& 0.26386
& 0.26053
& 0.26151
& 0.00332
& \textbf{0.00097}
\\
\midrule
Depth
& NMAE \(\downarrow\)
& S
& 0.2252
& 0.1910
& 0.2257
& 0.0342
& 0.0347
\\
Depth
& NMAE \(\downarrow\)
& H
& 0.2252
& \textbf{0.1748}
& 0.2257
& \textbf{0.0504}
& \textbf{0.0509}
\\
\midrule
Pairwise dist.
& Macro-F1 \(\uparrow\)
& S
& 0.0769
& 0.5748
& 0.1829
& 0.4979
& 0.3919
\\
Pairwise dist.
& Macro-F1 \(\uparrow\)
& H
& 0.0769
& \textbf{0.7911}
& \textbf{0.2029}
& \textbf{0.7142}
& \textbf{0.5881}
\\
\bottomrule
\end{tabular}%
}
\end{table}

\paragraph{Additional notes on UD design ablations}
The averaged UD ablation table is reported in the main text because it
is visually compact and directly supports the experimental narrative.
Here we clarify its interpretation.

The anchor block isolates the effect of structural reference-point
selection.  Root and farthest anchors both improve over random anchors,
and farthest anchors give the strongest surface-position recovery and
the lowest localization error.  This is consistent with the role of
anchor profiles in reducing distance-bucket ambiguity.

The signed-coordinate block separates within-tree collision control from
cross-sentence probe stability.  Signed Laplacian coordinates reduce
collisions and increase \(I_H/\log n\), but they do not improve probe
accuracy.  This suggests that exact within-tree separability is not
sufficient for cross-sentence structural prediction: the coordinate
system must also be stable across different dependency trees.

The quantization block verifies the expected diagnostic trend.  Coarser
\(\eta\) reduces \(I_H/\log n\) and mildly increases
\(\operatorname{Err}^{*}(F)\).  Probe performance remains stable for
\(\eta\leq 0.5\) and weakens mildly at \(\eta=1.0\), indicating that the
UD probes are not overly sensitive to small changes in quantization once
the code remains in a sufficiently resolved regime.

\end{document}